\newcommand*{\addFileDependency}[1]{
  \typeout{(#1)}
  \@addtofilelist{#1}
  \IfFileExists{#1}{}{\typeout{No file #1.}}
}
\def\BibTeX{{\rm B\kern-.05em{\sc i\kern-.025em b}\kern-.08em
    T\kern-.1667em\lower.7ex\hbox{E}\kern-.125emX}}
\begin{document}
\title{\bf Efficient Randomized Subspace Embeddings for Distributed Optimization under a Communication Budget}
\author{\Large Rajarshi Saha, Mert Pilanci, and Andrea J. Goldsmith
\thanks{This work was partially supported by the Office of Naval Research under grant ONR N00014-18-1-2191, National Science
Foundation under grants DMS-2134248 and ECCS-2037304, Army Research Office Early Career Award W911NF-21-1-0242, Intel Research, Huawei Technologies, Facebook Research, Adobe Research, and Stanford SystemX Alliance.}
\thanks{Rajarshi Saha and Mert Pilanci are with the Department of Electrical Engineering, Stanford University, Stanford, CA 94305 USA.}
\thanks{Andrea J. Goldsmith is with the Department of Electrical and Computer Engineering, Princeton University, Princeton, NJ 08544 USA.}
\thanks{This paper has supplementary downloadable material available at \href{http://ieeexplore.ieee.org}{http://ieeexplore.ieee.org}, provided by the author. The material includes a document containing further analyses, proofs, discussions and additional simulations. Contact  Rajarshi Saha (email: \href{rajsaha@stanford.edu}{rajsaha@stanford.edu}) for further questions about this work.}}

\maketitle

\begin{abstract}
We study first-order optimization algorithms under the constraint that the descent direction is quantized using a pre-specified budget of $R$-bits per dimension, where $R \in (0 ,\infty)$.
We propose computationally efficient optimization algorithms with convergence rates matching the information-theoretic performance lower bounds for: (i) Smooth and Strongly-Convex objectives with access to an Exact Gradient oracle, as well as (ii) General Convex and Non-Smooth objectives with access to a Noisy Subgradient oracle.
The crux of these algorithms is a polynomial complexity source coding scheme that embeds a vector into a random subspace before quantizing it.
These embeddings are such that with high probability, their projection along any of the canonical directions of the transform space is small.
As a consequence, quantizing these embeddings followed by an inverse transform to the original space yields a source coding method with optimal covering efficiency while utilizing just $R$-bits per dimension.
Our algorithms guarantee optimality for arbitrary values of the bit-budget $R$, which includes both the sub-linear budget regime ($R < 1$), as well as the high-budget regime ($R \geq 1$), while requiring $O\left(n^2\right)$ multiplications, where $n$ is the dimension.
We also propose an efficient relaxation of this coding scheme using Hadamard subspaces that requires a near-linear time, i.e., $O\left(n \log n\right)$ additions.
Furthermore, we show that the utility of our proposed embeddings can be extended to significantly improve the performance of gradient sparsification schemes.
Numerical simulations validate our theoretical claims.
Our implementations are available at \href{https://github.com/rajarshisaha95/DistOptConstrComm}{here}.
\end{abstract}

\begin{IEEEkeywords}
Kashin embeddings, Random orthonormal subspace, Hadamard subspace,  Distributed optimization, Bit-Budget constraint, Gradient quantization, Error feedback.
\end{IEEEkeywords}

\section{Introduction}
\label{sec:introduction}

Distributed optimization algorithms that leverage edge computation of remote devices have proved promising for training large-scale machine learning models \cite{bekkerman_2011, kairouz2019advances}.
To solve an optimization problem
$\minimize_{\xv \in \Xcal \subseteq \Real^n}f(\xv)$ in the parameter-server framework \cite{li_parameter_server_2014}, the \textit{server} maintains an iterate $\xv_t$ at time $t$, which is an estimate of the minimizer $\xv_f^* = \argminimize_{\xv \in \Xcal} f(\xv)$.
We consider a setting in which at every iteration, the worker(s) receives the current iterate $\xv_t$ from the server, computes $\nabla f(\xv_t)$, and communicates information about the computed gradient back to the server, allowing it to take a descent step in that direction.
This process is repeated till the sequence of iterates $\{\xv_t\}_{t = 1, 2, \ldots}$ converges.

Communication overhead is invariably the primary bottleneck of such distributed systems.
When distributed optimization algorithms are implemented over severely communication constrained environments, for instance in wireless networks \cite{mmamiri_2020, saha_jsac_2021}, the bandwidth of the channel (over which the workers send information to the server) is an expensive resource, and is often a pre-specified constraint beyond the algorithm designers' control.
In this work, we resolve the question: \textit{How to design optimal quantization schemes when the worker(s) is constrained to communicate its (sub) gradient information to the server with a strict budget of $R$ bits per dimension?}
We consider the following settings:
\begin{enumerate}[\rm (i)]
    \item When the objective function $f(\xv)$ is $L$-smooth and $\mu$-strongly convex.
    \item When $f(\xv)$ is a general convex function that is not necessarily smooth.
\end{enumerate}

We first consider a \textit{single-worker single-server} setting and extend it later to multiple workers.
Let $\Pi_R$ denote the set of all \textit{optimization protocols} with access to an exact first-order oracle for which the information exchange between the worker and the server at every iteration is limited to $R$-bits per dimension.
For setting (i), it is possible to achieve a linear convergence of the iterates $\{\xv_t\}$ to $\xv_f^*$.
With this in mind, to quantify the performance of any protocol $\pi \in \Pi_R$, consider the asymptotic worst-case linear convergence rate:
\begin{equation}
    \label{eq:worst_case_performance_smooth_strongly_convex}
    C(\pi) = \limsup_{T \to \infty} \sup_{f \in \Fcal_{\mu,L,D}} \left(\frac{\norm{\xv_T(\pi) - \xv_f^*}_2}{D}\right)^{\frac{1}{T}}.
\end{equation}
Here, $\Fcal_{\mu,L,D}$ is the class of $L$-smooth and $\mu$-strongly convex functions which satisfy $\norm{\xv_f^*}_2 \leq D$ ($D \geq 0$), and $\xv_T(\pi)$ denotes the output of the protocol $\pi \in \Pi_R$ after $T$ iterations.
Note that \eqref{eq:worst_case_performance_smooth_strongly_convex} implies $\pi$ achieves a convergence of the iterates $\{\xv_t\}$ to $\xv_f^*$ with a guarantee $\norm{\xv_T(\pi) - \xv_f^*}_2 \lesssim C(\pi)^TD$ for $C(\pi) \leq 1$.
This implies that algorithms in the class $\Pi_R$ require $O\Paren{\log\Paren{\frac{D}{\epsilon}} / \log\Paren{C(\pi)^{-1}}}$ iterations to achieve a suboptimality gap of $\epsilon$.
Since a smaller value of $C(\pi)$ is desirable for faster convergence, we can characterize the set of protocols $\Pi_R$ according to the following \textit{minimax rate}:
\begin{align}
    \label{eq:minimax_defn_smooth}
    C(R) &\triangleq \inf_{\pi \in \Pi_R} C(\pi) \nonumber\\
    &= \inf_{\pi \in \Pi_R} \limsup_{T \to \infty} \sup_{f \in \Fcal_{\mu,L,D}} \left(\frac{\norm{\xv_T(\pi) - \xv_f^*}_2}{D}\right)^{\frac{1}{T}}.
\end{align}
It has been shown \cite{lin2020achieving} that the information-theoretic lower bound on \eqref{eq:minimax_defn_smooth} can be obtained as $C(R) \geq \max\{\sigma, 2^{-R}\}$ where $\sigma = \frac{L -\mu}{L +\mu}$ is the convergence rate in the absence of any bit-budget constraints.
This provides a theoretical limit to the performance of \textbf{any} protocol in $\Pi_R$.
In this work, we propose an algorithm \textbf{\textsc{DGD-DEF}} that achieves this lower bound to within constant factors while requiring only $O(n^3)$ (or $O(n^2)$, if additional information is available) multiplications.
We also propose a relaxed, near-linear time version that requires only $O\Paren{n \log n}$ additions, saving significantly on the computation requirement while attaining a performance that is a mild $O\Paren{\sqrt{\log n}}$ factor away from the lower bound.
To the best of our knowledge, \textbf{\textsc{DGD-DEF}} is the first polynomial complexity algorithm whose performance matches the lower bound of $C(R) \geq \max\{\sigma, 2^{-R}\}$.

On the other hand for setting (ii), when $f(\xv)$ is a general convex function (not necessarily smooth) and we have access to a noisy subgradient oracle \cite{mayekar_2020}, it is not possible to achieve a linear convergence.
In this case, to measure the performance of any protocol $\pi$, we consider the \textit{expected suboptimality gap},
\begin{equation}
    \label{eq:worst_case_performance_general_convex}
    \Ecal(\pi) = \sup_{(f, \Ocal)} \hspace{1mm} \mathbb{E}[f(\xv_T(\pi))] - f(\xv^*),
\end{equation}
and study how it scales with the number of iterations $T$.
Here, $\xv_T(\pi)$ is the output of protocol $\pi$, and we consider the worst-case performance over all objectives $f: \Xcal \to \Real$ with compact, convex domain $\Xcal \subseteq \Real^n$ satisfying $\sup_{\xv, \yv \in \Xcal}\norm{\xv-\yv}_2 \leq D$, and stochastic subgradient oracles $\Ocal$ whose outputs are uniformly bounded by some parameter $B$.
We consider algorithms $\pi$ from the class $\Pi_{T,R}$ of all optimization protocols that execute at most $T$ iterations with a communication budget of $R$-bits per dimension per iteration.
This set of protocols $\Pi_{T,R}$ can be characterized according to the following \textit{minimax expected suboptimality gap},
\begin{equation}
    \label{eq:minimax_defn_non_smooth}
    \Ecal(T,R) \triangleq \inf_{\pi \in \Pi_{T,R}} \Ecal(\pi) = \inf_{\pi \in \Pi_{T,R}} \sup_{(f, \Ocal)} \hspace{1mm} \mathbb{E}[f(\xv_T(\pi))] - f(\xv^*).
\end{equation}
A lower bound on \eqref{eq:minimax_defn_non_smooth} can be obtained \cite{mayekar_2020} as $\Ecal(T,R) \geq \frac{cDB}{\sqrt{T\cdot\min\{1,R\}}}$.
We propose \textbf{\textsc{DQ-PSGD}} and its relaxed version, that respectively, attain this lower bound to within constant and mild logarithmic factors while requiring $O(n^2) $ multiplications and $O(n \log n)$ additions.

A minimax optimal protocol (or algorithm) $\pi^*$ in either setting requires designing an optimal source coding scheme that quantizes the gradient information efficiently.
A \textbf{source coding scheme} is a pair of mappings $(\Esf,\Dsf)$, where the \textit{encoding} $\Esf: \Real^n \to \{0,1\}^{nR}$ is done by the worker to quantize the information it wants to send to the server.
The \textit{decoding} map $\Dsf:\{0,1\}^{nR} \to \Real^n$ recovers an estimate of the input to the encoder and is implemented at the server.
In this work, we present \textbf{Democratic Source Coding (DSC)}, an efficient \textbf{polynomial-time} \textbf{fixed-length} vector quantization scheme, which when used with suitably designed first-order optimization algorithms, can achieve the respective lower bounds on the minimax rates \eqref{eq:minimax_defn_smooth} and \eqref{eq:minimax_defn_non_smooth} to \textbf{within constant factors}, establishing the minimax optimality of the algorithms.

An alternative way to look at the bit-budget constrained optimization problem is to consider the minimum threshold budget $R_{thr}$ required to attain the convergence rate achievable in the absence of any budget constraint.
For setting (i), the lower bound of $C(R) \geq \max\{\sigma, 2^{-R}\}$ implies that we cannot hope to achieve the convergence rate of unquantized setting if $R < \log\Paren{\frac{1}{\sigma}}$.
Whereas naive quantizers \cite{alistarh_NeurIPS_2017_qsgd} would require $R_{thr} \gtrsim \log\Paren{\frac{\sqrt{n}}{\sigma}}$ bits to attain the unquantized convergence rate, \textbf{\textsc{DGD-DEF}} gets rid of the dimension dependence, and requires just $R_{thr} = O\Paren{\log\Paren{\frac{1}{\sigma}}}$ bits.
Moreover, it achieves this while entailing only a polynomial complexity of $O\Paren{n^2}$ as opposed to Roger's quantizer \cite{rogers_1963} used in \cite{lin2020achieving} that demands exponential complexity.
For setting (ii), the lower bound of $\Ecal(T,R) \geq \frac{cDB}{\sqrt{T\cdot \min\{1, R\}}}$ implies that for the sub-linear budget regime, i.e., when $R < 1$, we can expect the suboptimality gap to scale as $\frac{1}{\sqrt{T}}$ as long as we have a constant bit-budget, i.e., $R_{thr} = O(1)$.
\textbf{\textsc{DQ-PSGD}} ensures this while requiring only $R_{thr} = O(1)$ bits, which establishes its optimality, as opposed to $R_{thr} = O\Paren{\sqrt{n}}$ for naive quantizers or $R_{thr} = O\Paren{\log \log n}$ for RATQ \cite{mayekar_2020}.

\subsection{Our Contributions}
\label{subsec:our_contributions}

In this work, we consider algorithms that attain the information-theoretic lower bounds to the minimax performance metrics of bit-budget constrained optimization.
Existing works \cite{lin2020achieving, mayekar_2020} have characterized the precise lower bounds to \eqref{eq:minimax_defn_smooth}, \eqref{eq:minimax_defn_non_smooth}, and we provide optimal algorithms that achieve these minimax lower bounds to within constant factors while requiring $O(n^2)$ computation.
Our contributions are as follows:
\begin{enumerate}[label=(\alph*)]
    \item We first propose \textbf{Democratic Source Coding (DSC)} which use Kashin embeddings \cite{lyubarskii_2010} to compress a vector in $\Real^n$ subject to a constraint of $R$ bits per dimension.
    \textbf{DSC} is a polynomial-time source coding scheme and its error is independent of the dimension $n$; a crucial property for efficiently compressing high-dimensional vectors.
    
    \item For strongly convex smooth objectives, we propose \textbf{\textsc{DGD-DEF}}: \textbf{D}istributed \textbf{G}radient \textbf{D}escent with \textbf{D}emocratically \textbf{E}ncoded \textbf{F}eedback, an algorithm that uses \textbf{DSC} to quantize the feedback-corrected gradients and show that it achieves the lower bound on \eqref{eq:minimax_defn_smooth}.
    
    \item For general convex non-smooth objectives, we propose \textbf{\textsc{DQ-PSGD}}: \textbf{D}emocratically \textbf{Q}uantized \textbf{P}rojected \textbf{S}tochastic sub\textbf{G}radient \textbf{D}escent that achieves the lower bound on \eqref{eq:minimax_defn_non_smooth}.
    
    \item Since even the $O(n^2)$ complexity of \textbf{\textsc{DSC}} can be computationally demanding for large $n$, we further propose a computationally simpler relaxation, referred to as \textbf{\textsc{NDSC}}: \textbf{N}ear \textbf{D}emocratic \textbf{S}ource \textbf{C}oding, which achieves optimality to within a mild logarithmic factor.
    We observe that in simulations, \textbf{\textsc{NDSC}} performs at par with \textbf{\textsc{DSC}}.
    
    \item Finally, in \S \ref{subsec:extension_to_multiple_workers}, we show how our algorithms can be extended to multi-worker setups.
    We also show that \textbf{DSC} or \textbf{NDSC} consistently improve the performance when used in conjunction with other existing compression strategies (\S \ref{sec:numerical_simulations} and Supp. \S \textcolor{blue}{$2$}).
\end{enumerate}

\subsection{Significance and Related work}
\label{subsec:significance_and_related_work}

\textbf{Communication-Constrained Distributed Optimization.}
Much work has been done in recent years to address the communication bottleneck in distributed optimization.
\textit{Variable-length} coding schemes were proposed in \cite{alistarh_NeurIPS_2017_qsgd}.
The bit-requirement of these quantization schemes are optimal in expectation, but their worst-case performance is not.
Our work considers \textit{fixed-length} quantizers for the setting where precision constraints are imposed as a pre-specified bit-budget of $R$-bits that needs to be strictly respected even for worst case inputs.
The problem of distributed optimization under bit-budget constraints is considered in \cite{lin2020achieving, mayekar_2020}.
\cite{lin2020achieving} considers smooth and strongly convex objectives, and derive a lower bound on the minimax convergence rate defined in \eqref{eq:minimax_defn_smooth}, along with a matching upper bound.
However, their upper bounding algorithm has exponential complexity and hence, practically infeasible; whereas, our proposed algorithm \textbf{\textbf{\textsc{DGD-DEF}}}, which uses \textbf{DSC} for quantization has polynomial complexity and achieves the minimax lower bound to within constant factors.
For the setting of general convex and non-smooth objectives, \cite{mayekar_2020} provides a lower bound to the minimax suboptimality gap defined in \eqref{eq:minimax_defn_non_smooth}.
Using their proposed quantizer \textit{RATQ}, they also give an upper bound which characterizes the minimum bit-budget required to attain this minimax optimal lower bound to within an iterated logarithmic factor in $d$. 
Compared to this, \textbf{\textsc{DQ-PSGD}} uses $R + o_n(1)$ bits per dimension, and attains a suboptimality gap within constant factors of the minimax lower bound.
Here, $R$ is specified as a constraint and is beyond the algorithm designer's control and $o_n(1)$ is a term that goes to zero as $n \to \infty$.
A fixed length nearly optimal coding scheme that employs \textit{random rotations} was used in \cite{suresh_distributed_mean_estimation, abdi_fakri_aaai}.
Orthonormal transforms for random rotations were also used in \cite{hadad_erez_2016_IEEE_TSP}.
However, their goal was to design quantizers that achieve low statistical correlation between signal and quantization error rather than minimizing the $\ell_2$ quantization error, which is more relevant for quantizing gradients in distributed optimization, when the distribution of quantizer input is not known.
In our work, we also propose a computationally simpler relaxation of \textbf{DSC}, namely \textbf{NDSC} that achieves the minimax lower bounds to within a logarithmic factor.
We note that our proposed \textit{near-democratic embeddings} boil down to \textit{random rotations} when square orthonormal transforms are used, i.e., \textbf{NDSC} is a generalization of random rotations.
When Hadamard transforms are considered, these works assume that the dimension $n$ is such that a Hadamard matrix can be constructed.
However, it might not necessarily be true, and naive heuristics like partitioning the vector or zero-padding in order to make the dimension equal to the nearest power of $2$ can be suboptimal.
\textbf{NDSC} performs better than random rotations in such cases.
Another popular strategy to reduce the communication requirement is \textit{gradient sparsification} that reduces the dimension of the vector being exchanged.
Our coding strategies, \textbf{DSC} and \textbf{NDSC} can be used in conjunction with these sparsification methods.
We provide a comparison of our work with existing quantization and sparsification strategies in Table \ref{tab:compression_schemes_comparison}.

\begin{table*}[t]
\begin{center}
\begin{small}
\begin{sc}
\renewcommand{\arraystretch}{1.1}
\begin{tabular}{|l|c|c|c|r|}
\toprule
Compression Scheme & No. of Bits  & Error & Complexity \\
\midrule
Sign quantization \cite{bernstein_icml_2018, karimireddy_2019_error_feedback} & $O(n)$ & $O(n)$ & $O(n)$ \\
QSGD \cite{alistarh_NeurIPS_2017_qsgd} & $O(2^R(2^R + \sqrt{n}))$ & $\min \hspace{-1mm}\left\{\hspace{-1mm}\sqrt{n}2^{-R}\hspace{-1mm}, n 2^{-2R}\right\}$ & $O(n)$\\
Ternary quantization \cite{terngrad_neurips_2017} & $O(n\log_23)$ & $O(n)$ & $O(n)$ \\
vqSGD-Gaussian \cite{gandikota2020vqsgd} & $O\left(c\right), c > \log n$ & $O\left(\frac{n}{c}\right)$ & $O(\exp(c))$\\
vqSGD-Cross Polytope \cite{gandikota2020vqsgd} & $O\left(\log n\right)$ & $O\left(n\right)$ & $O(n)$ \\
Top-$k$ sparsification \cite{stich_2018_sparsifiedSGD} & $O\Paren{k + \log_2\binom{n}{k}}$ & $(n-k)/n$ & $O(k\hspace{-1mm}+\hspace{-1mm}(n\hspace{-1mm}-\hspace{-1mm}k)\log_2k)$ \\
Random sparsification \cite{wangni_2018} & $O\Paren{k + \log_2\binom{n}{k}}$ & $O\left(n/k\right)$ & $O(n)$ \\
Sim-Q+ \cite{mayekar_SimQ+} & $O(3n)$ & $O(1)$ & $O(n^2)$ \\
\textbf{DSC} (\textbf{\textit{Ours}}) & $nR + O(1)$ & $O\left(2^{-2R/\lambda}\right)$ & $O(n^2)$\\
\textbf{NDSC} (\textbf{\textit{Ours}}) & $nR + O(1)$ & $O(2^{-2R/\lambda}\log n)$ & $O(n\log n)$ \\
\bottomrule
\end{tabular}
\end{sc}
\end{small}
\end{center}
\caption{\textsc{Comparison of various compression schemes}}
\label{tab:compression_schemes_comparison}
\end{table*}

\textbf{Kashin Embeddings and Random Matrix Theory.}
\textit{Kashin embeddings} were studied in the random matrix theory literature \cite{lyubarskii_2010, Ka_in_1977} for their relation to convex geometry and vector quantization.
From a high level perspective, \textit{Kashin embedding} of a vector $\yv \in \Real^n$, is a vector $\xv \in \Real^N$ $(N \geq n)$, which has the property that the components of $\xv$ are similar to each other in magnitude, i.e., for all $i \in [N]$, $|x_i| = \Theta(1/\sqrt{N})$ with high probability (w.h.p.).
Even if components of $\yv$ may be arbitrarily varying in magnitude, Kashin embeddings have an effect of evenly distributing this variation across different components of $\xv$.
Subsequently, applying lossy compression schemes (eg. quantization) to the democratic embedding $\xv$ instead of the original vector $\yv$ incurs less error.
The inverse embedding map $\Sv: \Real^N \to \Real^n$ is linear, i.e., $\yv = \Sv\xv$.
Usually, $\Sv$ is randomly generated and the properties of the democratic embeddings are very closely related to \textit{Restricted Isometry Property (RIP)} parameters of $\Sv$ \cite{candes_tao_RIP}.
We study different classes of random matrices (subgaussian, orthonormal, and Hadamard) and the pros and cons of using them for constructing respective \textbf{DSC} schemes.
The efficacy of Kashin embeddings for various learning problems have been studied in \cite{chen_2020_breaking_trilemma, caldas2019expanding, safaryan2021uncertainty, saha_2022_model_compression}.
In our work, we go even further in using them for designing general source coding schemes (both stochastic and deterministic) and show that they yield minimax optimal optimization algorithms.
Kashin embedding of a vector is not unique, and \cite{lyubarskii_2010} proposed an iterative-projection type algorithm to compute a Kashin embedding.
However, their algorithm requires explicit knowledge of RIP parameters of $\Sv$, which is not readily available.
To this end, \cite{studer2015democratic} introduced the notion of \textbf{democratic embeddings (DE)}.
\textbf{DE} of a vector $\yv \in \Real^n$ is a Kashin embedding too, and is obtained by solving a linear program.
We propose a simple relaxation of this linear program, and show that its solution yields a \textbf{near-democratic embedding}.

\section{Democratic Embeddings}
\label{sec:democratic_embeddings}

Consider a wide matrix $\Sv \in \Real^{n \times N}$, where $n \leq N$.
For any given vector $\yv \in \Real^n$, the system of equations $\yv = \Sv \xv$ is under-determined in $\xv\in \Real^N$, with the set $\Scal = \{ \xv \in \Real^N \hspace{1mm} | \hspace{1mm} \yv = \Sv\xv\}$ as the solution space.
The vector $\xv^* \in \Scal$ which has the minimum $\ell_{\infty}$-norm in this solution space is referred to as the \textbf{Democratic Embedding} of $\yv$ with respect to $\Sv$.
In other words, $\xv^*$ is obtained by solving,
\begin{equation}
\label{eq:l_inf_minimization_problem}
    \minimize_{\xv \in \Real^N} \norm{\xv}_{\infty} \hspace{2mm} \text{subject to} \hspace{2mm} \yv = \Sv\xv.
\end{equation}
The constraint set $\Scal$ can be relaxed to a larger set $\Scal' = \{\xv \in \Real^n \hspace{1mm} | \hspace{1mm} \norm{\yv - \Sv\xv}_2 \leq \epsilon\}$ as in \cite{studer2015democratic}. In the rest of our work, we consider $\epsilon = 0$, i.e., exact representations.
In order to characterize the solution of \eqref{eq:l_inf_minimization_problem} (cf. Lemma \ref{lem:democratic_embedding_property}), we review certain definitions from \cite{lyubarskii_2010, studer2015democratic}.

\begin{definition} \textbf{(Frame)}
\label{def:frame}
A matrix $\Sv \in \Real^{n \times N}$ with $n \leq N$ is called a \textbf{frame} if $A\norm{\yv}_2^2 \leq \norm{\Sv^\top\yv}_2^2 \leq B\norm{\yv}_2^2$ holds for any vector $\yv \in \Real^n$ with $0 < A \leq B < \infty$, where $A$ and $B$ are called \textbf{lower} and \textbf{upper frame bounds} respectively.
\end{definition}

\begin{definition}\textbf{(Uncertainty principle (UP))}
\label{def:uncertainty_principle}
A frame $\Sv \in \Real^{n \times N}$ satisfies the Uncertainty Principle with parameters $\eta, \delta$, with $\eta > 0, \delta \in (0,1)$ if $\norm{\Sv\xv}_2 \leq \eta\norm{\xv}_2$ holds for all (sparse) vectors $\xv \in \Real^N$ satisfying $\norm{\xv}_0 \leq \delta N$, where $\norm{\xv}_0$ denotes the number of non-zero elements in $\xv$.
\end{definition}

\begin{lemma}\textbf{[Democratic embeddings]} \cite{studer2015democratic}
\label{lem:democratic_embedding_property}
Let $\Sv \in \Real^{n \times N}$ be a frame with bounds $A,B$ (cf. Def. \ref{def:frame}) that satisfies  the uncertainty principle (UP) (cf. Def. \ref{def:uncertainty_principle}) with parameters $\eta, \delta$ such that $A > \eta\sqrt{B}$.
Then for any vector $\yv \in \Real^n$, the solution $\xv_d$ of \eqref{eq:l_inf_minimization_problem} satisfies
\begin{equation}
    \frac{K_l}{\sqrt{N}}\norm{\yv}_2 \leq \norm{\xv_d}_{\infty} \leq \frac{K_u}{\sqrt{N}}\norm{\yv}_2,
\end{equation}
where $ K_l = \frac{1}{\sqrt{B}}$ and $K_u = \frac{\eta}{\left(A-\eta\sqrt{B}\right)\sqrt{\delta}}$ are called \textbf{lower} and \textbf{upper Kashin constants} respectively.
\end{lemma}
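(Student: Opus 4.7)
I would prove the two inequalities separately. The lower bound is a direct consequence of the upper frame inequality, while the upper bound is established by constructing a concrete feasible surrogate for $\xv^*$ via iterative hard thresholding and invoking the uncertainty principle to control the error at each step.

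For the lower bound, I start from feasibility $\yv = \Sv\xv^*$ and apply the upper frame bound $\norm{\Sv\xv}_2 \leq \sqrt{B}\norm{\xv}_2$ (which follows from Definition~\ref{def:frame} since the frame bounds coincide with the squared singular values of $\Sv$), together with the trivial estimate $\norm{\xv^*}_2 \leq \sqrt{N}\norm{\xv^*}_\infty$:
\begin{equation*}
\norm{\yv}_2 \;=\; \norm{\Sv\xv^*}_2 \;\leq\; \sqrt{B}\,\norm{\xv^*}_2 \;\leq\; \sqrt{BN}\,\norm{\xv^*}_\infty .
\end{equation*}
Rearranging yields $\norm{\xv^*}_\infty \geq \norm{\yv}_2/(\sqrt{B}\sqrt{N})$, matching $K_l = 1/\sqrt{B}$.

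For the upper bound, it suffices to exhibit \emph{any} feasible $\tilde\xv$ obeying the stated bound, since the optimality of $\xv^*$ transfers the bound to $\xv^*$. I would build $\tilde\xv$ as the limit of an iterative hard-thresholding scheme. Set $\yv_0 := \yv$, and at step $k$ pick a coarse representative of $\yv_k$, for example $\xv_k := \Sv^\top \yv_k/B$, whose $\ell_2$ norm is controlled via the frame inequalities by $\norm{\yv_k}_2$. Hard-threshold $\xv_k$ at level $M_k := \norm{\xv_k}_2/\sqrt{\delta N}$ into a bounded part $\bar\xv_k$ (entries of magnitude $\leq M_k$) and a sparse correction $\xv_k - \bar\xv_k$; Markov's inequality guarantees the correction has support at most $\delta N$, so the uncertainty principle yields $\norm{\Sv(\xv_k - \bar\xv_k)}_2 \leq \eta\norm{\xv_k - \bar\xv_k}_2 \leq \eta\norm{\xv_k}_2$. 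Define $\yv_{k+1} := \yv_k - \Sv\bar\xv_k$ and set $\tilde\xv := \sum_k \bar\xv_k$; then $\Sv\tilde\xv = \yv$ in the limit $\yv_k \to 0$, and the triangle inequality gives $\norm{\tilde\xv}_\infty \leq \sum_k M_k$.

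The main obstacle is tuning the coarse representative and the threshold so that (i) the residual norms $\norm{\yv_k}_2$ contract at a geometric rate $r = 1 - (A - \eta\sqrt{B})/B$, which falls strictly below $1$ precisely under the hypothesis $A > \eta\sqrt{B}$, and (ii) the geometric series $\sum_k M_k$ telescopes to $K_u\,\norm{\yv}_2/\sqrt{N}$ with the advertised constant. The contraction comes from decomposing $\yv_{k+1}$ into the first-order residual $(I - \Sv\Sv^\top/B)\yv_k$, bounded by $(1-A/B)\norm{\yv_k}_2$ via the frame inequality, and the thresholding correction $\Sv(\xv_k - \bar\xv_k)$, bounded by $\eta\norm{\xv_k}_2 \leq \eta\norm{\yv_k}_2/\sqrt{B}$ via the UP; carefully balancing the two contributions is what I expect to demand the most work. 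The remaining pieces---verifying $\Sv\tilde\xv = \yv$ in the limit and summing $M_k$ geometrically to match the precise form of $K_u$---are routine once the iteration is set up.
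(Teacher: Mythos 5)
The paper does not include its own proof of Lemma~\ref{lem:democratic_embedding_property}; it simply cites \cite{studer2015democratic}. Against that backdrop, your reconstruction is in the right family of ideas (the Kashin / Lyubarskii--Vershynin iterative hard-thresholding construction), and your lower bound is correct and tight: $\norm{\yv}_2 = \norm{\Sv\xv^*}_2 \le \sqrt{B}\norm{\xv^*}_2 \le \sqrt{BN}\norm{\xv^*}_\infty$ gives exactly $K_l = 1/\sqrt{B}$.

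The gap is precisely in the step you wave away as "routine." With your surrogate $\xv_k = \Sv^\top\yv_k/B$ the frame bound gives $\norm{\xv_k}_2 \le \norm{\yv_k}_2/\sqrt{B}$, hence $M_k \le \norm{\yv_k}_2/\sqrt{B\delta N}$; combined with your (correct) contraction $\norm{\yv_{k+1}}_2 \le r\norm{\yv_k}_2$ with $1-r = (A-\eta\sqrt{B})/B$, the series sums to
\[
\sum_{k\ge 0} M_k \;\le\; \frac{\norm{\yv}_2}{\sqrt{B\delta N}}\cdot\frac{B}{A-\eta\sqrt{B}} \;=\; \frac{\sqrt{B}}{\bigl(A-\eta\sqrt{B}\bigr)\sqrt{\delta}}\cdot\frac{\norm{\yv}_2}{\sqrt{N}},
\]
i.e.\ a constant $\sqrt{B}/\bigl((A-\eta\sqrt{B})\sqrt{\delta}\bigr)$ rather than the advertised $K_u = \eta/\bigl((A-\eta\sqrt{B})\sqrt{\delta}\bigr)$. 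Since $A\le B$ and the hypothesis $A>\eta\sqrt{B}$ together force $\eta<\sqrt{B}$, what you obtain is strictly weaker. Merely "summing $M_k$ geometrically" therefore does not match the stated $K_u$: you would need a genuinely sharper step (for instance, exploiting that for $k\ge 1$ the residual $\yv_k$ is, in the Parseval case, the image $\Sv(\xv_{k-1}-\bar\xv_{k-1})$ of a $\delta N$-sparse vector---though even this leaves the $k=0$ threshold without an extra $\eta$), or else you should acknowledge that your argument proves the same $\Theta(\norm{\yv}_2/\sqrt{N})$ scaling with a different constant and defer the exact value to the cited reference. The constant-matching is the only nontrivial part of the upper bound and should not be dismissed as routine.
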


We are interested in \textbf{Parseval frames} ($A = B = 1$), i.e., they satisfy $\Sv\Sv^{\top} = \Iv_n$ (where $\Iv_n \in \Real^{n \times n}$ is the identity matrix), implying $K_l = 1$ and $K_u = \eta(1-\eta)^{-1}\delta^{-1/2}$.
$K_u$ depends only on the choice of $\Sv$ and nothing else.
Lemma \ref{lem:democratic_embedding_property} shows that none of the coordinates of the democratic embedding is too large, and the information content of $\yv$ is distributed evenly.

The value of upper Kashin constant $K_u$ depends on the choice of frame construction $\Sv$, as well as its aspect ratio $\lambda = N/n$.
\cite{lyubarskii_2010, studer2015democratic} show that if $\Sv$ is a \textit{random Haar orthonormal matrix}, then $K = K\left(\lambda\right)$, where $\lambda > 1$ can be arbitrarily close to $1$.
Such frames can be obtained by generating a random $N \times N$ orthonormal matrix sampled from the Haar distribution, and randomly selecting $n$ of its rows.
Since choosing $\lambda$ is up to us, Lemma \ref{lem:democratic_embedding_property} implies that for random orthonormal frames, democratic embeddings satisfy $\norm{\xv_d}_{\infty} = \Theta(1/\sqrt{N})$ w.h.p.
As we will see in \S \ref{sec:democratic_source_coding}, for large $n$ (or equivalently large $N$ since $N \geq n$), this remarkably improves the robustness of our proposed compression schemes.
A comprehensive comparison of different choices for $\Sv$ is given in Supp. \S \textcolor{blue}{$4$}.

\subsection{Near-Democratic Embeddings}
\label{subsec:near_democratic_embeddings}

Although the linear program \eqref{eq:l_inf_minimization_problem} can be solved with $O(n^3)$ multiplications using \textit{simplex} or \textit{Newton's method}, it can still be computationally intensive.
A \textit{projected gradient descent type} algorithm with $O(n^2)$ complexity was presented in \cite{lyubarskii_2010}, but implementing it requires explicit knowledge of $\eta, \delta$ which is not readily available.
We propose a simpler relaxation of \eqref{eq:l_inf_minimization_problem}:
\begin{equation}
\label{eq:l2_minimization_problem}
    \minimize_{\xv \in \Real^N} \norm{\xv}_2^2 \hspace{2mm} \text{subject to} \hspace{2mm} \yv = \Sv \xv.
\end{equation}

The solution of the $\ell_2$-minimization \eqref{eq:l2_minimization_problem} can be found in closed form (cf. Supp. \S \textcolor{blue}{$1$}) as:
\begin{equation}
    \label{eq:near_democratic_closed_form}
    \xv_{nd} = \Sv^{\dagger}\yv = \Sv^{\top}\left(\Sv\Sv^{\top}\right)^{-1}\yv \in \Real^n\,,
\end{equation}
where $(\cdot)^{\dagger}$ (defined as above) is the pseudo-inverse.
For Parseval frames $\Sv$, this boils down to $\xv_{nd} = \Sv^{\top}\yv$.
We refer to $\xv_{nd} = \Sv^{\dagger}\yv$ as the \textbf{Near-Democratic} embedding of $\yv \in \Real^n$ with respect to $\Sv \in \Real^{n \times N}$, and show that the solution $\xv_{nd}$ of \eqref{eq:l2_minimization_problem} satisfies $\norm{\xv_{nd}}_{\infty} = O((\sqrt{\log N}/\sqrt{N})\norm{\yv}_2)$ w.h.p.
The additional $\sqrt{\log N}$ factor instead of the constant $K_u$ is a very modest price to pay compared to the computational savings, even for dimensions as large as $N \sim 10^6$.
Note that as $\lambda$ approaches $1$, the solution space $\Scal$ of $\eqref{eq:l_inf_minimization_problem}$ becomes smaller, and for $\lambda = 1$, the solutions of \eqref{eq:l_inf_minimization_problem} and \eqref{eq:l2_minimization_problem} coincide.
Lemma \ref{lemma:near_democratic_dynamic_range_random_orthonormal} characterizes our result explicitly.
A random orthonormal matrix is obtained by generating random Gaussian matrix $\Gv \in \Real^{N \times N}$ with i.i.d. entries, $\Gv_{ij} \stackrel{iid}{\sim} \Ncal(0,1)$, computing its singular-value decomposition $\Gv = \Uv \mathbf{\Sigma} \Vv^{\top}$, letting $\Stv = \Uv\Vv^{\top}$ and generating $\Sv \in \Real^{n \times N}$ by randomly selecting $n$ rows of $\Stv$, i.e., $\Sv = \Pv \Stv$ where $\Pv \in \Real^{n \times N}$ is a sampling matrix obtained by randomly selecting $n$ rows of $\Iv_N$.

\begin{lemma} \textbf{(Near-Democratic Embeddings with Random Orthonormal Frames)}
\label{lemma:near_democratic_dynamic_range_random_orthonormal}
For a random orthonormal frame $\Sv \in \Real^{n \times N}$ generated as described above, with probability (w.p.) at least $1 - \frac{1}{2N}$, the solution of \eqref{eq:l2_minimization_problem} satisfies:
\begin{equation}
    \norm{\xv_{nd}}_{\infty} \leq 2\sqrt{\frac{\lambda \log(2N)}{N}}\norm{\yv}_2.
\end{equation}
\end{lemma}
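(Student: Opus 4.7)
The plan is to reduce $\norm{\overline{\xv}}_\infty$ to a control on $N$ inner products between a fixed unit direction and marginally-uniform random points on $\mathbb{S}^{N-1}$, then conclude by spherical concentration plus a union bound. First, by construction $\Sv = \Pv\Stv$ is a Parseval frame: $\Sv\Sv^\top = \Pv\Stv\Stv^\top\Pv^\top = \Pv\Pv^\top = \Iv_n$. Hence the closed form \eqref{eq:near_democratic_closed_form} collapses to $\overline{\xv} = \Sv^\top\yv$, and writing $\tilde{\yv} := \Pv^\top\yv \in \Real^N$ for the zero-padded embedding of $\yv$ into the $n$ coordinates picked by $\Pv$ (so $\norm{\tilde{\yv}}_2 = \norm{\yv}_2$), the $j$-th entry is $\overline{x}_j = \langle \Stv_{:,j}, \tilde{\yv}\rangle$, where $\Stv_{:,j}$ denotes the $j$-th column of the Haar-random orthonormal matrix $\Stv$.

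Second, by rotational invariance of the Haar measure each $\Stv_{:,j}$ is marginally uniform on $\mathbb{S}^{N-1}\subset\Real^N$, so I would invoke the standard spherical concentration estimate $\Pr(|\langle \uv, \vv\rangle| \geq s\norm{\vv}_2) \leq 2e^{-Ns^2/2}$ for $\uv\sim\mathrm{Unif}(\mathbb{S}^{N-1})$ and fixed $\vv\in\Real^N$. Although the columns $\{\Stv_{:,j}\}_{j\in[N]}$ are correlated (they form an orthonormal system), the union bound only needs their one-dimensional marginal tails. Substituting $s = 2\sqrt{\lambda\log(2N)/N}$ with $\lambda = N/n \geq 1$ gives $Ns^2/2 = 2\lambda\log(2N) \geq 2\log(2N)$, so the per-coordinate failure probability is at most $2(2N)^{-2\lambda} \leq 1/(2N^2)$. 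A union bound over $j\in[N]$ then yields $\Pr(\norm{\overline{\xv}}_\infty \geq s\norm{\yv}_2) \leq 1/(2N)$, which is exactly the claim.

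The only delicate point I anticipate is extracting a sharp enough constant from the spherical tail bound to match the leading constant $2$ in the statement: the textbook Lévy inequality has exponent $(N-1)s^2/2$ rather than $Ns^2/2$, losing a factor of $e^{s^2/2} = 1 + o_N(1)$ that is easily absorbed. Otherwise the argument is routine; notably it uses no property of the uncertainty principle or of the Kashin constants from Lemma~\ref{lem:democratic_embedding_property}, and the extra $\sqrt{\log N}$ factor (compared to the exact democratic embedding) is seen to arise purely from the union bound over $N$ coordinates --- precisely the price for relaxing \eqref{eq:l_inf_minimization_problem} to the $\ell_2$-problem \eqref{eq:l2_minimization_problem}.
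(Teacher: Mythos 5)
Your argument is correct and uses the same core strategy as the paper — rotational invariance of the Haar measure, a spherical tail bound, and a union bound over the $N$ coordinates — but it carries out the concentration step in $\Real^N$ rather than $\Real^n$. The paper writes the $i$-th coordinate as $\sv_i^\top\yv$ with $\sv_i = \Pv\stv_i \in \Real^n$, bounds $\norm{\sv_i}_2 \leq 1$, and then uses that $\sv_i/\norm{\sv_i}_2$ is uniform on $\mathbb{S}^{n-1}$, giving a per-coordinate exponent $nt^2/2 = 2\log(2N)$. You instead write the same coordinate as $\langle\stv_i, \Pv^\top\yv\rangle$ with $\stv_i$ uniform on $\mathbb{S}^{N-1}$ and get exponent $Ns^2/2 = 2\lambda\log(2N)$, which is strictly larger for $\lambda>1$; the paper's crude step $\norm{\sv_i}_2 \leq 1$ discards the fact that these projected column norms concentrate near $1/\sqrt{\lambda}$, and your formulation avoids that slack automatically. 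Both arrive at failure probability $\leq 1/(2N)$, so the gain is only in the intermediate constant, but your version is slightly cleaner (one fewer normalization step) and its tail estimate degrades more gracefully in the $(N\!-\!1)$-vs-$N$ issue you flag, since the correction factor $e^{s^2/2} = (2N)^{2\lambda/N}$ is genuinely $1+o_N(1)$.
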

The proof of Lemma \ref{lemma:near_democratic_dynamic_range_random_orthonormal} is delegated to App. \ref{app:proof_NDE_Random_Orthonormal}.
It utilizes the observation that each coordinate of $\Sv^\top \yv \in \Real^N$ is isotropically distributed, and subsequently exploits measure concentration.
Random orthonormal matrices prove quite beneficial in this regard. 
Nevertheless, computing the near-democratic embeddings $\xv_{nd} = \Sv^{\top}\yv$, for random orthonormal frames still requires $O\left(n^2\right)$ time, and moreover, even storing $\Sv$, comprising of $32$-bit floating-point entries can be memory intensive.
To address this, we further propose a randomized Hadamard construction for $\Sv$.
Storing a randomized Hadamard matrix amounts to only storing the signs, and near-democratic embeddings using such matrices can be computed in near-linear time. 
Consider the $N \times N$ Hadamard matrix $\Hv$ whose entries are normalized, i.e., $\Hv_{ij} \hspace{-1mm} = \hspace{-1mm} \pm 1/\sqrt{N}$, $\Hv \hspace{-1mm} = \hspace{-1mm} \Hv^{\top}$, and $\Hv \Hv^{\top}\hspace{-2mm} = \hspace{-1mm} \Iv_N$.
Let $\Dv \in \Real^{N \times N}$ be a diagonal matrix whose entries are randomly chosen to be $\pm 1$ with equal probability.
Let $\Pv \in \Real^{n \times N}$ be the sampling matrix as before.
We define our frame to be $\Sv = \Pv\Dv\Hv \in \Real^{n \times N}$.
Note that $\Sv \Sv^{\top} = \Pv \Dv \Hv \Hv^{\top}\Dv\Pv^{\top} = \Pv \Dv^2 \Pv^{\top} = \Pv \Pv^{\top} = \Iv_n$.
i.e.,, our randomized Hadamard construction is a Parseval frame.
Storing the $1$-bit signs is enough to store the matrix $\Sv = \Pv \Dv \Hv$ in the memory.
Furthermore, the near-democratic embedding $\xv_{nd} = \Sv^{\top}\yv = \Hv \Dv \Pv^{\top}\yv$ can be computed with just $O(n \log n)$ additions, subtractions and sign-flips as $\Sv_{ij} = \pm 1/\sqrt{N}$. 
Unlike random orthonormal matrices, it does not require any explicit floating-point multiplications. 
Lemma \ref{lemma:near_democratic_dynamic_range_randomized_Hadamard} characterizes the $\norm{\cdot}_{\infty}$ of the solution of \eqref{eq:l2_minimization_problem} with $\Sv = \Pv \Dv \Hv$.

\begin{lemma}\textbf{(Near-Democratic Embeddings with Randomized Hadamard Frames)}
\label{lemma:near_democratic_dynamic_range_randomized_Hadamard}
For $\Sv = \Pv \Dv \Hv \in \Real^{n \times N}$, with $\Pv, \Dv, \Hv$ defined as above, with probability at least $1 - \frac{1}{2N}$, the solution of \eqref{eq:l2_minimization_problem} satisfies:
\begin{equation}
    \norm{\xv_{nd}}_{\infty} \leq 2\sqrt{\frac{\log\left(2N\right)}{N}}\norm{\yv}_2.
\end{equation}
\end{lemma}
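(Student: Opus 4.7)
The plan is to unpack the product $\Sv^\top = \Hv^\top \Dv^\top \Pv^\top = \Hv\Dv\Pv^\top$ (using $\Hv = \Hv^\top$, $\Dv = \Dv^\top$), reduce each coordinate of $\overline{\xv}$ to a Rademacher sum with controlled weights, and finish with Hoeffding plus a union bound over the $N$ coordinates.

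First I would set $\uv \triangleq \Pv^\top \yv \in \Real^N$. Since $\Pv$ is a row-sampling matrix, $\Pv^\top$ simply places the entries of $\yv$ into $n$ coordinates and zeros elsewhere, so $\|\uv\|_2 = \|\yv\|_2$. Crucially, $\uv$ is deterministic given the (already fixed) sampling pattern $\Pv$; the only randomness that matters for the bound comes from the diagonal $\Dv$. Writing $\overline{\xv} = \Hv\Dv\uv$, for each row index $i\in[N]$ we have
\begin{equation}
\overline{x}_i \;=\; \sum_{j=1}^{N} \Hv_{ij}\,D_{jj}\,u_j \;=\; \sum_{j=1}^{N} a_{ij}\,\epsilon_j,
\end{equation}
where $a_{ij} \triangleq \Hv_{ij} u_j$ is deterministic and $\epsilon_j \triangleq D_{jj} \in \{\pm 1\}$ are i.i.d.\ Rademacher. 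Because $\Hv_{ij}^2 = 1/N$ for every $(i,j)$, the squared-weight sum is
\begin{equation}
\sum_{j=1}^N a_{ij}^2 \;=\; \frac{1}{N}\sum_{j=1}^N u_j^2 \;=\; \frac{\|\yv\|_2^2}{N}.
\end{equation}

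Next I would apply Hoeffding's inequality to the Rademacher sum: for any $t>0$,
\begin{equation}
\Pr\bigl(|\overline{x}_i| > t\bigr) \;\leq\; 2\exp\!\left(-\frac{t^2}{2\sum_j a_{ij}^2}\right) \;=\; 2\exp\!\left(-\frac{t^2 N}{2\|\yv\|_2^2}\right).
\end{equation}
Choosing $t = 2\sqrt{\log(2N)/N}\,\|\yv\|_2$ gives $t^2 N/(2\|\yv\|_2^2) = 2\log(2N)$, so the right-hand side is $2\cdot(2N)^{-2} = 1/(2N^2)$. A union bound over the $N$ coordinates of $\overline{\xv}$ yields $\Pr(\|\overline{\xv}\|_\infty > t) \leq N/(2N^2) = 1/(2N)$, which is exactly the claimed failure probability.

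There is really no ``hard part'' here beyond bookkeeping: the proof is structurally identical to the random-orthonormal case of Lemma \ref{lemma:near_democratic_dynamic_range_random_orthonormal}, with the role of rotational invariance replaced by the subgaussian concentration of Rademacher sums supplied by $\Dv$. The only delicate point is recognizing that the fixed $\pm 1/\sqrt N$ structure of $\Hv$ is precisely what makes the weights $a_{ij}$ have the uniform squared-norm $\|\yv\|_2^2/N$, which is what allows the same threshold $t$ to work for every coordinate and makes the union bound lose only a factor of $N$.
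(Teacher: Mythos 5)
Your proof is correct and follows essentially the same route as the paper's: factor $\Sv^\top = \Hv\Dv\Pv^\top$, observe that each coordinate of $\Hv\Dv\Pv^\top\yv$ is a Rademacher sum with squared-weight sum $\|\yv\|_2^2/N$, get a subgaussian tail, and union-bound over the $N$ coordinates. The paper simply unrolls the Chernoff/$\cosh$ MGF computation that Hoeffding's inequality packages for you, so there is no substantive difference.
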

Its proof is provided in App. \ref{app:proof_NDE_Randomized_Hadamard}, and upper bounds the tail probability of each coordinate of $\Sv^{\top}\yv$ using a Chernoff-type argument, followed by a union bound.
In \S \ref{sec:democratic_source_coding}, we employ our democratic and near-democratic embeddings for source coding and show that they respectively yield efficient optimal and near-optimal vector quantizers.

\section{Democratic Source Coding}
\label{sec:democratic_source_coding}

We introduce our proposed random-embedding based quantization algorithms in \S \ref{subsec:proposed_quantization_strategy} and derive upper bounds on the $\ell_2$ quantization errors, which are relevant for the convergence analysis of our proposed algorithms in \S \ref{sec:proposed_optimization_algorithms}.
Furthermore, in \S \ref{subsec:optimal_covering_efficiency_of_DSC}, we discuss covering efficiency, which is an alternative notion of characterizing the efficiency of vector quantizers.

We first start with the definition of uniform scalar quantizer.
Denote the $\ell_{\infty}$-ball of radius $r$ centered at the origin of $\Real^N$ by $\Bcal_{\infty}^N(r)$.
Finite length source coding schemes map its inputs to a discrete set of finite cardinality.
An $R$-\textbf{bit uniform scalar quantizer} is a mapping $\Qsf(\cdot):\Bcal_{\infty}^N(1) \to S$ with $S \subset \Real^N$ and $|S| \leq 2^{\lfloor nR \rfloor}$.
With a bit-budget of $R$-bits per dimension, the $M = 2^R$ quantization points $\{v_i\}_{i=1}^{M}$ along any dimension are given by $v_i = -1 + (2i - 1)\Delta/2$, for $i = 1, \ldots, M$, where $\Delta = 2/M$ is the \textbf{resolution}.
$\Qsf(\xv)$ for $\xv \in \Bcal_{\infty}^{N}(1)$ is defined as $\Qsf(\xv) = \left[ x_1', \ldots, x_N' \right]^{\top}\hspace{-1mm}; \hspace{2mm} x_j' \triangleq \argminimize_{y \in \{v_1, \ldots, v_M\}}|y - x_j|$.
The maximum possible quantization error is given by,
\begin{equation}
\label{eq:uniform_quantizer_error}
    d = \sup_{\xv \in \Bcal_{\infty}^{N}(1)}\norm{\xv - \Qsf(\xv)}_2 \leq \frac{\Delta}{2}\sqrt{N}.
\end{equation}

\subsection{Proposed Quantization Strategy}
\label{subsec:proposed_quantization_strategy}

Given a frame $\Sv \in \Real^{n \times N}$, for any $\yv \in \Real^n$, denote its \textit{democratic} and \textit{near-democratic embeddings} (i.e., the solutions of \eqref{eq:l_inf_minimization_problem} and \eqref{eq:l2_minimization_problem} respectively) by $\xv_d$ and $\xv_{nd}$, both in $\Real^N$.
The \textbf{democratic} and \textbf{near-democratic encoders} are mappings $\Esf_d(\cdot), \Esf_{nd}(\cdot):\Real^n \to S \subset \Real^N$, $|S| \leq 2^{\lfloor nR \rfloor}$ defined as:
\begin{equation}
    \Esf_d(\yv) = \Qsf\left(\frac{\xv_d}{\norm{\xv_d}_{\infty}}\right), \hspace{1mm} \Esf_{nd}(\yv) = \Qsf\left(\frac{\xv_{nd}}{\norm{\xv_{nd}}_{\infty}}\right).
\end{equation}

$\Esf_d(\cdot)$ and $\Esf_{nd}(\cdot)$ are quantized outputs and sent over the channel from \textit{source} (worker) to the \textit{destination} (parameter server).
The corresponding \textbf{decoder} is the same for both, and is defined as the mapping $\Dsf(\cdot):S \to \Real^n$, $\Dsf(\xv') = \norm{\xv}_{\infty}\Sv \xv'$, where $\xv$ is either $\xv_d$ or $\xv_{nd}$, and $\xv' = \Esf_d(\yv)$ or $\Esf_{nd}(\yv)$. 
Normalization by $\norm{\xv}_{\infty}$ is needed to ensure that the input to $\Qsf(\cdot)$ lies in $\Bcal_{\infty}^{N}(1)$.
In the following Thm. \ref{prop:quantization_error_DSC_and_NDSC} we show the independence/weak-logarithmic dependence of \textbf{DSC} and \textbf{NDSC}.
For simplicity of exposition, here we have assumed that the scalar magnitude $\norm{\xv}_{\infty}$ is known exactly at the receiver.
We can quantize $\norm{\xv}_{\infty}$ using a constant number of bits.
In that case, the total number of bits required to quantize the vector is $nR + O(1)$, which implies $R + \frac{O(1)}{n}$ bits per dimension, which $\to R$ as $n \to \infty$.
In App. \ref{app:quantizing_the_linf_norm}, we show that this just introduces a small additive constant quantization error and all the results still hold true.

\begin{theorem}\textbf{(Quantization error of \textbf{DSC} and \textbf{NDSC})}
\label{prop:quantization_error_DSC_and_NDSC}
Given $\Sv \in \Real^{n \times N}$ and an $R$-bit uniform scalar quantizer $\Qsf(\cdot)$, for any $\yv \in \Real^n$, let $\Qsf_d(\yv) = \Dsf\left(\Esf_d(\yv)\right)$ and $\Qsf_{nd}(\yv) = \Dsf\left(\Esf_{nd}(\yv)\right)$. Then, with probability at least $1 - e^{-\Omega(n)}$,
\begin{align}
    &\norm{\yv - \Qsf_d\left(\yv\right)}_2 \leq 2^{\left(1 - \frac{R}{\lambda}\right)}K_u\norm{\yv}_2, \\
    &\hspace{-14mm}\text{and with probability at least } 1 - 1/\Omega(n), \nonumber\\ \hspace{2mm}
    &\norm{\yv - \Qsf_{nd}\left(\yv\right)}_2 \leq 2^{\left(2 - \frac{R}{\lambda}\right)}\sqrt{\log(2N)}\norm{\yv}_2.
\end{align}
\end{theorem}
\footnotetext{\textsuperscript{1}The exact expression depends on the choice of $\Sv$ (with its UP parameters) and is given in Supp. \S \textcolor{blue}{$4$}.}
The proof of Thm. \ref{prop:quantization_error_DSC_and_NDSC} is a direct consequence of Lemmas \ref{lem:democratic_embedding_property}, \ref{lemma:near_democratic_dynamic_range_random_orthonormal} and \ref{lemma:near_democratic_dynamic_range_randomized_Hadamard} and is provided in App. \ref{app:proof_of_quantization_error_DSC_and_NDSC}.
In the above Thm. \ref{prop:quantization_error_DSC_and_NDSC} we consider a randomized Hadamard frame for near-democratic representation (i.e., Lemma \ref{lemma:near_democratic_dynamic_range_randomized_Hadamard}). 
For random orthonormal frames, a $\sqrt{\lambda\log(2N)}$ factor appears instead of $\sqrt{\log(2N)}$.
For $\lambda = 1$, Thm. \ref{prop:quantization_error_DSC_and_NDSC} holds for both classes of frames.
Choosing $\lambda = 1$ is possible for random orthonormal frames, but not in the case of Hadamard frames for which the dimension $N$ must be such that Hadamard matrix can be constructed.
Democratic and near-democratic embeddings provide a unified way of looking at basis transforms for quantization, and can be applied with any general compression scheme.

\subsection{Optimal Covering Efficiency of Democratic Source Coding}
\label{subsec:optimal_covering_efficiency_of_DSC}

The notion of \textbf{covering efficiency} is a measure of how close a fixed-length quantizer is to being optimal.
Quantizer efficiency is related to how effectively a Euclidean ball of unit radius can be covered with a finite number of smaller balls \cite{dumer_2007, gray_quantization, wyner_packing}.
We review certain definitions to precisely characterize this notion.
Let $\Bcal_2^n(a)$ denote the Euclidean ball of radius $a$ centered at the origin.
The \textbf{dynamic range} (r) of an $R$-bit quantizer $\Qsf: \Rcal \to \Rcal' \subset \Real^n$, $|\Rcal'| \leq 2^{\lfloor nR \rfloor}$ is defined to be the radius of the largest Euclidean ball which fits inside the domain of $\Qsf$, i.e., $r \triangleq \sup\{a\hspace{1mm}|\hspace{1mm} \Bcal_2^n(a) \subseteq \Rcal\}$.
The \textbf{covering radius} of $\Qsf$ is defined as the maximum possible quantization error when any $\xv \in \Bcal_2^n(r)$ is quantized to its nearest neighbor, i.e., $d(\Qsf) \triangleq \inf\{d > 0 \hspace{1mm}|\hspace{1mm} \forall \xv \hspace{1mm} \in \Bcal_2^n(r), \norm{\xv - \Qsf(\xv)}_2 \leq d\}$.
The \textbf{covering efficiency} $\rho_n(\Qsf)$ of $\Qsf:\Rcal \to \Rcal'\subset \Real^n$ is defined as:
\begin{equation}
\label{def:covering_efficiency}
    \rho_n(\Qsf) = \left(|\Rcal'|\frac{\text{vol}\left(\Bcal_2^n(d(\Qsf)\right)}{\text{vol}\left(\Bcal_2^n(r)\right)}\right)^{\frac{1}{n}} = |\Rcal'|^{\frac{1}{n}}\frac{d(\Qsf)}{r}.
\end{equation}
If we consider Euclidean balls of radius $d(\Qsf)$ around each quantization point, the total volume of these balls must cover $\Bcal_2^n(r)$.
Covering efficiency formalizes how well this covering is and $\rho_n(\Qsf) \geq 1$ is a natural lower bound.
\cite{lin2020achieving} notes that for Roger's quantizer \cite{rogers_1963}, $\rho_n \to 1$ as $n \to \infty$, and is hence asymptotically optimal. 
However, it is practically infeasible for large $n$ as it cannot be implemented in polynomial time.
Popular quantization schemes \cite{alistarh_NeurIPS_2017_qsgd} use uniform scalar quantizers that have $\rho_n = \sqrt{n}$, which grows significantly far away from the lower bound of $1$ for large $n$ and are quite suboptimal.
The following Lemma \ref{lem:covering_efficiency_near_optimal_source_coding} quantifies the efficiency of our proposed quantization scheme.
Proof is a direct consequence of Thm. \ref{prop:quantization_error_DSC_and_NDSC} and is given in Supp. \S \textcolor{blue}{$5$}.

\begin{lemma}(\textbf{Covering Efficiency of (Near) Democratic Source Coding})
\label{lem:covering_efficiency_near_optimal_source_coding}
For the (near) democratic source coding schemes described in \S \ref{subsec:proposed_quantization_strategy}, with probability at least $1 - \frac{1}{2N}$, the covering efficiencies are given by
\begin{align*}
    \rho_d = 2^{1 + R\left(1 - \frac{1}{\lambda}\right)}K_u, \hspace{1mm} \text{and} \hspace{1mm} \rho_{nd} = 2^{2+R\left(1 - \frac{1}{\lambda}\right)}\sqrt{\log(2N)},
\end{align*}
where $\lambda = N/n$ is the aspect ratio of the frame $\Sv \in \Real^{n \times N}$, and $K_u$ is its upper Kashin constant.
\end{lemma}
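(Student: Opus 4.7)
The plan is to substitute the error bounds of Proposition \ref{prop:quantization_error_DSC_and_NDSC} directly into the definition \eqref{def:covering_efficiency} of covering efficiency, so that the proof reduces to controlling two factors: the codebook-size term $|\Rcal'|^{1/n}$ and the radius ratio $d(\Qsf)/r$.

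For the codebook size, both encoders $\Esf_d$ and $\Esf_{nd}$ end in the same rate-$R$ uniform scalar quantizer $\Qsf(\cdot): \Bcal_{\infty}^N(1) \to S$ with $|S| \le 2^{\lfloor nR \rfloor}$, and the decoder $\Dsf$ is a deterministic mapping. Hence the effective codebook $\Rcal'$ of $\Qsf_d$ (resp. $\Qsf_{nd}$) has cardinality at most $2^{\lfloor nR \rfloor}$, yielding $|\Rcal'|^{1/n} \le 2^R$ for both schemes.

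For the ratio $d(\Qsf)/r$, I would first argue that $\Qsf_d$ and $\Qsf_{nd}$ are homogeneous of degree one in $\yv$. The democratic embedding satisfies $\xv_d(\alpha\yv) = \alpha\, \xv_d(\yv)$ by a change of variables in \eqref{eq:l_inf_minimization_problem}, and $\xv_{nd} = \Sv^{\dagger}\yv$ is manifestly linear in $\yv$; the normalization by $\norm{\cdot}_{\infty}$ strips the magnitude; and the symmetric scalar quantizer $\Qsf$ followed by the linear decoder $\Sv(\cdot)$ is sign- and positive-scale equivariant. Consequently, $\norm{\yv - \Qsf(\yv)}_2/\norm{\yv}_2$ is independent of $\norm{\yv}_2$. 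Choosing the domain $\Rcal = \Bcal_2^n(r)$ for an arbitrary $r > 0$ and invoking Proposition \ref{prop:quantization_error_DSC_and_NDSC} over $\yv \in \Bcal_2^n(r)$ gives $d(\Qsf_d) \le 2^{1 - R/\lambda}K_u \cdot r$ and $d(\Qsf_{nd}) \le 2^{2 - R/\lambda}\sqrt{\log(2N)}\cdot r$, so the dynamic-range factor $r$ cancels in the ratio $d(\Qsf)/r$.

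Multiplying the two factors yields $\rho_d \le 2^R \cdot 2^{1 - R/\lambda}K_u = 2^{1 + R(1 - 1/\lambda)}K_u$ and $\rho_{nd} \le 2^R \cdot 2^{2 - R/\lambda}\sqrt{\log(2N)} = 2^{2 + R(1 - 1/\lambda)}\sqrt{\log(2N)}$, matching the claim. The $1 - 1/(2N)$ success probability is inherited verbatim from Proposition \ref{prop:quantization_error_DSC_and_NDSC} (which in turn draws it from Lemma \ref{lemma:near_democratic_dynamic_range_randomized_Hadamard} for the near-democratic case and from high-probability control of the Kashin constant of a random frame for the democratic case), so no additional union bound is required. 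There is no substantive obstacle; the only point requiring mild care is the scale-equivariance argument, which makes the ratio $d(\Qsf)/r$ well-defined and independent of the choice of $r$ for quantizers whose natural domain is all of $\Real^n$.
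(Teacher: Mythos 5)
Your proposal follows the paper's own proof essentially verbatim: substitute the relative error bound from Proposition \ref{prop:quantization_error_DSC_and_NDSC} into the definition \eqref{def:covering_efficiency} of covering efficiency, use $|\Rcal'|^{1/n}\le 2^R$, and observe that the dynamic-range factor $r$ cancels. The extra discussion of scale equivariance of $\Qsf_d,\Qsf_{nd}$ is a sensible point of care (the paper simply fixes $\norm{\yv}_2 \le r$), but it does not change the substance of the argument.
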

Lemma \ref{lem:covering_efficiency_near_optimal_source_coding} shows that when compared to naive uniform scalar quantizers, \textbf{DSC} and \textbf{NDSC} have remarkably better covering efficiency for large $n$, since it is either dimension independent or has a weak logarithmic dependence.
In the next section, we will show that this gives us independence/weak-logarithmic dependence on dimension for distributed optimization under bit-budget constraints.

\section{Proposed Optimization Algorithms}
\label{sec:proposed_optimization_algorithms}
\subsection{Smooth and Strongly Convex with Exact Gradient Oracle}
\label{subsec:smooth_and_strongly_convex_objectives}

Consider the class of $L$-smooth and $\mu$-strongly convex objective functions that satisfy $\norm{\xv_f^*} \leq D$ for some known $D \geq 0$, where $\xv_f^* = \argminimize_{\xv \in \Real^n}f(\xv)$.
For a starting point $\xhv_0 \in \Real^n$ and step-size $\alpha > 0$, we consider $\Pi_R$ to be the class of $R$-bit \textbf{Quantized Gradient Descent (QGD)} algorithms that iterate the descent rule $\xhv_{t+1} \leftarrow \xhv_t - \alpha\qv_t$, where the descent direction $\qv_t$ is a function of the computed gradients up until iteration $t$ \cite{lin2020achieving}.
Due to a bit-budget constraint, $\qv_t$ can only take values from a finite set of cardinality $2^{\lfloor nR \rfloor}$.
We consider the class of algorithms $\Pi_R$ to be those in which the worker determines the point $\zv_t$ at which the gradient is evaluated, and the quantizer input $\uv_t$, taking into account error feedback so that, $\zv_t \in \xhv_t + \text{span}\{\ev_0, \ldots, \ev_{t-1}\}$ and $\uv_t \in \nabla f\Paren{\zv_t} + \text{span}\{\ev_0, \ldots, \ev_t-1\}$, where $\ev_i \triangleq \qv_i - \uv_i ,i = 0, \ldots, t-1$ are the past quantization errors.
From Thm. IV.1 of \cite{lin2020achieving}, for the class $\Pi_R$ of $R$-bit QGD algorithms as described above, the minimax rate over the function class $\Fcal_{\mu,L,D}$ defined in \eqref{eq:minimax_defn_smooth} is lower bounded by $C(R) \geq \max\left\{\sigma, 2^{-R}\right\}$, where $\sigma \triangleq \frac{L - \mu}{L + \mu}$.
Here, $\sigma$ is the worst-case linear convergence rate of unquantized gradient descent over the same function class \cite{nesterov_2014}.
$C(R)$ has a sharp transition at a threshold budget $R_* = \log(1/\sigma)$.
\cite{lin2020achieving} shows that for their proposed algorithm, using scalar quantizers yields a convergence rate of $\leq \max\left\{\sigma, \sqrt{n}2^{-R}\right\}$.
This means that we require $R \geq \log(\sqrt{n}/\sigma)$ to achieve the convergence rate of unquantized GD, which is far from $R_*$ for large $n$.
We propose \textbf{\textsc{DGD-DEF}}: \textit{\textbf{D}istributed \textbf{G}radient \textbf{D}escent with \textbf{D}emocratically \textbf{E}ncoded \textbf{F}eedback} (Alg. \ref{alg:DGD-DEF}) which resolves this issue.
Here, $\Esf(\cdot)$ can be either $\Esf_d$ or $\Esf_{nd}$.
\textbf{\textsc{DGD-DEF}} is essentially a modification of the algorithm in \cite{lin2020achieving}, with the quantization scheme replaced by our coding scheme(s).
Thm. \ref{prop:DGD_DEF_convergence_rate} characterizes the convergence rate of \textbf{\textsc{DGD-DEF}}.
Its proof is similar to \cite[Thm. 7]{lin2020achieving} and is deferred to App. \ref{app:proof_DGDDEF_convergence_guarantee}.
\begin{theorem}(\textbf{\textsc{DGD-DEF} convergence guarantee})
\label{prop:DGD_DEF_convergence_rate}
For an objective $f \in \Fcal_{\mu, L, D}$, a bit-budget of $R$-bits per dimension, with high probability, \textbf{\textsc{DGD-DEF}} (Alg. \ref{alg:DGD-DEF}) with step-size $\alpha \leq \alpha^* \triangleq \frac{2}{L + \mu}$, employing a frame $\Sv \in \Real^{n \times N}$ for \textbf{DSC} or \textbf{NDSC} achieves
\begin{equation*}
    \norm{\xhv_T - \xv^*}_2 \leq 
    \begin{cases}
    \max\left\{\nu, \beta\right\}^T\left(1 + \beta\frac{\alpha L}{|\beta - \nu|}\right)\hspace{-0.5mm}D, \hspace{1mm} \text{ if } \hspace{2mm} \nu \neq \beta,\\
    \nu^T\left(1 + \alpha L T\right)\hspace{-0.5mm}D, \hspace{24mm} \text{otherwise},
    \end{cases}
\end{equation*}
where $\beta$ is the normalized error as in Thm. \ref{prop:quantization_error_DSC_and_NDSC}, i.e., $\beta \triangleq 2^{\left(1 - R/\lambda\right)}K_u$ if $\Esf = \Esf_d$, and $\beta \triangleq 2^{\left(2 - R/\lambda\right)}\sqrt{\log(2N)}$ if $\Esf = \Esf_{nd}$, and $\nu \triangleq \left(1 - (\alpha^* L \mu)\alpha\right)^{1/2}$ is the convergence rate of unquantized gradient descent with stepsize $\alpha$.
\end{theorem}
With $\alpha = \alpha^*$, $\limsup_{T \to \infty} \sup_{f \in \Fcal_{\mu,L,D}}\left(\norm{\xv_T - \xv_f^*}/D\right)^{1/T} = \max\left\{\sigma, 2^{-R}\beta\right\}$.
For \textbf{DSC}, $\beta = O(1)$ w.h.p., implying \textbf{\textsc{DGD-DEF}} achieves the lower bound of $\max\{\sigma, 2^{-R}\}$ to within constant factors, and since $\beta = O(\sqrt{\log n})$ for \textbf{NDSC} w.h.p., it is just a weak logarithmic factor away, which is better than $\sqrt{n}$ scaling of uniform scalar quantizers.
In other words, the threshold budget $R_{thr} = \log\left(\beta/\sigma\right)$ is much less than $\log\left(\sqrt{n}/\sigma\right)$ for large $n$.
Furthermore, compared to \cite{lin2020achieving}, which used Roger's quantizer \cite{rogers_1963} (exponential complexity), the worst-case complexity of \textbf{\textsc{DGD-DEF}} is polynomial w.r.t. dimension, i.e., $O(n^3)$ or $O(n^2)$.
\begin{algorithm}[t!]
    \centering
    \begin{algorithmic}
    \caption{\textbf{\textsc{DGD-DEF}}}
    \label{alg:DGD-DEF}
        \State {\bfseries Initialize:} $\xhv_0 \leftarrow \mathbf{0}$ and $\ev_{-1} \leftarrow \mathbf{0}$
        \For{$t=0$ to $T-1$}
        \State {\bfseries Worker:}
        \State $\zv_t \leftarrow \xhv_t + \alpha\ev_{t-1}$ \hfill (gradient access point)
        \State $\uv_t \leftarrow \nabla f(\zv_t) - \ev_{t-1}$ \hfill (error feedback)
        \State $\vv_t =\Esf\left(\uv_t\right)$ \hfill (source encoding)
        \State $\ev_t \leftarrow \Dsf(\vv_t) - \uv_t$ \hfill (error for next step)
        \State {\bfseries Server:}
        \State $\qv_t = \Dsf(\vv_t)$ \hfill (source decoding)
        \State $\xhv_{t+1} \leftarrow \xhv_t - \alpha\qv_t$ \hfill (gradient descent step)
        \EndFor
        \State {\bfseries Output:} $\xhv_T$
    \end{algorithmic}
\end{algorithm}
\hfill
\begin{algorithm}[t!]
    \centering
    \begin{algorithmic}
    \caption{\textbf{\textsc{DQ-PSGD}}}
    \label{alg:DQ-PSGD}
       \State {\bfseries Initialize:} $\xhv_0 \in \Xcal$, $\alpha \in \Real_+$ and $T$ 
        \For{$t=0$ to $T-1$}
        \State {\bfseries Worker:}
        \State $\ghv_t = \ghv(\xhv_t)$ \hfill (noisy subgradient)
        \State $\vv_t = \Esf_{Dith}(\ghv_t)$ \hfill (source encoding)
        \State {\bfseries Server:}
        \State $\qv_t = \Dsf_{Dith}(\vv_t)$ \hfill (source decoding)
        \State $\underline{\xhv}_{t+1} \leftarrow \xhv_t - \alpha\qv_t$ \hfill (subgradient step)
        \State $\xhv_{t+1} = \Gamma_{\Xcal}\left(\underline{\xhv}_{t+1}\right)$ \hfill (projection step)
        \EndFor
    \State {\bfseries Output:} $\xv_T = \frac{1}{T}\sum_{t=1}^{T}\xhv_t$ 
    \end{algorithmic}
\end{algorithm}

\subsection{General Convex and Non-Smooth Objectives with Stochastic Subgradient Oracle}
\label{subsec:general_convex_and_non_smooth}

Consider $f$ to be convex, but not necessarily smooth.
The stochastic subgradient oracle output $\ghv(\xv)$ for any input query $\xv \in \Xcal$ is assumed to be \textit{unbiased}, i.e.,
\[\mathbb{E}[\ghv(\xv) \vert \xv] \in \partial f(\xv),\]
and \textit{uniformly bounded}, i.e.,
\[\norm{\ghv(\xv)}_2 \leq B \;\; \text{ for some } B > 0\]
In this case, an \textbf{$\mathbf{R}$-bit quantizer} is defined to be a set of (possibly randomized) mappings $(\Qsf^e, \Qsf^d)$ with the encoder $\Qsf^e:\Real^n \to \{0,1\}^{nR}$ and the decoder $\Qsf^d:\{0,1\}^{nR} \to \Real^n$.
To design the source coding scheme for a stochastic subgradient oracle, we consider the class of \textbf{gain-shape quantizers}.
For any vector input $\yv \in \Real^n$, gain-shape quantizers are of the form,
\[\Qsf(\yv) \triangleq \Qsf_G(\norm{\yv}_2)\cdot\Qsf_S(\yv/\norm{\yv}_2),\]
where $\Qsf_G:\Real \to \Real$ and $\Qsf_S:\Real^n \to \Real^n$ quantize the magnitude and shape separately, and multiply the estimates to obtain the quantized output.
We consider a uniformly dithered variant of \textbf{DSC} which we denote as $(\Esf_{Dith}, \Dsf_{Dith})$ in Alg. \ref{alg:DQ-PSGD} (cf. App. \ref{app:proof_DQPSGD_convergence_guarantee} for detailed description of this quantizer design) for $\Qsf_S$, and propose \textbf{\textsc{DQ-PSGD}}: \textbf{D}emocratically \textbf{Q}uantized \textbf{P}rojected \textbf{S}tochastic sub\textbf{G}radient \textbf{D}escent (Alg. \ref{alg:DQ-PSGD}).
We use a dithered version of $\textbf{\textsc{DSC}}$ instead of the nearest neighbor scheme of \S \ref{sec:democratic_source_coding} because for stochastic oracles, it enables us to attain the optimal minimax rate even without error-feedback.
Thm. \ref{prop:DQ_PSGD_convergence_rate} characterizes the expected suboptimality gap of \textbf{\textsc{DQ-PSGD}}.
Its proof is similar to \cite[Corollary 3.4]{mayekar_2020} (ref. App. \ref{app:proof_DQPSGD_convergence_guarantee}).

\begin{theorem}(\textbf{\textsc{DQ-PSGD} convergence guarantee})
\label{prop:DQ_PSGD_convergence_rate} For any general objective $f$ with access to the oracle $\Qsf \circ \Ocal$ which outputs quantized noisy subgradients $\Qsf(\ghv(\xv))$, where $\Qsf$ employs \textbf{DSC} for the shape quantizer, with a step-size choice of $\alpha = \frac{D}{BK_u}\sqrt{\frac{\min\{R, 1\}}{T}}$,
the worst case expected suboptimality gap of the output $\xv_T$ of \textbf{\textsc{DQ-PSGD}} after $T$ iterations is

\begin{equation}
    \sup_{(f,\Ocal)} \hspace{1mm} \mathbb{E}f(\xv_T) - f(\xv^*) \leq \frac{K_u DB}{\sqrt{T\cdot\min\{1,R\}}}.
\end{equation}
\end{theorem}

Since $K_u = O(1)$ w.h.p., Thm. \ref{prop:DQ_PSGD_convergence_rate} shows that \textbf{\textsc{DQ-PSGD}} achieves the minimax lower bound \cite[Thm. 2.3, 3.1]{mayekar_2020}, using only $R + o_n(1)$ bits per dimension.
The additional $o_n(1)$ bits is for transmitting the scalar magnitude.
Compared to \cite{mayekar_2020}, \textbf{\textsc{DQ-PSGD}} attains the minimax optimal $O(1/\sqrt{T})$ rate without additional logarithmic multiplicative factors in the bit-budget requirement.
A similar result with a weak logarithmic dependence on $n$ can be derived for \textbf{NDSC}.
Supp. \S \textcolor{blue}{$2$} (Thm. \textcolor{blue}{$1$}) shows that \textbf{DSC} \& \textbf{NDSC} improve performance when used in conjunction with existing general compression schemes, such as random sparsification.

\subsection{Extension to multiple workers}
\label{subsec:extension_to_multiple_workers}

To extend our algorithm to a setup with multiple workers, consider the following optimization problem over $m$ workers and a parameter-server (PS):
\begin{equation}
    \xv^* \triangleq \argminimize_{\xv \in \Xcal}f(\xv) \equiv \argminimize_{\xv \in \Xcal}\frac{1}{m}\sum_{i=1}^{m}f_i(\xv).
\end{equation}
Here, the objective $f(\xv):\Xcal \to \Real$ is the sum of multiple $f_i(\xv):\Xcal \to \Real$, each known privately to a corresponding node $i$.
Node $i$ can compute the gradient $\nabla f_i(\xv)$ (or a subgradient, $\gv^i(\xv) \in \partial f_i(\xv)$) for any $\xv \in \Xcal$, and communicate it to the PS.
For general convex, non-smooth functions with stochastic subgradient oracle, Alg. \ref{alg:DQ-PSGD} can be extended to multiple workers by incorporating an additional consensus step at the PS.
The pseudocode is provided in Alg. \ref{alg:dqpsgd_multiple_workers}.
We analyze this setting in more detail in Supp. \S \textcolor{blue}{$3$}.
With a budget of $R$-bits per dimension per worker, we show that using a na\"ive quantizer, the worst-case convergence rate scales as:
\begin{equation*}
    \sup_{f, \Ocal} \mathbb{E}f(\xv_T) - f(\xv^*) \lesssim O\Paren{\frac{1}{\sqrt{mT}}\cdot\frac{\sqrt{n}B}{\Paren{2^R - 1}}}.
\end{equation*}
The linear dependence of this convergence rate on the dimension $n$ can be detrimental for high dimensional problems.
With our proposed source coding schemes, we can get rid of this and get rates $O\Paren{\frac{1}{\sqrt{mT}}\cdot\frac{K_u}{(2^R - 1)}}$ and $O\Paren{\frac{1}{\sqrt{mT}}\cdot \frac{\sqrt{\log n}}{(2^R - 1)}}$ with \textbf{DSC} and \textbf{NDSC} respectively.

\begin{figure}[!ht]
    \centering
        \floatname{algorithm}{Algorithm}
        \begin{algorithm}[H]
        \centering
        \begin{algorithmic}
        \caption{\textbf{\textsc{DQ-PSGD} (Multiple workers)}}
        \label{alg:dqpsgd_multiple_workers}
            \State {\bfseries Initialize:} $\xhv_0 \in \Xcal$ (at the PS), $\alpha \in \Real_+$ and $T$.
            \For{$t=0$ to $T-1$}
            \State \hspace{-2mm} {\bfseries Server:} Broadcasts $\widehat{\xv_t}$ to all workers $\text{Wk}_i$, $i \in [m]$.
            \vspace{1mm}
            \For{$i=1$ to $n$ at $\text{Wk}_i$}
            \State Compute $\widehat{\gv}^i_t = \widehat{\gv}^i(\xhv_t)$ \hfill (noisy subgradient)
            \State Encode $\vv^i_t = \Esf_{Dith}(\widehat{\gv}^i_t)$ \hfill (source encoding)
            \State $\text{Wk}_i$ sends $\vv^i_t$ to the PS. \hfill (Communication)
            \EndFor
            \vspace{1mm}
            \State \hspace{-2mm} {\bfseries Server:}
            \State \hspace{-2mm} $\qv^i_t = \Dsf_{Dith}(\vv^i_t)$ for all $i \in [n]$ \hfill (source decoding)
            \State $\qv_t = \frac{1}{n}\sum_{i=1}^{n}\qv^i_t$ \hfill (consensus step)
            \State \hspace{-2mm} $\underline{\xhv}_{t+1} \leftarrow \xhv_t - \alpha\qv_t$ \hfill (subgradient step)
            \State \hspace{-2mm} $\xhv_{t+1} = \Gamma_{\Xcal}\left(\underline{\xhv}_{t+1}\right)$ \hfill (projection step)
            \EndFor
        \State {\bfseries Output:} $\xv_T = \frac{1}{T}\sum_{t=1}^{T}\xhv_t$ 
        \end{algorithmic}
        \end{algorithm}
\end{figure}

\section{Numerical Simulations}
\label{sec:numerical_simulations}

We validate our theoretical claims with numerical simulations.
Fig. \ref{fig:compression_methods_map} plots the normalized compression errors i.e., $\mathbb{E}\left[\norm{\Qsf(\yv) - \yv}\right]_2/\norm{\yv}_2$ for different compression schemes with and without the presence of near-democratic source coding.
The vectors $\yv \in \Real^{1000}$ chosen for compression are generated from a standard Gaussian distribution, and then raised to the power of $3$ element-wise, and averaged over $50$ realizations.
This ensures a heavier tail, and hence the entries of $\yv$ have very different magnitudes.
In the legend, \textbf{SD} denotes \textit{Standard Dithering} \cite{alistarh_NeurIPS_2017_qsgd}, \textbf{Top-K} denotes Top-K sparsification \cite{alistarh_topk_neurips_2018}, and \textbf{NDH, NDO} are abbreviations for Near-Democratic Hadamard/Orthogonal, specifying the type of randomized frame chosen for our coding scheme.
Note that for $n = 1000$ dimensions, solving \eqref{eq:l_inf_minimization_problem} to compute the democratic representation using standard optimization packages like CVX \cite{cvx} is computationally demanding. 
Hence, we used \cite{lyubarskii_2010}'s algorithm to compute Kashin representations, which require explicit knowledge of UP parameters $\eta, \delta$.
For the two plots labelled \textbf{Kashin} (with random orthonormal frame), we choose $\lambda = 1.5 \text{ and } 1.8$, which implies availability of $R/\lambda$ bits per dimension to quantize.
Due to the fixed bit-budget, the desired effect of even distribution of information in Kashin representation, is offset by the poorer quantization resolution per coordinate, which results in no net benefit (if not worse).
For this reason, in our near democratic representation with orthonormal frame, we choose $\lambda = 1$.
We observe that $\lambda$ is desired to be as close to $1$ as possible, and for Hadamard frame, we let $N = 2^{\lceil\log_2 n\rceil} = 1024$.

\begin{figure}[!ht]
\captionsetup[subfigure]{justification=centering}
  \begin{subfigure}[t]{.48\textwidth}
    \centering
    \includegraphics[width=\linewidth]{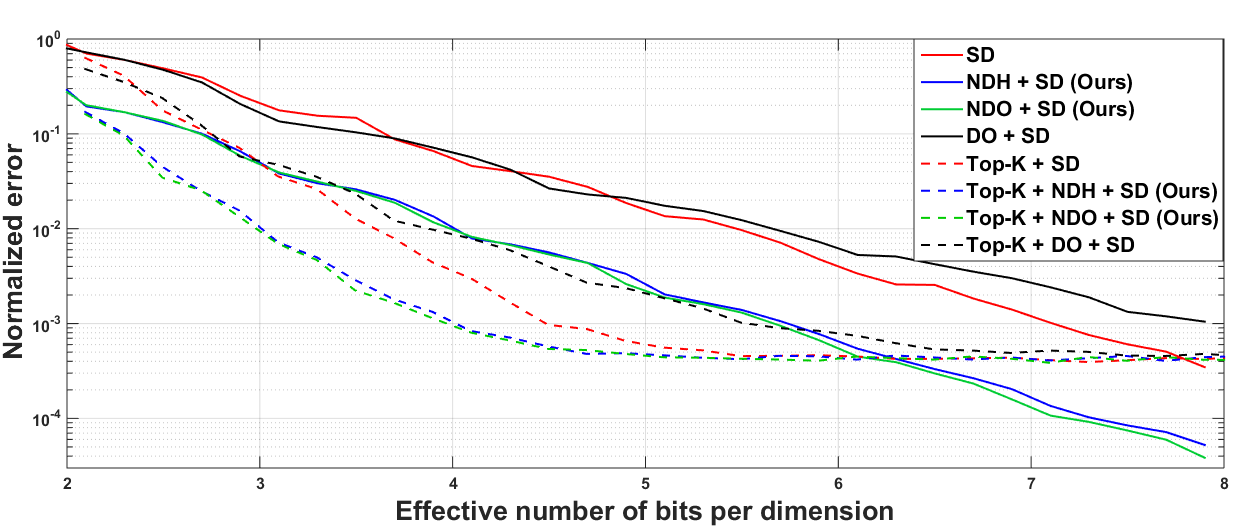}
    \caption{Comparison of different compression methods with and without near-democratic embedding}
    \label{fig:compression_methods_map}
  \end{subfigure}
  \hfill
  \captionsetup[subfigure]{justification=centering}
  \begin{subfigure}[t]{.48\textwidth}
    \centering
    \includegraphics[width=\linewidth]{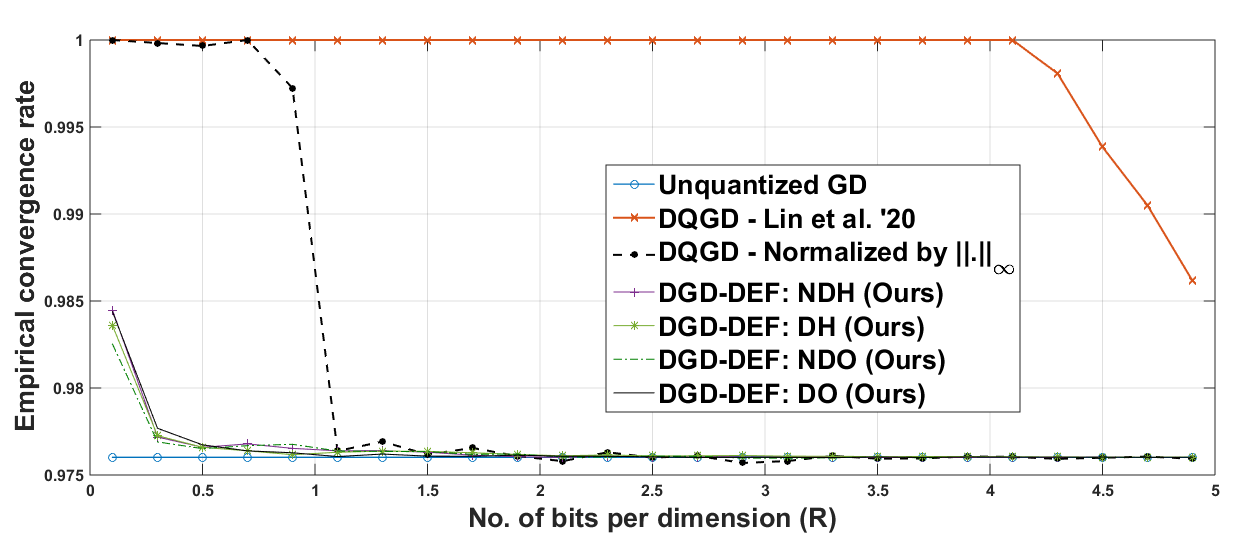}
    \caption{Variation of empirical convergence rate of \textbf{\textsc{DGD-DEF}} with bit-budget per dimension (R)}
    \label{fig:DGD-DEF_comparison}
  \end{subfigure}

  \medskip

  \captionsetup[subfigure]{justification=centering}
  \begin{subfigure}[t]{.48\textwidth}
    \centering
    \includegraphics[width=\linewidth]{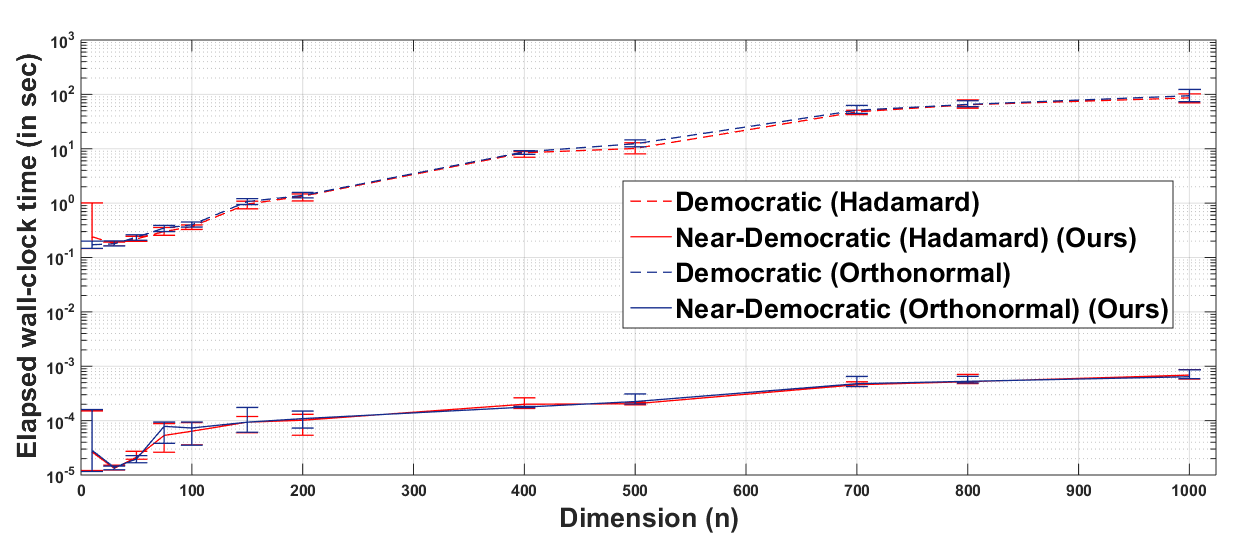}
    \caption{Wall clock times for computing near-democratic vs. democratic representations}
    \label{fig:wallclock_time}
  \end{subfigure}
  \hfill
  \captionsetup[subfigure]{justification=centering}
  \begin{subfigure}[t]{.48\textwidth}
    \centering
    \includegraphics[width=\linewidth]{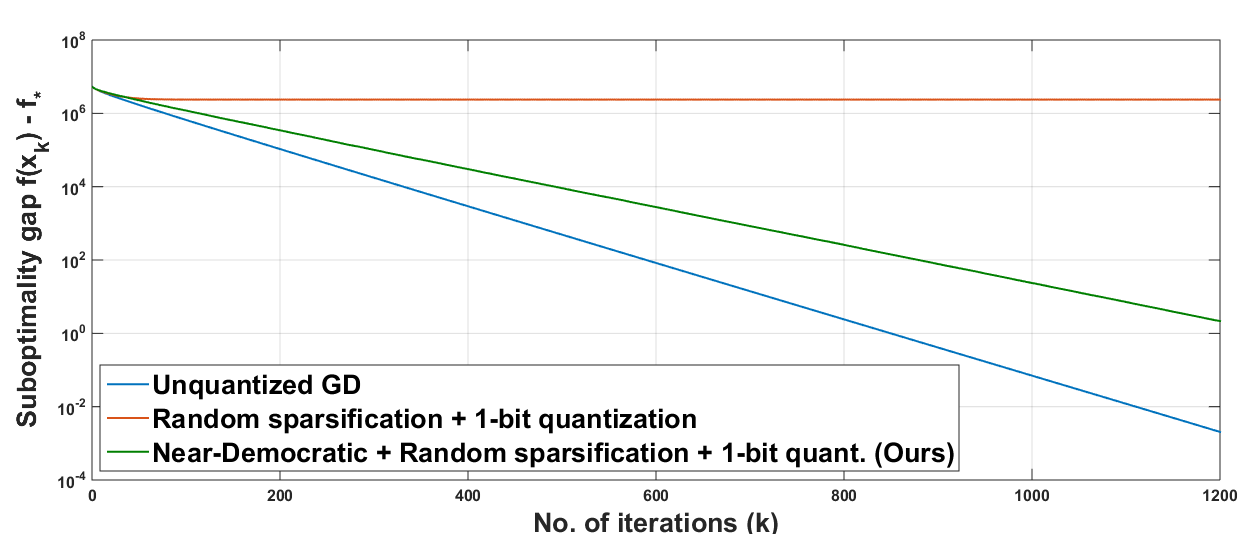}
    \caption{$\ell_2$-regularized least squares regression on \textsc{MNIST} dataset using sparsified GD}
    \label{fig:mnist_GD}
  \end{subfigure}
  \vspace{2mm}
  \caption{Simulations on smooth and strongly convex objectives}
\end{figure}

Fig. \ref{fig:DGD-DEF_comparison} compares the empirical convergence rate, defined as $\left.\norm{\xhv_T - \xv_f^*}_2 \middle/\norm{\xhv_0 - \xv_f^*}_2\right.$ versus the bit-budget constraint, i.e., $R$ bits per dimension, for solving the least squares problem $\minimize_{\xv \in \Real^n}\frac{1}{2}\norm{\yv - \Av \xv}_2^2$, where $n = 116$, and the entries of $\Av$ are drawn from Gaussian-cubed as before.
If the algorithm does not converge, the empirical rate is clipped at $1$.
\textit{Unquantized GD} has a constant rate equal to $\frac{L - \mu}{L +\mu}$ \cite{nesterov_2014}.
\textit{DQGD} proposed in \cite{lin2020achieving} used a predefined sequence of dynamic ranges, and \textit{nearest-neighbor} scalar quantization.
In comparison, we normalize the input to the quantizer by $\norm{\cdot}_{\infty}$ norm, and since it is a scalar quantity, we assume that it is transmitted with infinite precision.
A more comprehensive justification for sending scalars can be found in App. \ref{app:quantizing_the_linf_norm}.
We observe that Near-Democratic Embeddings (\textbf{NDE}) perform at par with Democratic Embeddings (\textbf{DE}), and both ensure convergence at very low bit-budgets.
Sometimes, it may even perform better because \textbf{NDE} allows us to choose $N = n$, and hence as seen before, no resolution is lost due to the fixed bit-budget.
Moreover, the computational advantage of \textbf{NDE} is evident from Fig. \ref{fig:wallclock_time} where we plot the wall-clock time (in seconds) (averaged over 10 realizations) vs. dimension to find these embeddings.
The \textbf{DE}'s are obtained by solving \eqref{eq:l_inf_minimization_problem} using CVX and the \textbf{NDE}'s are obtained from the closed form expression $\xv = \Sv^{\top}\yv$.
Here, for each $n$, the value of $N$ is chosen to be $N = 2^{\lceil \log_2 n \rceil}$.
This plot was obtained on a \textit{Dell Vostro} with an \textit{Intel i5 1.60GHz processor} running \textit{\textit{MATLAB R2014b}}.
Finally, in Fig. \ref{fig:mnist_GD}, we solve the $\ell_2$-regularized least squares problem for the MNIST dataset \cite{lecun_mnist}.
We use gradient descent where the gradients are compressed, first by \textit{random sparsification} followed by an aggressive $1$-bit quantization for the retained coordinates, so that effectively $R = 0.5$ bits are used per dimension.
We note that \textbf{NDE}'s using random orthonormal frames converge for $R = 0.5$, whereas the vanilla compression scheme fails.
For least-square simulations, we use the step-size $\alpha^*$ given by Thm. \ref{prop:DGD_DEF_convergence_rate}.

\begin{figure}
  \begin{subfigure}[t]{.5\textwidth}
    \centering
    \includegraphics[width=\linewidth]{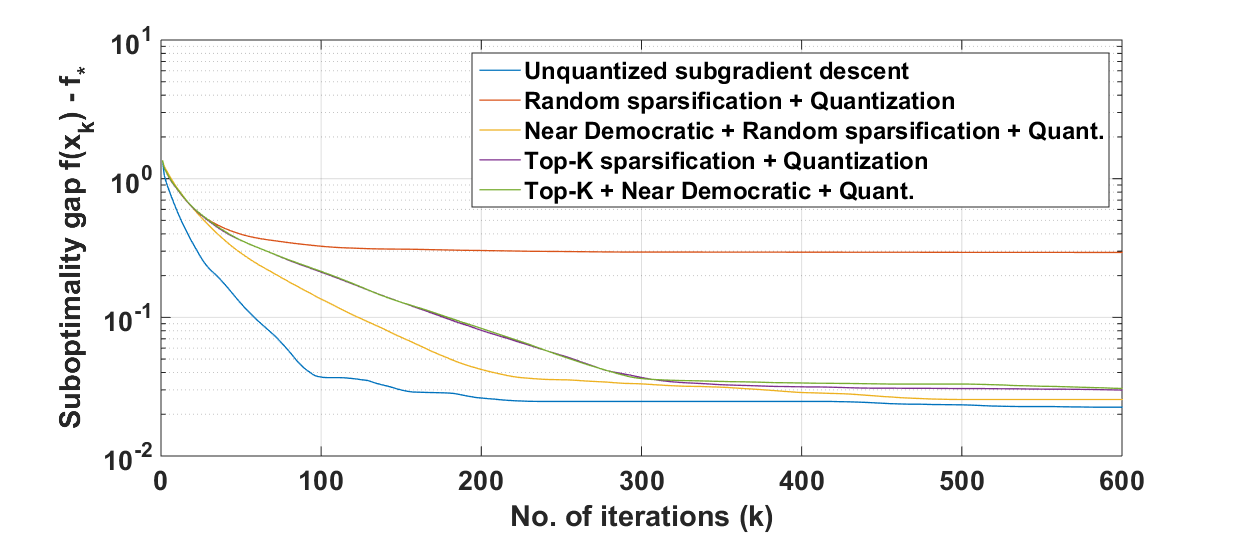}
    \caption{Suboptimality gap: SVM Gaussian data}
    \label{fig:SVM_suboptimality_gap}
  \end{subfigure}
  \hfill
  \begin{subfigure}[t]{.49\textwidth}
    \centering
    \includegraphics[width=\linewidth]{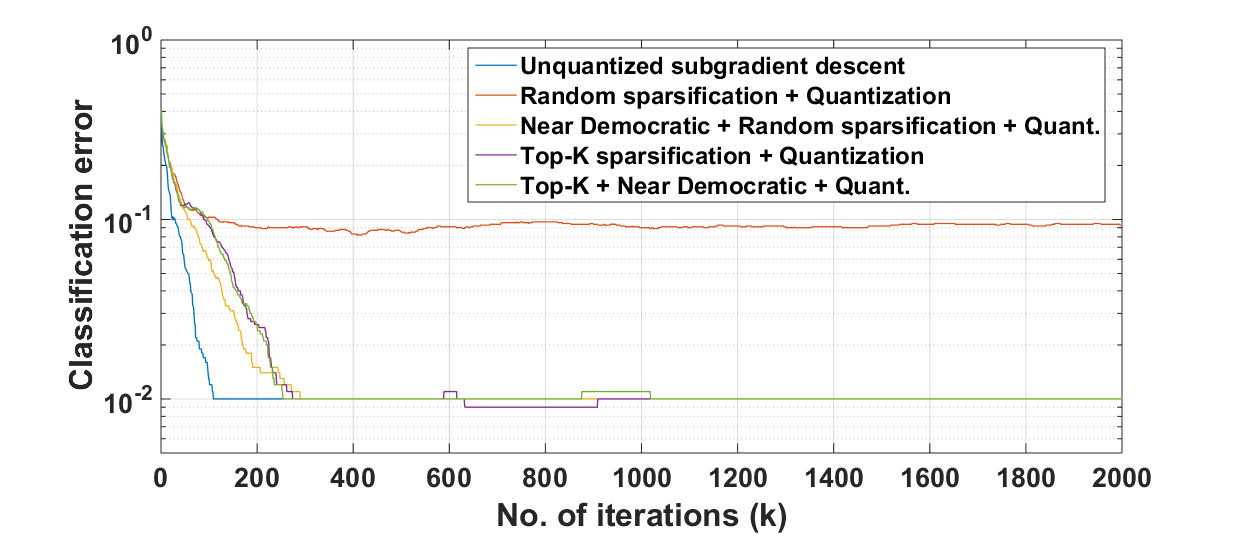}
    \caption{Classification error: SVM Gaussian data}
    \label{fig:SVM_classification_error}
  \end{subfigure}
  
  \begin{subfigure}[t]{.5\textwidth}
    \centering
    \includegraphics[width=\linewidth]{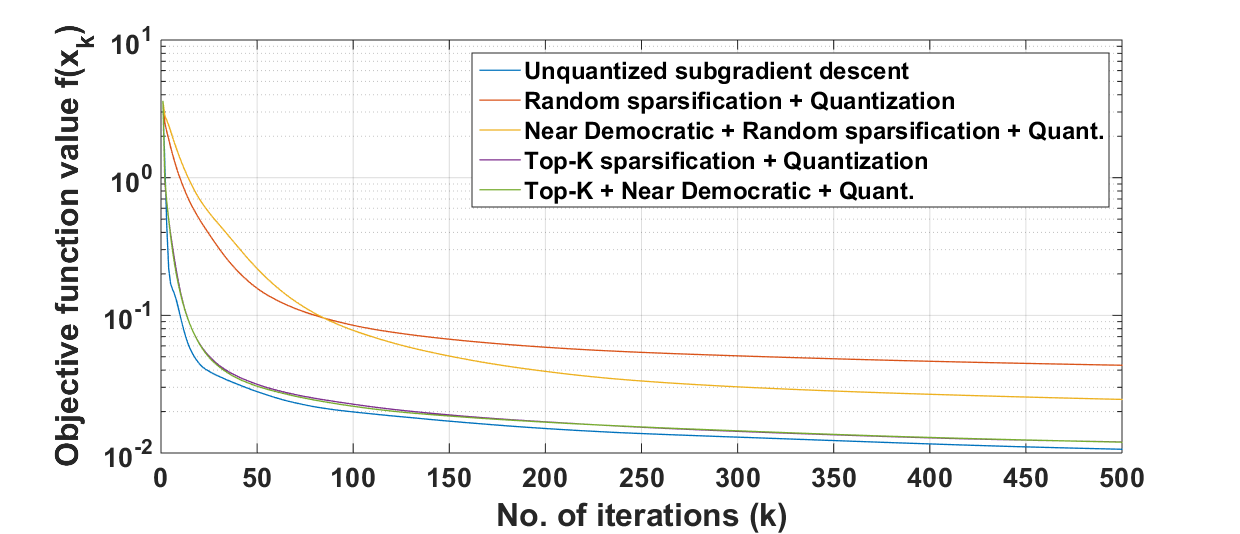}
    \caption{Objective function value: SVM MNIST $0$ vs $1$}
    \label{fig:MNIST_SVM_obj_fnval}
  \end{subfigure}
  \hfill
  \begin{subfigure}[t]{.49\textwidth}
    \centering
    \includegraphics[width=\linewidth]{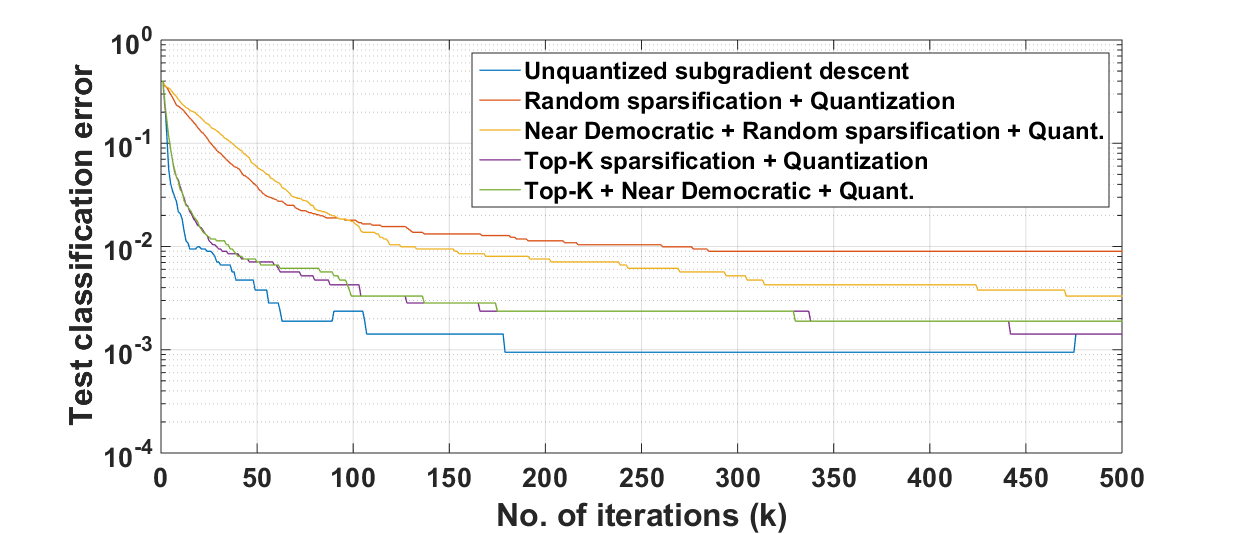}
    \caption{Classification error: SVM MNIST $0$ vs $1$}
    \label{fig:MNIST_SVM_test_classification_error}
  \end{subfigure}
  \vspace{2mm}
  \caption{General convex and non-smooth: Training an SVM}
\end{figure}

For general convex \& non-smooth objectives, we train a support vector machine where the subgradients are quantized using $R$-bits per dimension.
Each worker has $m$ datapoints $\{(\av_i,b_i)\} \in \Real^n \times \{-1,+1\}$ for $i = 1, \ldots, m$.
We want to solve the following optimization problem in which our aim is to minimize the hinge loss: $\minimize_{\xv \in \Real^n}\frac{1}{m}\sum_{i=1}^{m} \max\Paren{0, 1 - b_i\cdot\xv^{\top}\av_i}$.
We compare the performance of our proposed \textbf{\textsc{DQ-PSGD}} algorithm with naive scalar quantization as well as unquantized projected stochastic subgradient descent.
The stochasticity in the subgradient oracle evaluation arises from randomly subsampling the dataset to compute the subgradient at every iteration.
We consider the number of datapoints $m = 100$, and dimension of the problem $n = 30$.
For Figs. \ref{fig:SVM_suboptimality_gap} and \ref{fig:SVM_classification_error}, the data corresponding to each class is generated independently from Gaussian distributions with different means.
We consider random orthonormal frames for computing \textbf{NDE}'s.
Fig \ref{fig:SVM_suboptimality_gap} plots the suboptimality gap (averaged over $10$ different realizations) versus the number of iterations. 
The optimal value $f_*$ for computing the suboptimality gap in Fig. \ref{fig:SVM_suboptimality_gap} is obtained by using the interior point solver provided by CVX \cite{cvx}.
We effectively have $R = 0.5$, i.e., less than one bit per dimension.
In other words, since we only have a total of $nR = 15$ bits available, the subgradient is randomly sparsified by making certain coordinates zero, and the remaining vector is quantized using $1$-bit per dimension.
There is a significant difference in performance when randomly sparsifying $50\%$ of the coordinates with and without \textbf{NDE}'s.
We also consider top-K sparsification \cite{stich_2018_sparsifiedSGD} with and without \textbf{NDE}'s.
We choose $K=3$, i.e., we decide to retain only the top $10\%$ of the coordinates.
When we employ both sparsification and quantization techniques simultaneously to compress the gradient but only have a fixed total number of bits available, there arises a tradeoff between how many coordinates we want to retain and the number of bits allotted for quantizing each retained coordinate.
Smaller $K$ means more bits per coordinate i.e., better resolution for scalar quantization of the retained coordinates, and vice versa.
In random sparsification, we retain $15$ coordinates with $1$-bit allotted for each.
For top-K, we retain $3$ coordinates and allot $5$ bits for quantizing each of them.
Although top-K is expected to perform better than random sparsification, choosing the value of $K$ heuristically may yield poorer performance (despite the additional computation required for determining the top K coordinates) as in this case.
We also plot the classification error, that is the percentage of misclassified samples in the training set at every iteration in Fig. \ref{fig:SVM_classification_error} and observe a similar trend for the different sparsification and quantization schemes.

Figs. \ref{fig:MNIST_SVM_obj_fnval} and \ref{fig:MNIST_SVM_test_classification_error} consider the MNIST dataset \cite{lecun_mnist}, and the problem of training an SVM to distinguish the digit $0$ from digit $1$.
Fig. \ref{fig:MNIST_SVM_obj_fnval} shows how the objective function value decreases with the number of iterations.
Fig. \ref{fig:MNIST_SVM_test_classification_error} plots the classification error on the hold-out test set for each iteration.
We consider only $1$ realization for this setting and let $R = 0.1$.
For top-K, we retain the top $10\%$ coordinates, while ensuring that the total bit-budget remains same for all the schemes, i.e., a total of $\lfloor nR \rfloor = \lfloor 784 \times 0.1 \rfloor = 78$ bits.
For random sparsification with and without \textbf{NDE}'s, $78$ coordinates are chosen randomly from the gradient which $\in \Real^{784}$ and $1$ bit is allotted to each of them. 
For top-$10\%$, we now choose to retain the top $78$ coordinates of maximum magnitude, allot $1$ bit to quantize each of them, and make the rest zero.
Since the number of retained coordinates is the same for both random and top-K sparsification in this setting i.e., $78$, top-K performs better as expected.
For this set of simulations, we have chosen a nominal step-size $\alpha = 1$ empirically, and kept it constant for a fair comparison of different algorithms.

\begin{figure}[!ht]
\begin{subfigure}[!ht]{.5\textwidth}
    \centering
    \includegraphics[width=\linewidth]{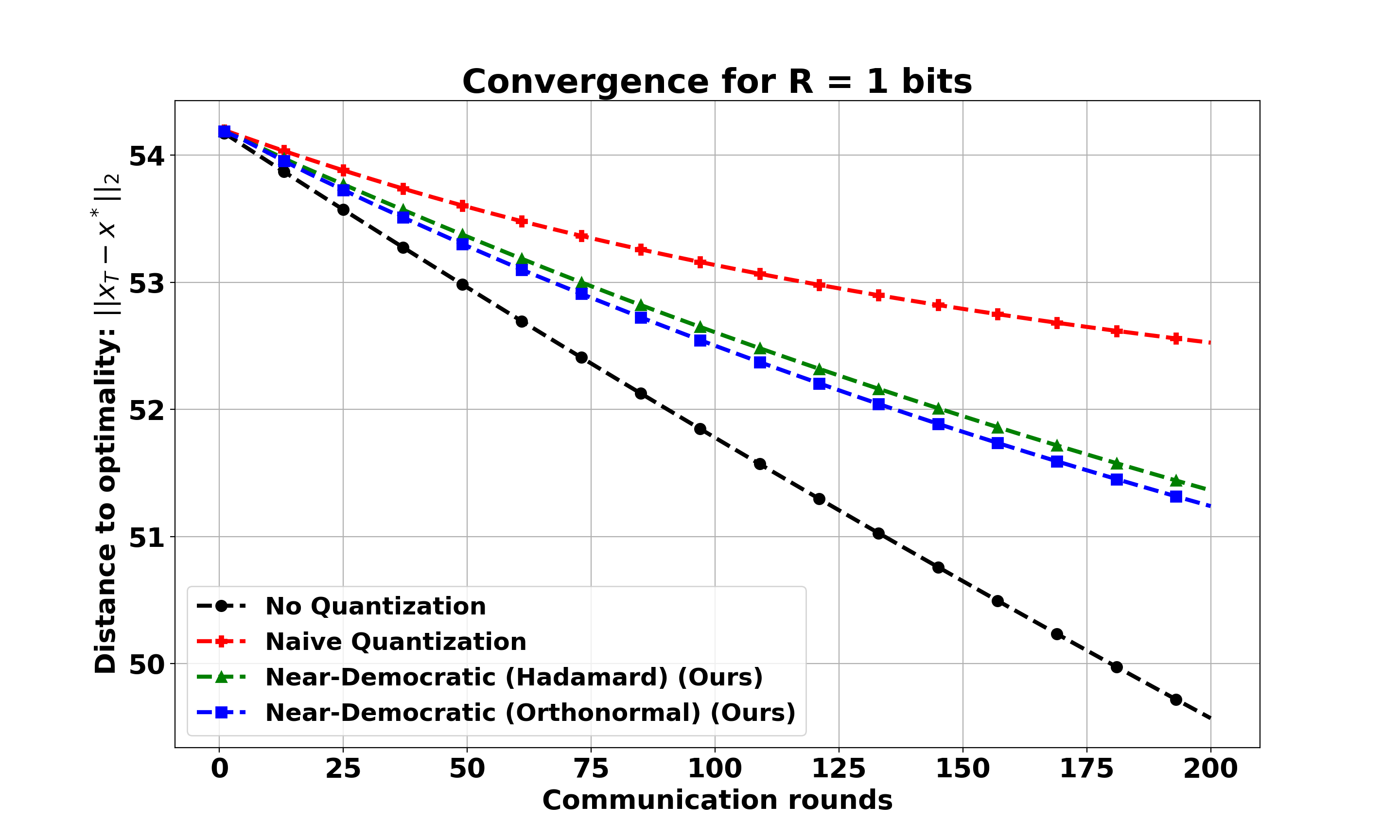}
    \caption{Linear Regression over $m = 10$ nodes}
    \label{fig:multi_worker_regression}
  \end{subfigure}
  \hfill
  \begin{subfigure}[!ht]{.5\textwidth}
    \centering
    \includegraphics[width=\linewidth]{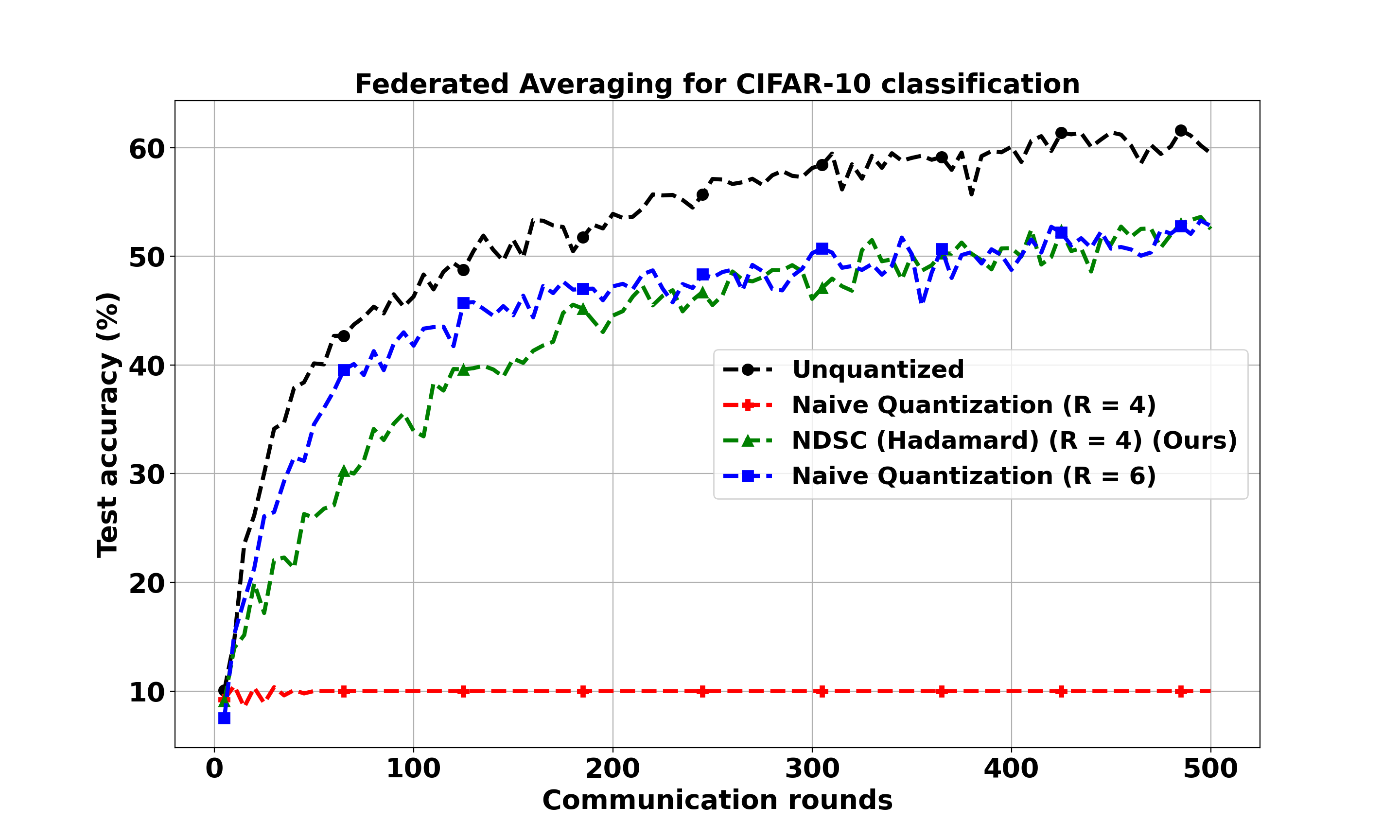}
    \caption{Training a CNN over $m = 10$ nodes}
    \label{fig:multi_worker_cnn}
  \end{subfigure}
  \caption{Parameter-Server with multiple workers}
  \label{fig:multi_worker_simulations}
\end{figure}

\textbf{Multi-worker simulations}.
We consider two problems.
Fig. \ref{fig:multi_worker_regression} considers a regression model that solves: $\xv^* \equiv \argminimize_{\xv \in \Real^n} \frac{1}{m}\sum_{i=1}^{m}\Paren{\frac{1}{s}\sum_{j=1}^{s}\frac{1}{2}\Paren{b_{ij} - \av_{ij}^{\top}\xv}^2}$.
Here, $\{\av_{ij}, b_{ij}\}_{j=1}^{s}$ denotes the local dataset at node $i$, for $i \in [m]$.
The dimension of the problem is $n = 30$, $m = 10$ workers with each worker having $s = 10$ local datapoints.
The dataset is generated synthetically from a model $\xv^*$ according to the noisy planted model $\bv = \Av\xv^*$, where $\bv \in \Real^{ms}$ is the regression output and the rows of $\Av \in \Real^{ms \times n}$, i.e. $\{\av^{\top}_1, \ldots, \av^{\top}_{ms}\}$ are the data vectors.
We let $\xv^* \sim \text{Student-t (df = 1)}$ and the entries of the data matrix $\Av \stackrel{iid}{\sim} \Ncal(0,1)$.

Although our theoretical analysis is for convex functions, we also provide simulations for non-convex settings.
In Fig. \ref{fig:multi_worker_cnn}, we train a convolutional neural network (CNN) over $m = 10$ workers to do multi-class classification on the CIFAR-10 \cite{cifar10} image classification dataset that contains $50,000$ training and $10,000$ test images from $10$ classes.
The entire dataset is distributed across these workers in a non-i.i.d. fashion, so that each worker has images from at most $2$ out of the $10$ classes.
As can be seen from the plot, with a bit-budget of $R = 4$ bits per dimension per worker, our proposed near-democratic source coding (NDSC) scheme with randomized Hadamard frame outperforms na\"ive quantization with the same bit-budget ($R = 4$) which fails to even converge.
As a matter of fact, na\"ive quantization requires a higher budget of $R = 6$ bits per dimension per worker to achieve a performance comparable to that of NDSC.
Further detailed simulations are provided in Supp. \S \textcolor{blue}{$3$}.

\section{Conclusions}
\label{sec:conclusions}

In this work, we show that democratic embeddings can yield minimax optimal distributed optimization algorithms under communication constraints when employed in source coding schemes.
For smooth \& strongly convex objectives, we propose \textbf{\textsc{DGD-DEF}}, which employs error feedback to achieve linear convergence.
For the case of general convex \& non-smooth objectives, when the output of the stochastic subgradient oracle is quantized using a democratic source coding scheme, \textbf{\textsc{DQ-PSGD}} attains minimax optimal convergence rate.
We note that although \textbf{DSC} theoretically attains minimax optimal performance to within constant factors, computing democratic embeddings can be computation and memory intensive.
We also propose a randomized Hadamard construction for fast near-democratic embeddings.
Finally, we extend the analysis and simulate our algorithms for multi-worker setups.
A potential limitation of the proposed optimization approaches is that the curvature information is not utilized, which we leave as future work.

\bibliographystyle{IEEEtran}
\bibliography{refs}

\appendices

\section{Proof of Lemma \ref{lemma:near_democratic_dynamic_range_random_orthonormal}: Near-Democratic Embeddings with Random Orthonormal Frames}
\label{app:proof_NDE_Random_Orthonormal}

Let $\{\sv_i\}_{i=1}^{N} \in \Real^n$ and $\{\stv_i\}_{i=1}^{N} \in \Real^N$ denote the columns of $\Sv$ and $\Stv$ respectively, where $\Sv$, $\Stv$ are defined in \S \ref{subsec:near_democratic_embeddings}.
For $i \in [N]$, since $\Stv^{\top}\Stv = \Iv_N$, we have,
\[\norm{\sv_i}_2 \; \leq \; \norm{\Pv\stv_i}_2 \; \leq \; \norm{\stv_i}_2 \; = \; 1.\]
For any $\yv \in \Real^n$, let $\yhv = \yv/\norm{\yv}_2$.
Then,
\begin{align*}
    \norm{\Sv^{\top}\yv}_{\infty} = \max_{i \in [N]}|\sv_i^{\top}\yv| &= \norm{\yv}_2\max_{i \in [N]}\norm{\sv_i}_2|\shv_i^{\top}\yhv| \\
    &\leq \norm{\yv}_2 \max_{i \in [N]}|\shv_i^{\top}\yhv|,
\end{align*}
where $\shv_i = \sv_i/\norm{\sv_i}_2$.
Note that $\shv_i \in \Real^n$ is uniformly random on the unit sphere in $\Real^n$, i.e., $\shv_i$ has identical distribution as $\gv/\norm{\gv}_2$ where $\gv \sim \Ncal(\mathbf{0}, \Iv_n)$.
Due to rotational invariance of Gaussian distribution, for any fixed $\yhv \in \Real^n$ such that $\norm{\yhv}_2 = 1$, $\shv_i^{\top}\yhv$ has identical distribution as $\shv_i^{\top}\ev_1$, where $\ev_1$ is the first canonical basis vector.
From concentration of measure for uniform distribution on the unit sphere,
\[\Prob\left[|\shv_i^{\top}\yhv| \geq t\right] = \Prob\left[|s_1| \geq t\right] \leq 2e^{-nt^2/2}.\]
Using a union bound, 
\[\Prob\left[\max_{i \in [N]}|\shv_i^{\top}\yhv| \geq t\right]\leq 2Ne^{-nt^2/2}.\]
Setting $t = 2\sqrt{\frac{\log(2N)}{n}}$ yields,
\[\Prob\left[\norm{\Sv^{\top}\yv}_{\infty} \geq 2\sqrt{\frac{\lambda \log(2N)}{N}}\norm{\yv}_2\right] \leq \frac{1}{2N},\] 
which completes the proof.

\section{Proof of Lemma \ref{lemma:near_democratic_dynamic_range_randomized_Hadamard}: Near-Democratic Embeddings with Randomized Hadamard Frames}
\label{app:proof_NDE_Randomized_Hadamard}

Denote $\zv = \Pv^{\top}\yv = [z_1, \ldots, z_N]^{\top} \in \Real^N$, and let $\uv = \Hv\Dv\zv = [u_1, \ldots,u_N]^{\top}$.
Here, $u_j$ is of the form $\sum_{i=1}^{N}a_i z_i$, with each $a_i = \pm\frac{1}{\sqrt{N}}$ chosen i.i.d.
For any $t \in \Real$ and $\lambda > 0$, a Chernoff-type argument gives,
\begin{align*}
    \Prob[u_j > t] \; = \; \Prob\left[e^{\lambda u_j} > e^{\lambda t}\right] \; \leq \; e^{-\lambda t}\prod_{i=1}^{N}\mathbb{E}\left[e^{\lambda a_i z_i}\right].
\end{align*}
Now,
\begin{align*}
    \mathbb{E}\left[e^{\lambda a_i z_i}\right] &= \frac{1}{2}e^{\frac{\lambda}{\sqrt{N}}z_i} + \frac{1}{2}e^{-\frac{\lambda z_i}{\sqrt{N}}} \\
    &= \cosh\left(\frac{\lambda}{\sqrt{N}z_i}\right) \leq e^{\lambda^2z_i^2/(2N)},
\end{align*}
where the last inequality follows from a bound on hyperbolic cosine.
This gives us $\Prob\left[u_j > t\right] \leq e^{\frac{\lambda^2}{2N}\norm{\zv}_2^2 - \lambda t}$.
Setting $\lambda = tN/\norm{\zv}_2^2$ gives the tightest bound,
\[\Prob\left[u_j > t\right] \leq e^{-t^2N/(2\norm{\zv}_2^2)}.\]
Similarly, one can show that,
\[\Prob\left[u_j < -t\right] \leq e^{-t^2N/(2\norm{\zv}_2^2)}.\]
Since $\norm{\uv}_{\infty} = \max_{j \in [N], s \in \{\pm 1\}}s u_j$, union bound gives,
\[\Prob\left[\norm{\Hv\Dv\zv}_{\infty} > t\right] \leq e^{-\frac{t^2N}{2\norm{\zv}_2^2} + \log(2N)}.\]
Setting $t = 2\norm{\zv}_2\sqrt{\frac{\log(2N)}{N}}$ yields,
\[\Prob\left[\norm{\Hv\Dv\zv}_{\infty} \leq 2\norm{\zv}_2\sqrt{\frac{\log(2N)}{N}}\right] \geq 1 - \frac{1}{2N}.\]
Since $\zv = \Pv^{\top}\yv \implies \norm{\zv}_2 = \norm{\yv}_2$, this completes the proof.

\section{Proof of Thm. \ref{prop:quantization_error_DSC_and_NDSC}: Quantization Error: (N)-DSC}
\label{app:proof_of_quantization_error_DSC_and_NDSC}

Let $\Esf$ denote either $\Esf_{d}$ or $\Esf_{nd}$.
Then, given an input $\yv \in \Real^n$ to $\Esf(\cdot)$, let $\xv \in \Real^N$ be the (near) democratic representation, $\xtv = \xv/\norm{\xv}_{\infty}$ be the normalized input to $\Qsf(\cdot)$, $\xv' \in \Real^N$ be the encoder output, and $\yv' = \Dsf(\xv') \in \Real^n$ be the decoder output.
The error incurred after encoding and subsequent decoding is 
\begin{align*}
    \norm{\yv' - \yv}_2 \leq \norm{\xv}_{\infty}\norm{\Sv\left(\xv' - \xtv\right)}_2 \leq \norm{\xv}_{\infty}\norm{\xv' - \xtv}_2
\end{align*}
The last inequality follows since,
\[\norm{\Sv\left(\xv' - \xtv\right)}_2 \leq \norm{\Sv}_{2}\norm{\xv - \xtv}_2.\]
Since $\Sv$ is a Parseval frame, and non-zero eigenvalues of $\Sv^{\top}\Sv$ are the same as those of $\Sv\Sv^{\top} = \Iv_n$, we have $\norm{\Sv}_2 = 1$.
To upper bound the quantization error $\norm{\xv' - \xtv}_2$, note that if we originally had a total budget of $nR$-bits, the number of bits per dimension to uniformly quantize $\xtv \in \Real^N$ is now $nR/N$, i.e., $2^{nR/N}$ quantization points per dimension.
From \eqref{eq:uniform_quantizer_error},
\[\norm{\xv' - \xtv}_2 \leq 2^{1-nR/N}\sqrt{N}.\]
So, if we use $\Qsf_d$ from Lemma \ref{lem:democratic_embedding_property},
\[\norm{\yv' - \yv}_2 \leq \frac{K_u}{\sqrt{N}}\norm{\yv}_2 2^{1 - \frac{nR}{N}}\sqrt{N} = 2^{\left(1 - \frac{R}{\lambda}\right)}K_u\norm{\yv}_2.\]
Similarly, for $\Qsf_{nd}$, using Lemma \ref{lemma:near_democratic_dynamic_range_randomized_Hadamard}, 
\[\norm{\yv' - \yv}_2 \leq 2\sqrt{\frac{\log(2N)}{N}}\norm{\yv}_22^{\left(1 -\frac{nR}{N}\right)}\sqrt{N}.\]
This completes the proof.

\section{Proof of Thm. \ref{prop:DGD_DEF_convergence_rate}: Convergence Rate of \textbf{\textsc{DGD-DEF}}}
\label{app:proof_DGDDEF_convergence_guarantee}

For an \textbf{$R$-bit Quantized Gradient Descent (QGD)} algorithm (defined in \S \ref{sec:proposed_optimization_algorithms} and \cite[Def. IV.1]{lin2020achieving}), the minimax convergence rate \eqref{eq:minimax_defn_smooth} over the function class $\Fcal_{\mu, L, D}$ is lower bounded as $C(R) \geq \max\{\sigma, 2^{-R}\}$, where $\sigma \triangleq \frac{L - \mu}{L + \mu}$.
The convergence analysis makes use of a recursive invariant satisfied by the trajectory of \textbf{\textsc{DGD-DEF}}.
Consider the two descent trajectories: \textbf{\textsc{DGD-DEF}} and \textbf{unquantized GD} with the same step size $\alpha$, starting at the same location $\xhv_0 = \xv_0$.
Then using triangle inequality, at each iteration $i \in \mathbb{N}$ we have, 
\begin{align*}
    &\xhv_t = \xv_t - \alpha \ehv_{t-1}\\
    \implies &\norm{\xhv_t - \xv^*}_2 \leq \norm{\xv_t - \xv^*}_2 + \alpha\norm{\ehv_{t-1}}_2,
\end{align*}
where $\ehv_{-1} = \mathbf{0}$.
From algorithm pseudocode \ref{alg:DGD-DEF}, note that $\zv_t = \xv_t$, i.e., \textbf{\textsc{DGD-DEF}} computes the gradient at the unquantized trajectory $\{\xv_t\}_{t=0}^{\infty}$.
Decay of the first term $\norm{\xv_t - \xv^*}_2$ is given by the convergence guarantee of unquantized GD \cite{nesterov_2014}, which states that $\norm{\xv_T - \xv^*}_2 \leq \nu^T \norm{\xv_0 - \xv^*}_2$, where $\nu \triangleq (1 - (\alpha^* L \mu)\alpha)^{1/2}$ is the convergence rate for unquantized GD with step size $\alpha$.
An upper bound to the second term $\norm{\ehv_{t-1}}_2$ is obtained from our quantization scheme, as per the following auxiliary lemma.
\begin{lemma}
\label{lem:upper-bound_input_to_quantize}
For $f \in \Fcal_{\mu, L, D}$, at any iteration $t$, the quantizer input satisfies $\norm{\uv_t}_2 \leq r_t$, where the sequence $\{r_t\}$ is given by $r_t = LD\sum_{j=0}^{t}\nu^j \beta^{t-j}$. 
Here $\beta \triangleq 2^{(1 - R/\lambda)}K_u$ if democratic embeddings are used, and $\beta \triangleq 2^{(2 - R/\lambda)}\sqrt{\log(2N)}$ if near-democratic embeddings are used.
\end{lemma}
\begin{proof}
This is proved using induction.
For $t = 0$, we have $\uv_0 = \nabla f(\xv_0) - \ev_{-1}$.
Since $\ev_{-1} = \mathbf{0}$, recalling that $\nabla f(\xv^*) = \mathbf{0}$ and $f$ is $L$-smooth, we have, 
\begin{align*}
    \norm{\uv_0}_2 = \norm{\nabla f(\xv_0)}_2 &= \norm{\nabla f(\xv_0) - \nabla f(\xv^*)}_2 \\
    &\leq L\norm{\xv_0 - \xv^*}_2 \leq LD,
\end{align*}
and so the lemma holds true for $t = 0$.
From triangle inequality, 
\[\uv_t = \nabla f(\xv_t) - \ev_{t-1} \implies \norm{\uv_t}_2 \leq \norm{\nabla f(\xv_t)}_2 + \norm{\ev_{t-1}}_2.\]
The first term can be upper bounded as,
\begin{align*}
    \norm{\nabla f(\xv_t)}_2 &= \norm{\nabla f(\xv_t) - \nabla f(\xv^*)}_2 \\
    &\leq L\norm{\xv_t - \xv^*}_2 \leq L\nu^t \norm{\xv_0 - \xv^*}_2 \leq L\nu^t D.
\end{align*}
Furthermore, from Thm. \ref{prop:quantization_error_DSC_and_NDSC},
\begin{align*}
    \norm{\ev_{t-1}}_2 &= \norm{\uv_{t-1} - \Dsf\left(\Esf(\uv_{t-1})\right)}_2\\
    &\leq \beta \norm{\uv_{t-1}}_2 \leq \beta L D \sum_{j=0}^{t-1}\nu^j\beta^{t-1-j}\\
    &= LD\sum_{j=0}^{t-1}\nu^j\beta^{t-j},
\end{align*}
where $\beta$ depends on whether we choose democratic or near-democratic embeddings for our source coding scheme, and the second inequality is the induction hypothesis.
Using these, we can upper bound the magnitude of the quantizer input as,
\[\norm{\uv_t}_2 \leq LD \left( \nu^t + \sum_{j=0}^{t-1}\nu^j\beta^{t-j} \right) = LD \sum_{j=0}^{t}\nu^j\beta^{t-j}.\]
This completes the proof.
\end{proof}
The proof of Thm. \ref{prop:DGD_DEF_convergence_rate} is similar to the \cite[Thm. III.1]{lin2020achieving} except for appropriate modifications due to our proposed source coding schemes: \textbf{DSC} and \textbf{NDSC}.
Since for $t \in [N]$,
\[\norm{\xhv_t - \xv^*}_2 \leq \norm{\xv_t - \xv^*}_2 + \alpha\norm{\ehv_{t-1}}_2,\]
the first term can be upper bounded from the descent guarantee of unquantized GD trajectory as,
\[\norm{\xv_T - \xv^*}_2 \leq \nu^T\norm{\xv_0 - \xv^*}_2 \leq \nu^T D.\]
The second term can be upper bounded as
\[\norm{\ev_{T-1}}_2 \leq \beta \norm{\uv_{T-1}}_2 \leq \beta r_{T-1} = \beta L D \sum_{j=0}^{T-1}\nu^j \beta^{T-1-j},\] 
where the inequalities follow from the definition of $\beta$ and Lemma \ref{lem:upper-bound_input_to_quantize}.
So we have, $\norm{\xhv_T - \xv^*}_2 \leq bD, \hspace{2mm} \text{where} \hspace{2mm} b = \nu^T + \beta \alpha L \sum_{j=0}^{T-1}\nu^j \beta^{T-1-j}$.
There are now three possibilities:
\begin{enumerate}[(i)]
    \item $\nu > \beta$: The geometric sum is computed as,
    \[b = \nu^T + \beta\alpha L \nu^{T-1}\frac{1 - (\beta/\nu)^T}{1 - \beta/\nu} \leq \nu^T\left(1 + \beta\frac{\alpha L}{\nu - \beta}\right).\]
    
    \item $\nu = \beta$: In this case, 
    \[b = \nu^T + \alpha L \nu \cdot \nu^{T-1} T = \nu^T\left(1 + \alpha L T\right).\]
    
    \item $\nu < \beta$: The case parallels the first case by interchanging the role of $\nu$ and $\beta$, and we get, 
    \begin{align*}
         b &= \nu^T + \beta \alpha L \beta^{T-1}\sum_{j=0}^{T-1}\left(\frac{\nu}{\beta}\right)^j \\
         &= \nu^T + \alpha L \beta^T\frac{1 - (\nu/\beta)^T}{1 - \nu/\beta} \leq \beta^T\left(1 + \beta\frac{\alpha L}{\beta - \nu}\right)
    \end{align*}
\end{enumerate}
the proof is complete by concisely expressing the above three cases as:

\begin{equation}
    \norm{\xhv_T - \xv^*}_2 \leq 
    \begin{cases}
    \max\left\{\nu, \beta\right\}^T\left(1 + \beta\frac{\alpha L}{|\beta - \nu|}\right)D, \text{ if } \nu \neq \beta,\\
    \nu^T\left(1 + \alpha L T\right)D \;\; \text{ otherwise}.
    \end{cases}
\end{equation}

\section{Proof of Thm. \ref{prop:DQ_PSGD_convergence_rate}: Convergence Rate of \textbf{\textsc{DQ-PSGD}}}
\label{app:proof_DQPSGD_convergence_guarantee}

Consider the optimization problem: $\minimize_{\xv \in \Xcal} f(\xv)$, where, $\Xcal \subseteq \Real^n$ is a convex set, and $f$ is convex, but not necessarily smooth.
$\Xcal$ satisfies $\sup_{\xv, \yv \in \Xcal} \norm{\xv - \yv}_2 \leq D$ for some known $D \geq 0$.
We assume oracle access to noisy subgradients of $f$.
The oracle output $\ghv(\xv)$ for any input query point $\xv \in \Xcal$ to be \textit{unbiased}, i.e., $\mathbb{E}[\ghv(\xv) \vert \xv] \in \partial f(\xv)$, where, $\partial f(\xv)$ denotes the subdifferential of $f$ at the point $\xv$, and \textit{uniformly bounded}, i.e., $\norm{\ghv(\xv)}_2 \leq B$ for all $\xv$, for some $B > 0$.
An \textbf{$\mathbf{R}$-bit quantizer} is defined to be a (possibly randomized) pair of mappings $(\Qsf^e, \Qsf^d)$, with the \textbf{encoder} mapping $\Qsf^e:\Real^n \to \{0,1\}^{nR}$ and the \textbf{decoder mapping} $\Qsf^d:\{0,1\}^{nR} \to \Real^n$.
Let $\Qcal_R$ denote the set of all such $R$-bit quantizers.
For any pair $(f, \Ocal)$ of objective function $f$ and oracle $\Ocal$, and an $R$-bit quantizer $\Qsf$, let $\Qsf \circ \Ocal$ denote the composition oracle that outputs $\Qsf(\ghv(\xv))$ for each query $\xv \in \Xcal$.
Let $\pi \in \Pi_{T,R}$ be an optimization protocol as defined in \S \ref{sec:introduction}.
We consider the minimax expected suboptimality gap \eqref{eq:minimax_defn_non_smooth}.
The convergence rate of \textbf{\textsc{DQ-PSGD}} depends on the quantizer design of $\Qsf \in \Qcal_R$.

The performance of any quantizer is determined by the following two quantities:
The \textbf{worst-case second moment}, i.e.,
\[\Acal_{\Qsf} \triangleq \sup_{\yv \in \Real^n : \norm{\yv}_2 \leq B} \sqrt{\mathbb{E}[\norm{\Qsf(\yv)}_2^2]},\]
and the \textbf{worst-case bias}, i.e.,
\[\Bcal_{\Qsf} \triangleq \sup_{\yv \in \Real^n : \norm{\yv}_2 \leq B} \norm{\mathbb{E}[\yv - \Qsf(\yv)]}_2.\]
For any such quantizer $\Qsf$, from \cite[Thm. 2.4]{mayekar_2020}, the worst-case expected suboptimality gap of quantized projected subgradient algorithm after $T$ iterations, with step-size $\alpha = \frac{D}{\Acal_{\Qsf}\sqrt{T}}$ is,
\begin{equation}
\label{eq:QPSGD_convergence_rate}
    \sup_{(f,\Ocal)} \mathbb{E} f(\xv) - f(\xv^*) \leq D\left(\frac{\Acal_{\Qsf}}{\sqrt{T}} + \Bcal_{\Qsf}\right).
\end{equation}
We design $\Qsf$ so that for any input $\yv \in \Real^n$, $\Qsf$ encodes the magnitude (gain) of the input $\norm{\yv}_2$ and the direction (shape) $\yv_S = \frac{\yv}{\norm{\yv}_2}$ of $\yv$ separately, and forms the estimate of $\yv$ by multiplying the estimates for the magnitude and direction.
In other words, 
\[\Qsf(\yv) \triangleq \Qsf_G(\norm{\yv}_2)\cdot \Qsf_S\left(\frac{\yv}{\norm{\yv}_2}\right),\]
where $\Qsf_G:\Real \to \Real$, and $\Qsf_S:\Real^n \to \Real^n$.

It is assumed that given $\yv$, $\Qsf_G$ and $\Qsf_S$ are independent of each other.
From \cite[Thm. 4.2]{mayekar_2020}, if $\Qsf_S$ is unbiased, i.e., $\mathbb{E}[\Qsf_S(\yv_S)] = \yv_S$ for all $\yv_S$ satisfying $\norm{\yv_S}_2 \leq 1$, then,
\[\Bcal_{\Qsf} \leq \sup_{\yv \in \Real^n : \norm{\yv}_2 \leq B} \left\vert\mathbb{E}[\Qsf_G\left(\norm{\yv}_2\right) - \norm{\yv}_2]\right\vert.\]

The \textbf{uniformly dithered quantizer} for $\Qsf_G$ is described next.
Let the \textbf{dynamic range} of $\Qsf_G$ be the known uniform upper bound, $B$.
Consider $m$ quantization points $\{u_1, \ldots, u_m\}$ uniformly spaced along the interval $[0,B]$ and let $u_0 = 0$ and $u_{m+1} = B$.
For any input $v \in [u_j, u_{j+1}) \subseteq [0,B]$, the output of the gain quantizer $\Qsf_G(v)$ is defined to be:
\vspace{-2mm}
\begin{equation}
\label{eq:dithered_quantization}
    \Qsf_G(v) = 
    \begin{cases}
    u_j \hspace{6mm} \text{with probability} \hspace{2mm} $r$, \\
    u_{j+1} \hspace{2mm} \text{with probability} \hspace{2mm} $1-r$,
    \end{cases}
\end{equation}
where, $r \triangleq \frac{u_{j+1} - v}{(B/(m+1))}$.
If a fixed number of $b = \log_2 m$ bits (typically $32$) are used, it can be easily shown that $\Qsf_G$ is unbiased.
For designing $\Qsf_S$, we consider two separate cases: The \textbf{high-budget regime} ($R > 1$) and the \textbf{sub-linear budget regime} ($R < 1$).
The proof Thm. \ref{prop:DQ_PSGD_convergence_rate} is completed after combining the results of \S \ref{subsec:high_budget_regime} and \S \ref{subsec:sublinear_budget_regime}.

\vspace{-4mm}
\subsection{High-budget regime}
\label{subsec:high_budget_regime}
Let the input to $\Qsf_S$ be $\yv$ such that $\norm{\yv}_2 \leq 1$.
For $\Sv \in \Real^{n \times N}$, if $\xv_d$ denotes the democratic embedding of $\yv$ with respect to $\Sv \in \Real^{n \times N}$, then $\norm{\xv_d}_{\infty} \leq \frac{K_u}{\sqrt{N}}\norm{\yv}_2 = \frac{K_u}{\sqrt{N}}$.
Let the \textbf{coordinate-wise uniformly dithered quantizer} ($\Qsf_{CUQ}$) be as in \cite{mayekar_2020}, in which we do dithered quantization \eqref{eq:dithered_quantization} of each coordinate of $\xv_d$ independently, using $R$ bits per dimension and a dynamic range of $\left[-\frac{K_u}{\sqrt{N}},+\frac{K_u}{\sqrt{N}}\right]$.
The output is $\Qsf_S(\yv) = \Sv\Qsf_{CUQ}(\xv_d)$.
Since $\Qsf_{CUQ}$ is unbiased, the output of $\Qsf_S$ is also unbiased, i.e., $\mathbb{E}[\Qsf_S(\yv)] = \mathbb{E}[\Sv \Qsf_{CUQ}(\xv_d)] = \Sv \mathbb{E}[\Qsf_{CUQ}(\xv_d)] = \Sv\xv_d = \yv$.
Since both $\Qsf_G$ and $\Qsf_S$ are conditionally independent of each other, the bias of $\Qsf(\cdot) = \Qsf_G(\cdot)\cdot\Qsf_S(\cdot) = 0$, i.e., the worst-case bias $\Bcal_{\Qsf} = 0$.
To evaluate the worst-case second moment of $\Qsf$, from \cite[Thm. 4.2]{mayekar_2020}, $\Acal_{\Qsf} \leq \Acal_{\Qsf_G}\Acal_{\Qsf_S}$.
Since the dynamic range of $\Qsf_G$ is $B$, the worst-case bias $\Acal_{\Qsf_G} \leq B$.
For $\Qsf_S$ and any $\yv$ such that $\norm{\yv}_2 \leq 1$, we have $\norm{\Qsf_S(\yv)}_2^2 = \norm{\Sv\Qsf_{CUQ}(\xv_d)}_2^2 \leq \sigma_{max}^2(\Sv)\cdot\norm{\Qsf_{CUQ}(\xv_d)}_2^2 \leq \norm{\Qsf_{CUQ}(\xv_d)}_2^2$.
The final inequality follows since $\Sv\Sv^{\top}=\Iv_n$.
Moreover, since the dynamic range for $\Qsf$ is $\frac{K_u}{\sqrt{N}}$, we have $\norm{\xv_d}_{\infty} \leq \frac{K_u}{\sqrt{N}} \implies \norm{\Qsf_{CUQ}(\xv_d)}_2^2 \leq K_u^2$ for all $\yv$ $\implies \mathbb{E}\norm{\Qsf_{CUQ}(\xv_d)}_2 \leq K_u^2 \implies \Acal_{\Qsf_S} \leq K_u$.
So, the worst-case second moment is $\Acal_{\Qsf} = \sqrt{\mathbb{E}\norm{\Qsf_S(\yv)}_2^2} \leq BK_u$.
Substituting these values for $\Acal_{\Qsf}$ and $\Bcal_{\Qsf}$ in \eqref{eq:QPSGD_convergence_rate}, we get the result.
A similar result with a $O(\sqrt{\log n})$ dependence on dimension can be proved for \textbf{NDSC}.

\vspace{-4mm}
\subsection{Sub-Linear budget regime}
\label{subsec:sublinear_budget_regime}

When $R < 1$, the total bit-budget is $r = nR \leq n$, i.e., we have less than $1$-bit per coordinate.
For brevity, let $N = n$.
The statements here can be easily generalized to the case of $N > n$.
To allocate $r = nR < n$ bits to each coordinate of a vector in $\mathbb{R}^n$ so that on an average $R$-bits per dimension is utilized, we choose $r = nR$ coordinates uniformly at random; allocate $1$-bit to each of these coordinates, and allocate $0$-bit to the remaining coordinates.
This essentially subsamples the vector in $\mathbb{R}^n$ to a smaller dimensional vector in $\mathbb{R}^{nR}$, and subsequently doing a $1$-bit quantization of the vector in $\mathbb{R}^{nR}$, while decoding the other coordinates as $0$.
Since and we are subsampling and want $\Qsf_S$ to be unbiased, we need to scale the quantized output by a factor of $\frac{1}{R}$.
So, the democratic representation + subsampling + $1$-bit quantization scheme as is $\mathsf{Q}_S(\mathbf{y}) = \frac{1}{R}\mathbf{S}\sum_{i \in \mathcal{S}}\mathsf{Q}_{CUQ}(\mathbf{x}_d)\mathbf{e}_i = \frac{1}{R}\mathbf{S}\sum_{i = 1}^n \mathsf{Q}_{CUQ}(\mathbf{x}_d)\mathbf{e}_i \mathbf{1}_{i \in \mathcal{S}}$.
Here, $\mathbf{e}_i \in \mathbb{R}^n$ is the $i^{th}$ canonical basis vector, $\mathcal{S}$ with $|\mathcal{S}| = nR$ denotes the random $nR$ indices chosen in the subsampling step, and $\mathbf{1}_{(\cdot)}$ denotes the indicator function, since the coordinates not selected in the subsampling step are decoded as $0$.
Unbiasedness ensures that $\Bcal_{\Qsf} = 0$.
Moreover, 
\begin{align*}
    \mathbb{E}\lVert\mathsf{Q}_S(\mathbf{y})\rVert_2^2 &\leq \frac{1}{R^2}\mathbb{E}\left[\left\lVert\sum_{i \in \mathcal{S}}\mathsf{Q}_{CUQ}(\mathbf{x}_d)\mathbf{e}_i\right\rVert_2^2\right]\\
    &= \frac{1}{R^2}\sum_{i=1}^{n}\mathbb{E}[\mathsf{Q}_{CUQ}(\mathbf{x}_d)_i^2]\cdot \mathbb{E}[\mathbf{1}_{i \in \mathcal{S}}] \\
    &= \frac{1}{R}\mathbb{E}\lVert\mathsf{Q}_{CUQ}(\mathbf{x}_d)\rVert_2^2
\end{align*}
The last equality follows as sampling $nR$ coordinates uniformly at random from $n$ coordinates implies $\mathbb{E}[\mathbf{1}_{i \in \mathcal{S}}] = R$.
Since $\mathbb{E}\lVert\mathsf{Q}_{CUQ}(\mathbf{x}_d)\rVert_2^2 \leq K_u^2$, the worst-case second moment is $\Acal_{\Qsf} \leq \sqrt{\mathbb{E}\lVert\mathsf{Q}(\mathbf{y})\rVert_2^2} \leq \frac{BK_u}{\sqrt{R}}$.
Substituting these values of $\Acal_{\Qsf}$ and $\Bcal_{\Qsf}$ in \eqref{eq:QPSGD_convergence_rate}, the convergence rate when $R < 1$ is $\frac{K_uDB}{\sqrt{RT}}$.
This completes the proof.

\section{Quantizing the \texorpdfstring{$\ell_{\infty}$}{linfty} norm}
\label{app:quantizing_the_linf_norm}

For simplicity, the statement of Thm. \ref{prop:DGD_DEF_convergence_rate} assumes that the $\ell_{\infty}$ norm of the input to the quantizer can be transmitted perfectly without lossy quantization.
If we use a constant number $O(1)$ of bits (typically, $32$ bits depending on the machine precision) to uniformly quantize $\norm{\xv}_{\infty}$, the total number of bits required to quantize the vector is $nR + O(1) \implies$ $R + \frac{O(1)}{n} \to R$ bits per dimension as $n \to \infty$.
Hence, the bit-budget is respected asymptotically and the additional constant number of bits is negligible for high dimensional problems.
To take account the error due to quantizing $\norm{\xv}_{\infty}$, note that if $\yv_S = \frac{\yv}{\norm{\yv}_{\infty}}$, 
\begin{align*}
    \norm{\Qsf(\yv) - \yv}_2 &= \norm{\Qsf_G(\norm{\yv}_{\infty})\Qsf_S\left(\yv_S\right) - \norm{\yv}_{\infty}\yv_S}_2 \\
    &= \left\lVert\Paren{\norm{\yv}_{\infty} + \epsilon}\Qsf_S(\yv_S) - \norm{\yv}_{\infty}\yv_S\right\rVert_2\\
    &\leq \norm{\yv}_{\infty}\norm{\Qsf_S(\yv_S) - \yv_S}_2 + \epsilon\norm{\Qsf_S(\yv_S)}_2
\end{align*}
The last inequality follows from triangle inequality.
Since we employ \textbf{DSC} for the shape quantizer, the first term can be upper bounded using Thm. \ref{prop:quantization_error_DSC_and_NDSC}.
The second term can be upper bounded as,
\begin{align*}
    \epsilon\norm{\Qsf_S(\yv_S)}_2 \leq \epsilon \sqrt{N} \norm{\Qsf(\yv_S)}_{\infty} \leq \epsilon \sqrt{N} \frac{K_u}{\sqrt{N}} \leq \epsilon K_u,
\end{align*}
which is an additional constant error.
The whole convergence analysis follows through with this constant additive error too.

\end{document}


\maketitle

\section{Closed Form for Near-Democratic Embeddings (NDE)}
\label{app:near_democratic_closed_form}

For any $\yv \in \Real^n$, its \textbf{NDE}, $\xv_{nd} \in \Real^N$ with respect to a frame $\Sv \in \Real^{n \times N}$ (for $N \geq n$) is defined to be the solution of the $\ell_2$ minimization problem \textcolor{blue}{$(7)$}.
%
Its Lagrangian is,
%
\[L(\xv, \nuv) = \xv^{\top}\xv + \nuv^{\top}\left(\Sv \xv - \yv\right)\]
%
$\nuv \in \Real^n$.
%
This gives,
%
\[\nabla_{\xv}L(\xv, \nuv) = 2\xv + \Sv^{\top}\nuv = 0 \implies \xv_{nd} = -\frac{1}{2}\Sv^{\top}\nuv.\]
%
So, $\nuv = -2\left(\Sv\Sv^{\top}\right)^{-1}\yv$ and,
%
\[\xv_{nd} = -\frac{1}{2}\Sv^{\top}\left(-2\left(\Sv\Sv^{\top}\right)^{-1}\yv\right) = \Sv^{\top}\left(\Sv\Sv^{\top}\right)^{-1}\yv = \Sv^{\top}\yv.\]
%
The last equality follows from the fact that we choose our frames $\Sv \in \Real^{n \times N}$ to be \textbf{Parseval}, i.e., they satisfy $\Sv\Sv^{\top} = \Iv^{n \times n}$.
%
Random Haar orthonormal or Random Hadamard frames are Parseval frames and hence \textbf{NDE}s can be computed very efficiently.

\section{Extension to General Compression Schemes}
\label{app:extension_to_general_compression_schemes}

For any general (possibly stochastic) lossy compression scheme like sparsification, standard dithering, etc, instead of compressing the input $\yv \in \Real^n$ directly, compressing its (near) democratic embedding $\xv \in \Real^N$, will consistently improve robustness by ensuring that the error due to compression has independent/weak-logarithmic dependence on dimension $n$.
%
\begin{theorem}
\label{prop:general_compressors}
For any unbiased compression operator $\Ccal : \Real^N \to \Real^N$ that satisfies $0 \leq \Ccal(\xv)\text{sign}(\xv) \leq \norm{\xv}_{\infty}$ for all $\xv \in \Real^N$, \textbf{DSC}/\textbf{NDSC} with corresponding encoding and decoding functions defined for any $\yv \in \Real^n$ as $\Esf\left(\yv\right) = \Ccal\left(\xv\right)$ and $\Dsf\left(\xv\right) = \Sv \xv$, satisfies $\Expect [\norm{\Dsf(\Esf(\yv)) - \yv}_2^2] \leq \gamma^2\norm{\yv}_2^2$, where $\gamma = K_u$ if $\xv = \xv_d$, and $\gamma = 2\sqrt{\log(2N)}$ if $\xv = \xv_{nd}$.
\end{theorem} 
%
\begin{proof}
For a given frame $\Sv \in \Real^{n \times N}$, from linearity of expectations, $\Expect\left[\Dsf\left(\Esf(\yv)\right)\right] = \Expect\left[\Dsf\left(\Ccal(\xv)\right)\right] \\= \Expect\left[\Sv \Ccal(\xv)\right] = \Sv \Expect\left[\Ccal(\xv)\right] = \Sv \xv = \yv$.
%
Here, $\xv$ can represent either the \textbf{DE} ($\xv_d$) or the \textbf{NDE} ($\xv_{nd}$).
%
The approximation error can be bounded uniformly (without the expectation) as,
\vspace{-3mm}
\begin{align}
\label{eq:proof_prop5_eq1}
        \hspace{-2mm}\norm{\Dsf\left(\Esf(\yv)\right) - \yv}_2^2 = \norm{\Sv \Ccal(\xv) - \Sv \xv}_2^2 \stackrel{(i)}{\leq} \norm{\Ccal(\xv) - \xv}_2^2
        \leq N\max_{1 \leq i \leq N}\left(\Ccal(\xv)_i - \xv_i\right)^2 \leq N \norm{\xv}_{\infty}^2.
\end{align}
%
$(i)$ follows since $\norm{\Sv}_2 = 1$. 
%
If $\xv = \xv_d$, then from Lemma \textcolor{blue}{$1$}, $\norm{\xv}_{\infty} \leq \frac{K_u}{\sqrt{N}}\norm{\yv}_2$.
%
Substituting this in \eqref{eq:proof_prop5_eq1}, we get the result for \textbf{DE} with $\gamma = K_u$.
%
%
Similarly, if $\xv = \xv_{nd}$, using Lemma \textcolor{blue}{$3$}, $\norm{\xv}_{\infty} \leq 2\sqrt{\frac{\log(2N)}{N}}\norm{\yv}_2$.
%
Substituting this once again in \eqref{eq:proof_prop5_eq1}, we get the result for \textbf{NDE} with $\gamma = 2\sqrt{\log(2N)}$.
%
This completes the proof.
\end{proof}

\section{Extension to multiple workers}
\label{supp_sec:extension_to_multiple_workers}

\begin{figure}[h!]
      \centering
      \includegraphics[width=0.45\linewidth]{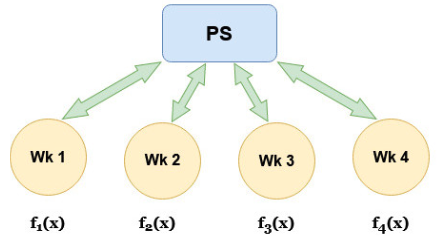}  
      \caption{Parameter server and Workers}
      \label{fig:PS_setup}
\end{figure}
%
To extend our algorithm to a setup with multiple workers, consider the following optimization problem over $m$ workers and a parameter-server (PS):
%
\begin{equation}
    \xv^* \triangleq \argminimize_{\xv \in \Xcal}f(\xv) \equiv \argminimize_{\xv \in \Xcal}\frac{1}{m}\sum_{i=1}^{m}f_i(\xv).
\end{equation}
%
As shown in Fig. \ref{fig:PS_setup}, the objective $f(\xv):\Xcal \to \Real$ is the sum of multiple $f_i(\xv):\Xcal \to \Real$, each known privately to a corresponding node $i$.
%
Node $i$ can compute the gradient $\nabla f_i(\xv)$ (or a subgradient, $\gv^i(\xv) \in \partial f_i(\xv)$) for any $\xv \in \Xcal$, and communicate it to the PS.
%
For general convex, non-smooth functions with stochastic subgradient oracle, Alg. \textcolor{blue}{$2$} can be extended to multiple workers by incorporating an additional consensus step at the PS.
%
The pseudocode is provided below.
%
\begin{figure}[htb]
    \centering
    \begin{minipage}{.7\linewidth}
        \floatname{algorithm}{Algorithm}
        \begin{algorithm}[H]
        \centering
        \begin{algorithmic}
        \caption{\textbf{\textsc{DQ-PSGD} (Multiple workers)}}
        \label{alg:dqpsgd_multiple_workers}
            \State {\bfseries Initialize:} $\xhv_0 \in \Xcal$ (at the PS), $\alpha \in \Real_+$ and $T$.
            \vspace{3mm}
            \For{$t=0$ to $T-1$}
            \vspace{3mm}
            \State \hspace{-2mm} {\bfseries Server:} Broadcasts $\widehat{\xv_t}$ to all workers $\text{Wk}_i$, $i \in [m]$.
            \vspace{3mm}
            \For{$i=1$ to $n$ at $\text{Wk}_i$}
            \State Compute $\widehat{\gv}^i_t = \widehat{\gv}^i(\xhv_t)$ \hfill (noisy subgradient)
            \State Encode $\vv^i_t = \Esf_{Dith}(\widehat{\gv}^i_t)$ \hfill (source encoding)
            \State $\text{Wk}_i$ sends $\vv^i_t$ to the PS. \hfill (Communication)
            \EndFor
            \vspace{3mm}
            \State \hspace{-2mm} {\bfseries Server:}
            \State \hspace{-2mm} $\qv^i_t = \Dsf_{Dith}(\vv^i_t)$ for all $i \in [n]$ \hfill (source decoding)
            \State $\qv_t = \frac{1}{n}\sum_{i=1}^{n}\qv^i_t$ \hfill (consensus step)
            \State \hspace{-2mm} $\underline{\xhv}_{t+1} \leftarrow \xhv_t - \alpha\qv_t$ \hfill (subgradient step)
            \State \hspace{-2mm} $\xhv_{t+1} = \Gamma_{\Xcal}\left(\underline{\xhv}_{t+1}\right)$ \hfill (projection step)
            \vspace{3mm}
            \EndFor
            \vspace{3mm}
        \State {\bfseries Output:} $\xv_T = \frac{1}{T}\sum_{t=1}^{T}\xhv_t$ 
        \end{algorithmic}
        \end{algorithm}
    \end{minipage}
\end{figure}

Convergence analysis of SGD (from any standard text such as \cite{bottou_2018}) tells us that the upper bound on the expected suboptimality gap scales proportionally with the variance of the stochastic subgradient, i.e. $\mathbb{E}\lVert \qv_t - \gv_t \rVert_2^2$, where $\gv_t = \frac{1}{m}\sum_{i=1}^{m}\gv^i_t$, $\gv^i_t \in \partial f_i(\xv_t)$ for all $i = 1, \ldots, m$ and $t = 0, \ldots, T-1$.
%
Succinctly stated, similar to Thm. $3$ of the main paper, we have,
%
\begin{equation*}
    \sup_{f, \Ocal} \mathbb{E} f(\xv_T) - f(\xv^*) \lesssim O\Paren{\frac{\mathbb{E}\lVert \qv_T - \gv_T \rVert_2}{\sqrt{T}}}.
\end{equation*}
%
Let $\widehat{\gv}_t = \frac{1}{m}\sum_{i=1}^{m}\widehat{\gv}^i_t$ be the global stochastic subgradient at the PS if there were no quantization constraints.
%
Young's inequality states that for any $\av, \bv \in \Real^n$ and any $u > 0$, we have $\lVert \av + \bv \rVert_2^2 \leq (1 + u)\lVert \av \rVert_2^2 + (1 + u^{-1})\lVert \bv \rVert_2^2$.
%
Using this with $u = 1$, in the presence of quantization, the variance can be upper bounded as,
%
\begin{align*}
    \mathbb{E}\lVert \qv_t - \gv_t \rVert_2^2 &\leq 2\mathbb{E}\lVert \qv_t - \widehat{\gv}_t \rVert_2^2 + 2\mathbb{E}\lVert \widehat{\gv}_t - \gv_t \rVert_2^2 \\
    &\leq 2\mathbb{E}\left\lVert \frac{1}{m}\sum_{i=1}^{m}\Paren{\qv^i_t - \widehat{\gv}^i_t} \right\rVert_2^2 + 2\mathbb{E}\left\lVert \frac{1}{m}\sum_{i=1}^{m}\Paren{\widehat{\gv}^i_t - \gv^i_t} \right\rVert_2^2 \\
    &\stackrel{(i)}{=} \frac{2}{m^2}\sum_{i=1}^{n}\mathbb{E}\lVert \qv^i_t - \widehat{\gv}^i_t \rVert_2^2
    + \frac{2}{m^2}\sum_{i=1}^{n}\mathbb{E}\left\lVert \widehat{\gv}^i_t - \gv^i_t \right\rVert_2^2 \stackrel{(ii)}{\leq} \frac{2}{m}\Paren{\sigma_q^2 + \sigma_o^2}.
\end{align*}
%
Here, the equality $(i)$ follows from the fact that we assume the gradient quantization and the stochastic subgradient oracle at each node to be independent of each other (so that the cross-terms vanish).
%
Also, $\sigma_o > 0$ is a known upper bound on the inherent stochasticity of the subgradient oracle, i.e. $\mathbb{E}\lVert \widehat{\gv}^i_t - \gv^i_t \rVert_2^2 \leq \sigma_o^2$ for all $i \in [m], t = 0, \ldots, T-1$.
%
$\sigma_q^2$ is the additional variance introduced due to quantization at each node, and it depends on the source coding scheme we use at each node.
%
We next consider expressions for $\sigma_q$ for the na\"ive and our proposed quantization strategies.

Consider the \textbf{stochastic uniform quantizer} as defined below.
%
This is similar to the quantizer defined in App. \textcolor{blue}{E} of the main paper (with slight differences in scaling that can be easily accounted for with a more detailed analysis).
%
For an input $\xv \in \Real^n$, let $x(j)$ denote the value of its $j^{th}$ coordinate. 
%
The stochastic uniform quantizer quantizes each coordinate of $\xv$ independently, i.e. the quantizer output is given by $Q(\xv) = [\Qsf(x(1)), \ldots, \Qsf(x(d))]$, where the scalar quantizer $\Qsf(\cdot)$ is defined next.
%
Along each dimension, consider the $2^R$ quantization points $\{u_i\}$, $0 \leq i \leq 2^R - 1$ given by,
%
\begin{equation*}
    u_i \triangleq -\norm{\xv}_{\infty} + i\cdot\frac{2\norm{\xv}_{\infty}}{2^R - 1}
\end{equation*}
%
For any coordinate $j$, $1 \leq j \leq n$, suppose $x(j) \in [u_{r}, u_{r+1})$ for some $r$.
%
Then the output of $\Qsf$ is,
%
\begin{align*}
    \Qsf(x(j)) = 
    \begin{cases}
    u_{r+1} \text{ with probability } \frac{x(j) - u_r}{u_{r+1} - u_r} \\
    u_r \hspace{4mm}\text{ otherwise.} 
    \end{cases}
\end{align*}
%
Note that this quantizer satisfies $\mathbb{E}[\xv] = \xv$ for any $\xv \in \Real^n$, and its variance can be upper bounded as,
%
\begin{align*}
    \mathbb{E}\Br{\norm{Q(\xv) - \xv}_2^2} &= \sum_{j = 1}^{n}\mathbb{E}\Br{\Paren{\Qsf(x(j)) - x(j)}^2} \\
    &= \sum_{j = 1}^{n}\Paren{u_{r+1} - x(j)}^2\cdot \frac{x(j) - u_r}{u_{r+1} - u_r} + \Paren{x(j) - u(r)}^2\cdot \frac{u_{r+1} - x(j)}{u_{r+1} - u_r} \\
    &= \sum_{j = 1}^{n}\Paren{u_{r+1} - x(j)}\Paren{x(j) - u_r} \leq \sum_{j=1}^{n}\frac{\Paren{u_{r+1} - u_r}^2}{4} = \frac{n\norm{\xv}_{\infty}^2}{\Paren{2^R - 1}^2}.
\end{align*}
%
As can be seen from above, for the na\"ive stochastic uniform quantizer with an input that satisfies $\lVert \cdot \rVert_2 \leq B$, this gives us a worst-case upper bound on the quantization variance as:
%
\begin{equation*}
    \sigma_{q, naive}^2 \triangleq \frac{nB^2}{(2^R - 1)^2},
\end{equation*}
%
since $\lVert \xv \rVert_{\infty} = \lVert \xv \rVert_2 \leq B$ gives the worst-case input $\xv$.
%
With an application of Jensen's inequality, this gives us a convergence rate of \textbf{DQ-PSGD} that scales as follows:
%
\begin{equation}
    \sup_{f, \Ocal} \mathbb{E}f(\xv_T) - f(\xv^*) \lesssim O\Paren{\frac{1}{\sqrt{mT}}\cdot\frac{\sqrt{n}B}{\Paren{2^R - 1}}},
\end{equation}
%
We ignore the dependence on $\sigma_o^2$ in the above expression since it does not depend on the quantizer design.
%
The dependence of this convergence rate on the dimension $n$ can be detrimental for high dimensional problems.
%
We show that using our proposed source coding schemes, we can get rid of this linear dependence on $n$.

Our proposed algorithm \textbf{DSC}, first computes the democratic embedding, quantizes it the embedding using a stochastic uniform quantizer in the encoding process, and finally computes the inverse embedding while decoding.
%
Democratic embeddings have the property that $\lVert \xv_d \rVert_{\infty} \leq \frac{K_u}{\sqrt{n}}$ for some constant $K_u$ with high probability.
%
Substituting this $\ell_{\infty}$-norm in the expression of the quantizer variance yields,
%
\begin{equation*}
    \mathbb{E}\Br{\lVert Q(\xv_d) - \xv_d \rVert_2^2} = \frac{n\lVert \xv_d \rVert_{\infty}^2}{(2^R - 1)^2} \leq \frac{K_u^2}{(2^R - 1)^2}.
\end{equation*}
%
This gives us a dimension-independent bound on the quantizer variance, and consequently on the convergence rate, which now scales as,
%
\begin{equation}
    \sup_{f, \Ocal} \mathbb{E}f(\xv_{T}) - f(\xv^*) \lesssim O\Paren{\frac{1}{\sqrt{mT}}\cdot\frac{K_u}{(2^R - 1)}}.
\end{equation}
%
Similarly, if we use near-democratic embeddings $(\xv_{nd})$ (using a random orthonormal or a randomized Hadamard frame), we have, with high probability, $\lVert \xv_{nd} \rVert_{\infty} \leq O\left(\frac{\log n}{n}\right)$, which yields an upper bound on the convergence rate of \textbf{DQ-PSGD} as,
%
\begin{equation}
    \sup_{f, \Ocal}\mathbb{E}f(\xv_T) - f(\xv^*) \lesssim O\Paren{\frac{1}{\sqrt{mT}}\cdot \frac{\sqrt{\log n}}{(2^R - 1)}},
\end{equation}
%
giving us a mild logarithmic scaling instead of a (worse) $O(\sqrt{n})$ scaling as in the na\"ive case.
%
In what follows, we do some additional simulations for training models over multiple workers.
%
We consider a setup of multi-worker linear regression, wherein our goal is the solve the optimization problem:
%
\begin{equation}
    \xv^* \equiv \argminimize_{\xv \in \Real^n} \frac{1}{m}\sum_{i=1}^{m}\Paren{\frac{1}{s}\sum_{j=1}^{s}\frac{1}{2}\Paren{b_{ij} - \av_{ij}^{\top}\xv}^2}.
\end{equation}
%
Here, $\{\av_{ij}, b_{ij}\}_{j=1}^{s}$ denotes the local dataset at node $i$, for $i \in [m]$.
%
The dimension of the problem is $n = 30$, $m = 10$ workers with each worker having $s = 10$ local datapoints.
%
The learning rate of SGD is taken to be a constant $\alpha = 0.1$.
%
The dataset is generated synthetically from a model $\xv^*$ according to the planted model $\bv = \Av\xv^*$, where $\bv \in \Real^{ms}$ is the regression output and the rows of $\Av \in \Real^{ms \times n}$, i.e. $\{\av^{\top}_1, \ldots, \av^{\top}_{ms}\}$ are the data vectors.
%
In Fig. \ref{fig:dqpsgd_model_data_gaussian3}, the entries of $\xv^*$ and $\Av$ are generated by first drawing i.i.d. samples from $\Ncal(0,1)$ and raising them to the third power, i.e. $\Ncal(0,1)^3$.
%
All plots are averaged over $5$ independent trials and the synthetic data is also generated independently each time.
%
The two figures depict plots for two different values of $R = 0.5$ and $R = 1$ bits per dimension per user.
%
We also simulate for another heavy-tailed distribution of $\xv^* \sim \text{Student-t (df = 1)}$ and the entries of the data matrix $\Av \stackrel{iid}{\sim} \Ncal(0,1)$ in Fig. \ref{fig:dqpsgd_model_data_student-t}.

\begin{figure}[h!]
\begin{subfigure}[h!]{.48\textwidth}
    \centering
    \includegraphics[width=\linewidth]{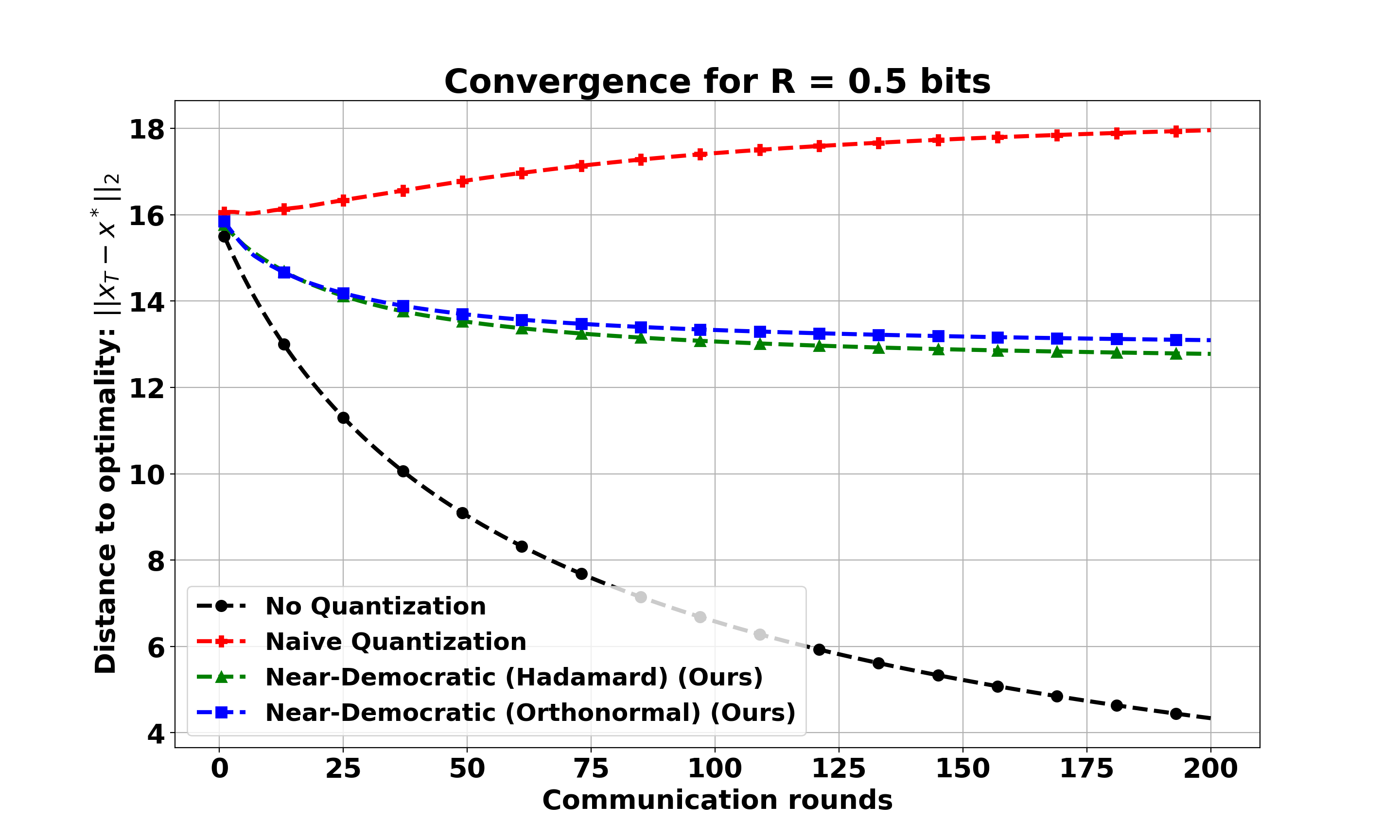}
  \end{subfigure}
  \hfill
  \begin{subfigure}[h!]{.48\textwidth}
    \centering
    \includegraphics[width=\linewidth]{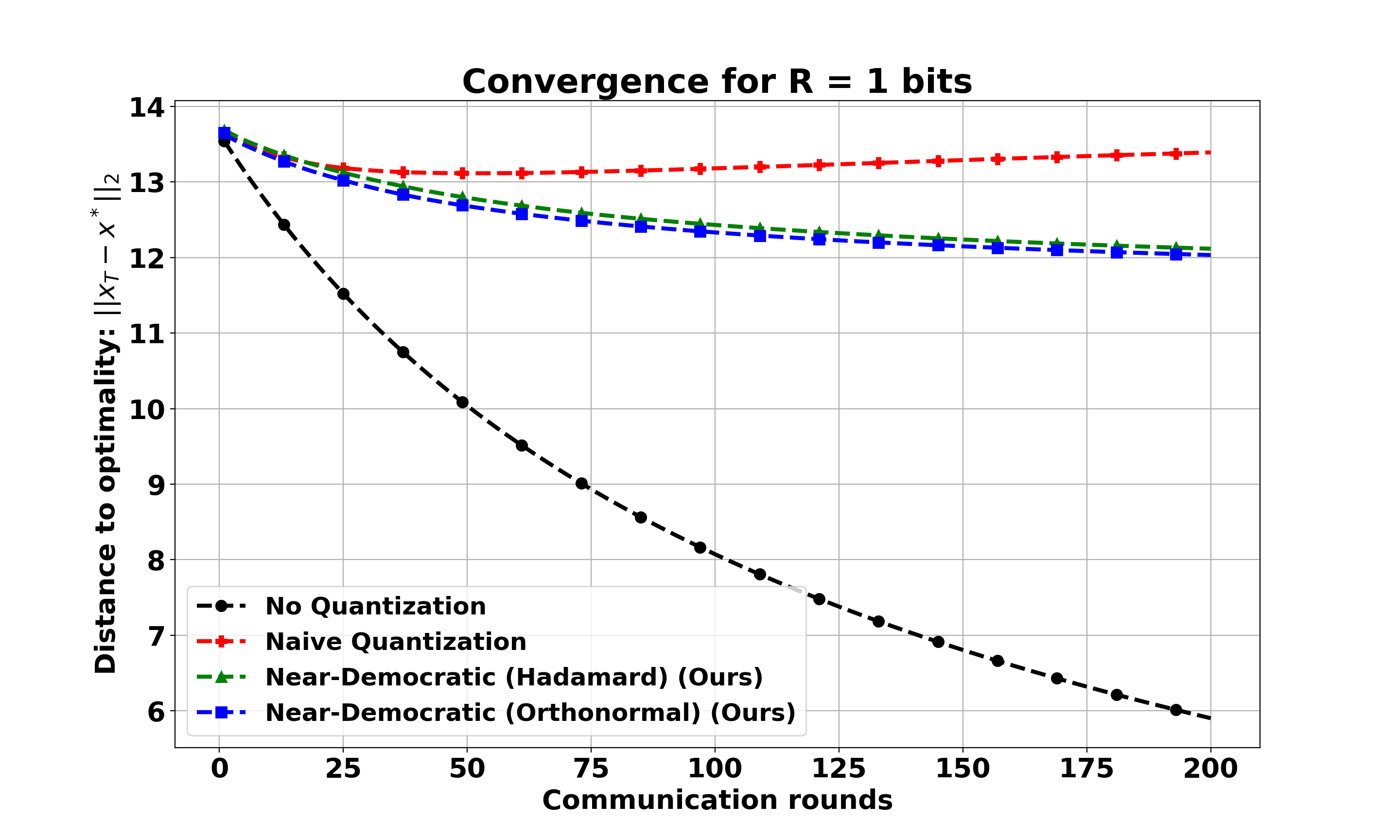}
  \end{subfigure}
  \caption{Multi-worker linear regression with $\xv^* \stackrel{iid}{\sim} \Ncal(0,1)^3$ and $\Av \stackrel{iid}{\sim} \Ncal(0,1)^3$.}
  \label{fig:dqpsgd_model_data_gaussian3}
\end{figure}

\begin{figure}[h!]
\begin{subfigure}[h!]{.48\textwidth}
    \centering
    \includegraphics[width=\linewidth]{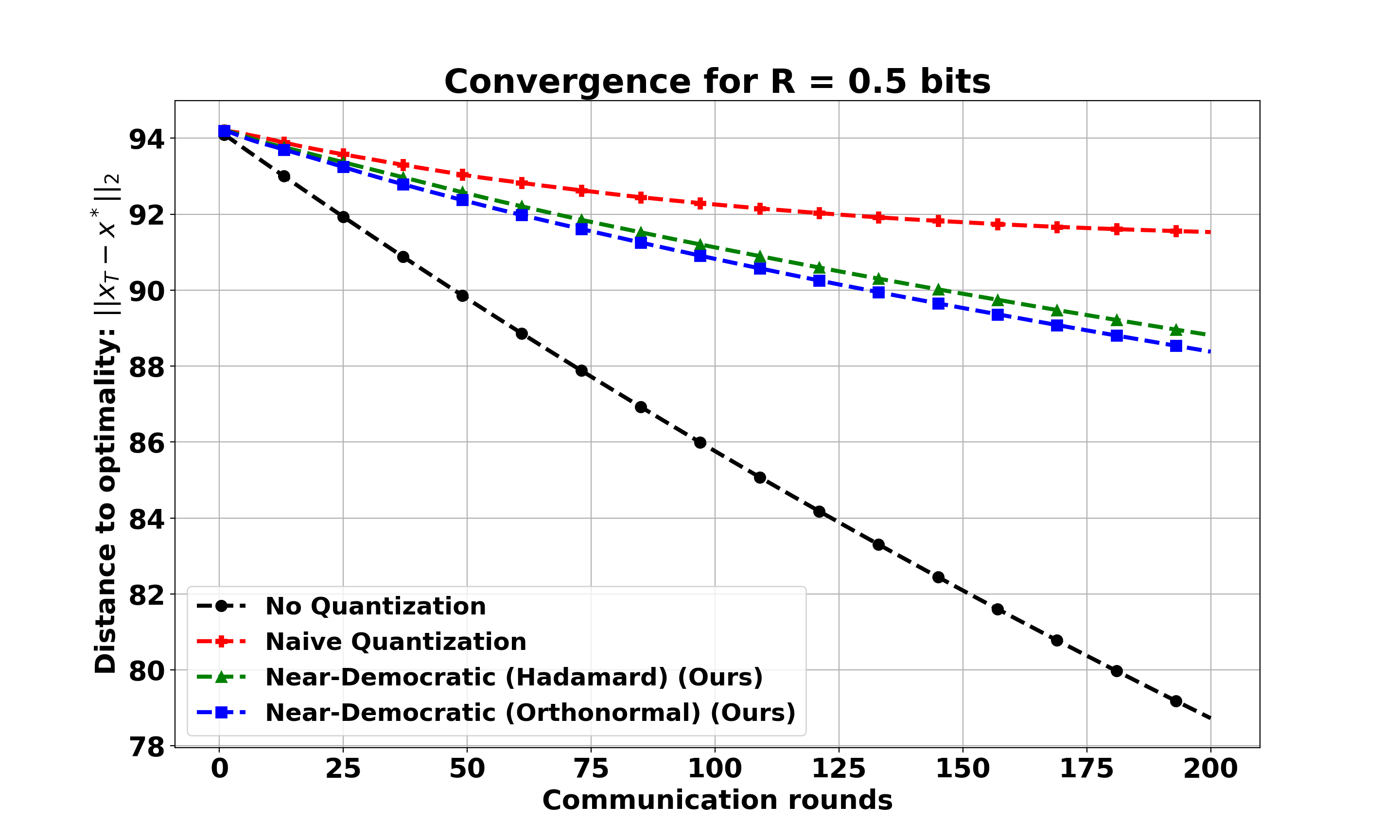}
  \end{subfigure}
  \hfill
  \begin{subfigure}[h!]{.48\textwidth}
    \centering
    \includegraphics[width=\linewidth]{figures/LinReg_Student-t_R1.png}
  \end{subfigure}
  \caption{Multi-worker linear regression with $\xv^* \stackrel{iid}{\sim} Student-t$ and $\Av \stackrel{iid}{\sim} \Ncal(0,1)$.}
  \label{fig:dqpsgd_model_data_student-t}
\end{figure}

In addition to this, it is also possible to consider an extension of \textbf{DGD-DEF} to optimizing smooth and strongly convex objective functions over multiple workers and obtain faster convergence rates, by considering the extension in \cite[Sec. 5]{lin2020achieving} in which each worker does error feedback independently.
%
However, a complete characterization and extension of the idea of error feedback to multiple workers (in which the error feedback term compensates for errors from all workers) is still an open problem (as noted in \cite[Sec. V-C]{lin2020achieving}) and is beyond the current scope of our work.

We also train a simple neural network in the Federated Learning setup to do multi-class classification on the CIFAR-10 \cite{cifar10} image classification dataset that contains $50,000$ training and $10,000$ test images from $10$ classes.
%
We consider $m = 10$ workers and the entire dataset is distributed across these workers in a non-i.i.d. fashion, so that each worker has images from at most $2$ out of the $10$ classes.

Each worker locally trains a simple CNN model described next.
%
The CNN has a \textit{Convolution $\to$ Batch Normalization} layer, followed by a \textit{$2 \times 2$ max pooling layer}, followed by another \textit{Convolution $\to$ Batch Normalization} layer.
%
This is followed by three fully-connected layers.
%
We use \textbf{SGD} with global momentum at the parameter server to the train the model.
%
The learning rate is taken to be $0.05$, and the momentum parameter as $0.9$.
%
A coefficient of $1e-4$ is considered for $\ell_2$-regularization (or weight-decay regularization) to prevent overfitting.
%
The batch-size is taken to be $64$.
%
All simulations were carried out on NVIDIA GeForce GTX $1080$ Ti with a CUDA Version $11.4$.

\begin{figure}[h!]
    \centering
    \includegraphics[width=0.7\linewidth]{figures/simplecnn_cifar10_2.png}
    \caption{Multi-worker Distributed Optimization over CIFAR-10}
    \label{fig:simplecnn_cifar10}
\end{figure}

The neural network simulation results are presented in Fig. \ref{fig:simplecnn_cifar10}.
%
Here, we plot the test-accuracy versus the number of communication rounds between the workers and the PS.
%
As can be seen from the plot, with a bit-budget of $R = 4$ bits per dimension per worker, our proposed near-democratic source coding (NDSC) scheme with randomized Hadamard frame outperforms na\"ive quantization with the same bit-budget ($R = 4$) which fails to even converge.
%
As a matter of fact, na\"ive quantization requires a higher-budget of $R = 6$ bits per dimension per worker to achieve a performance comparable to that of near-democratic source coding (NDSC).

\section{Comparison of Different Classes of Randomized Frames for Computing Democratic Embeddings}
\label{app:frames_comparison}

The \textit{Uncertainty Principle (UP)} for random matrices (\textbf{Definition} \textcolor{blue}{$2$} in the main paper), is closely related to the \textbf{Restricted Isometry Property (RIP)} for random matrices introduced in \cite{candes_tao_RIP}.
%
\begin{definition}(\textbf{Restricted Isometry Property (RIP)})
\label{def:RIP}
A matrix $\Sv \in \Real^{n \times N}$ is said to satisfy the \textit{Restricted Isometry Property of order $k$} with constant $\epsilon > 0$, if for every $k$-sparse vector $\xv \in \Real^N$ (i.e. a vector with at most $k$ non-zeros entries), the following holds true:
\begin{equation}
    (1 - \epsilon)\norm{\xv}_2^2 \leq \norm{\Sv \xv}_2^2 \leq (1 + \epsilon)\norm{\xv}_2^2
\end{equation}
\end{definition}
%
Comparing Definition \ref{def:RIP} above with the definition of UP (Definition \textcolor{blue}{$2$} in the main paper), it can be seen that any matrix that satisfies RIP with parameters $(k, \epsilon)$, satisfies UP with parameters $\left(\sqrt{1 + \epsilon}, \frac{k}{N}\right)$.
%
In other words, randomly constructed frames which satisfy the UP with high probability, are strongly related to sensing matrices with small restricted isometry constants.
%
Since the \textit{upper Kashin constant} $K_u$ is given by $K_u = \frac{\eta}{\left(A - \eta\sqrt{B}\right)\sqrt{\delta}}$, in order to achieve embeddings having small $\ell_{\infty}$ norm, one is therefore interested in finding frames satisfying the UP with small $\eta$ and large $\delta$.

\subsection{Sub-Gaussian Random Matrices}
\label{subsec:subGassian_random_matrices}

\begin{definition}(\textbf{Sub-Gaussian random variable})
A random variable $X$ is called sub-Gaussian with parameter $\beta$ if
\begin{equation}
    \Pr\left[|X| > u\right] \leq \exp\left(1 - \frac{u^2}{\beta^2}\right) \hspace{2mm} \text{for all} \hspace{2mm} u > 0.
\end{equation}
\end{definition}

From \cite{lyubarskii_2010}, we have the following result regarding the uncertainty principle parameters of random sub-Gaussian matrices:

\begin{theorem}\textbf{\cite[Theorem 5]{lyubarskii_2010}: (Uncertainty principle for sub-Gaussian matrices)}
Let the entries of $\Stv \in \Real^{n \times N}$ be i.i.d. zero-mean sub-Gaussian random variables with parameter $\beta$.
%
Let $\lambda = N/n$ for some $\lambda \geq 2$.
%
Then with probability at least $1 - \lambda^{-n}$, the random matrix $\Sv = \frac{1}{\sqrt{N}}\Stv$ satisfies the UP with parameters:
\begin{equation}
    \eta = C_0 \beta\sqrt{\frac{\log\left(\lambda\right)}{\lambda}}, \hspace{2mm} \text{and} \hspace{2mm} \delta = \frac{C_1}{\lambda},
\end{equation}
where, $C_0, C_1 > 0$ are absolute constants.
\end{theorem}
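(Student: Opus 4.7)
The plan is to verify the UP inequality $\|\Sv\xv\|_2 \le \eta\|\xv\|_2$ uniformly over all $k$-sparse unit vectors, where $k = \lfloor \delta N\rfloor = \lfloor C_1 n\rfloor$, by the standard three-step recipe: pointwise concentration, $\epsilon$-net on each support, union bound over supports. This is the covering-number / RIP style argument; the only delicate part is calibrating $\eta$ and $\delta$ so that the entropy of $k$-sparse unit vectors is dominated by the concentration exponent, and so that the final $\eta$ comes out as $C_0\beta\sqrt{\log\lambda/\lambda}$ rather than, say, $C\beta\sqrt{1/\lambda}$ or $C\beta\sqrt{\log N/\lambda}$.

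First I would establish the pointwise bound. Fix a unit vector $\xv\in\Real^N$ with $\|\xv\|_0\le k$. Each row of $\Stv$ produces an entry $(\Stv\xv)_i=\sum_j\Stv_{ij}x_j$, which is a linear combination of independent mean-zero sub-Gaussian variables with parameter $\beta$ and so is itself sub-Gaussian with parameter $O(\beta)$ (independent of $\xv$ because $\|\xv\|_2=1$). Hence $(\Stv\xv)_i^2$ is sub-exponential with mean $O(\beta^2)$ and Orlicz norm $O(\beta^2)$, and by Bernstein's inequality
\begin{equation*}
\Pr\!\left[\|\Sv\xv\|_2^2 > \tfrac{c\beta^2}{\lambda} + u\right] \le 2\exp\!\left(-c'n\min\!\left(\tfrac{\lambda^2 u^2}{\beta^4},\tfrac{\lambda u}{\beta^2}\right)\right).
\end{equation*}
Taking $u$ at the scale $\eta^2 = C_0^2\beta^2\log\lambda/\lambda$ (which dominates the mean $c\beta^2/\lambda$ once $C_0$ is a large enough absolute constant), the linear branch of Bernstein governs the bound and gives $\Pr[\|\Sv\xv\|_2>\eta]\le 2\exp(-c''C_0^2\, n\log\lambda)$.

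Next I would lift the pointwise bound to a uniform bound over $k$-sparse vectors by an $\epsilon$-net argument. For a fixed support $T\subseteq[N]$ with $|T|=k$, a standard volume bound provides a $\tfrac14$-net $\mathcal N_T$ of the unit sphere $S_T\subset\Real^k$ with $|\mathcal N_T|\le 9^k$, and then $\sup_{\xv\in S_T}\|\Sv\xv\|_2 \le 2\max_{\xv\in\mathcal N_T}\|\Sv\xv\|_2$. Union-bounding over $\mathcal N_T$ and then over the $\binom{N}{k}\le(eN/k)^k$ possible supports gives failure probability at most
\begin{equation*}
2\exp\!\left(k\log(9eN/k) - c''C_0^2\,n\log\lambda\right) \le 2\exp\!\left(C_1 n\log(9e\lambda/C_1) - c''C_0^2\,n\log\lambda\right),
\end{equation*}
after substituting $k=C_1 n$ and $N/k=\lambda/C_1$. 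Choosing $C_1$ sufficiently small and $C_0$ sufficiently large (both absolute constants), the exponent is at most $-n\log\lambda$, and the probability is at most $\lambda^{-n}$ as required.

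The main obstacle I expect is the interplay between the three scales $k\log(N/k)$ (entropy), $n\log\lambda$ (target failure exponent), and $n\eta^2\lambda/\beta^2$ (concentration exponent from the linear branch of Bernstein). The choice $k=\Theta(n)$ forces the entropy to $\Theta(n\log\lambda)$, so we must win a full factor of $\log\lambda$ inside the concentration term, which dictates $\eta^2\propto\beta^2\log\lambda/\lambda$ and not merely $\eta^2\propto\beta^2/\lambda$. One also must verify that $\eta$ lies in the linear-tail regime of Bernstein (equivalently, that $u\lambda/\beta^2\gtrsim 1$), which is automatic at this scale for $\lambda\ge 2$. Everything else is routine constant tracking.
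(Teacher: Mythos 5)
The paper supplies no proof of this theorem --- it is quoted verbatim as Theorem~5 of Lyubarskii and Vershynin \cite{lyubarskii_2010} and then used as a black box to compute the upper Kashin constant for sub-Gaussian frames --- so there is no internal argument to compare against. Your reconstruction via the standard covering-number recipe (coordinatewise sub-Gaussianity of each $(\Stv\xv)_i$, Bernstein concentration of $\|\Sv\xv\|_2^2$ around $O(\beta^2/\lambda)$ in the linear-tail regime, a $\tfrac14$-net on the unit sphere of each $k$-dimensional support, then a union bound over $\binom{N}{k}$ supports) is the canonical RIP-style derivation of a uniform uncertainty-principle bound, and your calibration of the three competing scales correctly explains why $\eta^2\asymp\beta^2\log\lambda/\lambda$ is forced once $k=\Theta(n)$. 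The one step worth making explicit in a final write-up is the very last inequality: the entropy $k\log(9eN/k)=C_1 n\log(9e/C_1)+C_1 n\log\lambda$ contains a $\lambda$-independent constant term, and dominating it by $c''C_0^2\,n\log\lambda$ uniformly over all admissible $\lambda$ requires invoking $\lambda\ge 2$ so that $\log(9e/C_1)\le c\log\lambda$, after which the choice of absolute constants $C_0$ (large) and $C_1$ (small) closes the argument; you implicitly use $\lambda\ge 2$ only to justify the linear branch of Bernstein, but it is needed at this later step as well.
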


\textbf{Note}: Random sub-Gaussian matrices are \textit{NOT} tight frames.
%
However, \cite[Lemma 4.8]{lyubarskii_2010} and \cite[Corollary 4.9]{lyubarskii_2010} show that $\Sv$ (as constructed above) is an approximate Parseval frame with high probability, i.e. $\Sv$ has frame bounds $A = 1 - \xi$ and $B = 1 + \xi$ for some small $\xi \in (0,1)$.
%
Substituting in the values to obtain the upper Kashin constant, we have,
\begin{align}
    K_u = \frac{\eta}{\left(A - \eta\sqrt{B}\right)\sqrt{\delta}} &= C_0\beta\sqrt{\frac{\log(\lambda)}{\lambda}}\cdot\sqrt{\frac{\lambda}{C_1}}\cdot \frac{1}{1 - \xi - \left(C_0\beta\sqrt{\frac{\log(\lambda)}{\lambda}}\right)\sqrt{1 + \xi}} \nonumber \\
    &= \frac{C_0\beta}{\sqrt{C_1}}\cdot\frac{\log(\lambda)}{1 - \xi - \left(C_0\beta\sqrt{\frac{\log(\lambda)}{\lambda}}\right)\sqrt{1 + \xi}}
\end{align}

Note that the upper Kashin constant $K_u$ has a logarithmic dependence on the aspect ratio of the frame, $\lambda$.
%
Since $\lambda$ is essentially treated as a constant, $K_u$ is a constant.
%
However, the $32$-bit floating point entries of a sub-Gaussian matrix need to be stored in the memory and furthermore, multiplying the dense matrix $\Sv \in \Real^{n \times N}$ by a vector $\xv \in \Real^N$ can take up to a worst case of $O\left(nN\right)$ time.
%
Hence, choosing sub-Gaussian frames can be intensively computation as well as memory demanding.

\subsection{Random Orthonormal Matrices}
\label{subsec:random_orthonormal_matrices}

Random orthonormal matrices refer to $n \times N$ matrices whose rows are orthonormal. 
%
Such matrices can be obtained by randomly selecting the first $n$ rows of a randomly generated $N \times N$ orthonormal matrix.
%
We state a result from \cite{lyubarskii_2010}.
%
If $\Scal(N)$ denotes the space of all orthogonal $N \times N$ matrices with the normalized Haar measure, then $\Scal(n \times N) = \left\{ \Pv_n\Vv; \Vv \in \Scal(N)\right\}$, where $\Pv_n:\Real^N \to \Real^n$ is the orthogonal projection on the first $n$ coordinates, denotes the space of all $n \times N$ orthonormal matrices.
%
The probability measure on $\Scal(n \times N)$ is induced by the Haar measure on $\Scal(N)$.

\begin{theorem}\textbf{\cite[Theorem 4.1]{lyubarskii_2010}: (UP for Random Orthonormal Matrices)}
Let $\mu > 0$ and $N = (1 + \mu)n$.
%
Then with probability at least $1 - 2\exp\left(-c\mu^2n\right)$, a random orthonormal $n \times N$ matrix $\Sv$ satisfies the uncertainty principle with parameters
\begin{equation}
    \eta = 1 - \frac{\mu}{4} \hspace{2mm} \text{and}, \hspace{2mm} \delta = \frac{c\mu^2}{\log(1/\mu)}
\end{equation}
where $c > 0$ is an absolute constant.
\end{theorem}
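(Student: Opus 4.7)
The plan is to verify the UP by controlling the operator norm of every $n \times k$ column-submatrix $\Sv_T$ of $\Sv$ (with $|T| = k = \delta N$) and union-bounding over supports. The bound $\norm{\Sv \xv}_2 \leq \eta \norm{\xv}_2$ for every $k$-sparse $\xv$ is equivalent to $\max_{|T| = k} \norm{\Sv_T}_{\mathrm{op}} \leq \eta$. Writing $\Sv = \Pv_n \Vv$ with $\Vv$ Haar-distributed on $\Scal(N)$, right-invariance of the Haar measure under coordinate permutations makes the distribution of $\Sv_T$ identical for every $|T| = k$, so it suffices to fix $T$ and analyse a single $n \times k$ sub-block of a Haar orthogonal matrix on $\Real^N$.

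The operator norm of this sub-block would then be bounded by combining concentration with an expectation estimate. The map $\Vv \mapsto \norm{\Sv_T(\Vv)}_{\mathrm{op}}$ is $1$-Lipschitz in the Frobenius metric, and Haar measure on $\Scal(N)$ satisfies a Gaussian concentration inequality at scale $1/\sqrt{N}$ (Gromov--Milman), which yields $\Pr[\norm{\Sv_T}_{\mathrm{op}} \geq \Expect\norm{\Sv_T}_{\mathrm{op}} + t] \leq 2\exp(-c_1 N t^2)$. The expectation itself is the main technical obstacle: because the entries of a Haar block are not independent, the Marchenko--Pastur / Gordon estimate does not apply directly. I would handle this by an $\epsilon$-net argument---take $1/4$-nets of sizes $9^n$ and $9^k$ on the unit spheres in $\Real^n$ and $\Real^k$, convert $\norm{\Sv_T}_{\mathrm{op}}$ into $\sup_{\uv,\vv}\langle \uv, \Sv_T \vv\rangle$, and use the fact that for any fixed $\uv, \vv$ the scalar $\langle \uv, \Sv_T\vv\rangle$ is distributed as a single coordinate of a uniform unit vector in $\Real^N$---to obtain $\Expect\norm{\Sv_T}_{\mathrm{op}} \leq \sqrt{n/N} + \sqrt{k/N}$ up to absolute constants. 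A union bound over the $\binom{N}{k} \leq (e/\delta)^k$ supports then controls the overall failure probability by $2(e/\delta)^k \exp(-c_1 N t^2)$.

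Finally I would tune the parameters. From $(1+\mu)^{-1/2} = 1 - \mu/2 + O(\mu^2)$ one gets $\sqrt{n/N} \leq 1 - \mu/4 - \mu/8$ for sufficiently small $\mu$; choosing $t = \mu/16$ and $\sqrt{\delta} \leq \mu/16$ then forces $\sqrt{n/N} + \sqrt{\delta} + t \leq 1 - \mu/4 = \eta$. Absorbing the union-bound factor into $2\exp(-c\mu^2 n)$ demands $k \log(e/\delta) \lesssim N \mu^2$, i.e.\ $\delta \log(1/\delta) \lesssim \mu^2$, which is satisfied by $\delta = c\mu^2/\log(1/\mu)$ for a sufficiently small absolute constant $c$ since $\log(1/\delta) = \Theta(\log(1/\mu))$ in this range. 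This closes the proof and exhibits where the logarithmic factor in $\delta$ originates, namely the cost of the union bound over all sparsity patterns.
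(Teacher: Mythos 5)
The paper does not reprove this result---it cites it verbatim as Theorem~4.1 of Lyubarskii--Vershynin \cite{lyubarskii_2010}---so there is no in-paper proof to compare against; you are essentially reconstructing the cited argument. Your overall scaffolding (reduce the UP to $\max_{|T|=k}\norm{\Sv_T}_{\mathrm{op}}\le\eta$, exploit right-invariance of the Haar measure to fix $T$, use Gromov--Milman concentration on $\Scal(N)$ for the Lipschitz map $\Vv\mapsto\norm{\Sv_T}_{\mathrm{op}}$, union bound over $\binom{N}{k}$ supports, and then tune $\delta$ so that $\delta\log(1/\delta)\lesssim\mu^2$) is the right framework and matches how such results are proved.

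The gap is in the expectation estimate, and it is fatal as stated. You yourself write that you will get $\Expect\norm{\Sv_T}_{\mathrm{op}}\le\sqrt{n/N}+\sqrt{k/N}$ only ``up to absolute constants'' from the $\epsilon$-net argument, but in the parameter tuning you then use this bound with constant exactly $1$. Those two steps are not compatible here. Because $\sqrt{n/N}=1/\sqrt{1+\mu}=1-\mu/2+O(\mu^2)$ and $\eta=1-\mu/4$, the slack available is only $O(\mu)$; any multiplicative constant $C>1$ in front of $\sqrt{n/N}$ already pushes the bound above $1>\eta$ for small $\mu$. And a net argument cannot avoid such a constant: a $1/4$-net of $S^{n-1}\times S^{k-1}$ has $\sim 9^{n+k}$ points, the single-pair tail $\Pr[|\langle\uv,\Sv_T\vv\rangle|>t]\le 2e^{-Nt^2/2}$ then forces $t\gtrsim\sqrt{2\log 9}\sqrt{(n+k)/N}$, and after the $1/(1-2\epsilon)$ conversion factor the resulting bound on $\norm{\Sv_T}_{\mathrm{op}}$ is at best $\approx 3\sqrt{(n+k)/N}$ even after optimizing $\epsilon$---nowhere near $1-\mu/4$. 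To get the sharp constant $1$ one must replace the net argument with a comparison to genuine Gaussian matrices: use the standard coupling of a uniformly random $k$-frame with a Gaussian matrix, namely $\Vv\Pv_T^{\top}\overset{d}{=}\Gv(\Gv^{\top}\Gv)^{-1/2}$ with $\Gv\in\Real^{N\times k}$ i.i.d.\ standard Gaussian, so that $\norm{\Sv_T}_{\mathrm{op}}\le\sigma_{\max}(\Pv_n\Gv)/\sigma_{\min}(\Gv)$, and then invoke the Davidson--Szarek/Gordon bounds $\sigma_{\max}(\Pv_n\Gv)\le\sqrt{n}+\sqrt{k}+s$ and $\sigma_{\min}(\Gv)\ge\sqrt{N}-\sqrt{k}-s$ which \emph{do} have leading constant $1$ and the needed sub-Gaussian deviation $e^{-s^2/2}$. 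Everything else in your outline (the union bound, the identification of $\delta\log(1/\delta)\lesssim\mu^2$, and hence $\delta=c\mu^2/\log(1/\mu)$) then goes through.
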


Note that random orthonormal matrices are Parseval frames, i.e. $A = B = 1$.
%
So, the upper Kashin bound is given by:
\begin{align}
\label{eq:Ku_random_orthonormal}
    K_u = \frac{1 - \frac{\mu}{4}}{\mu/4}\cdot\frac{\sqrt{\log(1/\mu)}}{\mu\sqrt{c}} = \frac{4}{\mu^2\sqrt{c}}\left(1 - \frac{\mu}{4}\right)\sqrt{\log\left(\frac{1}{\mu}\right)}
\end{align}

This once again shows that $K_u$ is a constant.

\subsection{Random Matrices with Fast Transforms}
\label{subsec:random_matrices_fast_transforms}

\textit{Matrices with fast transforms} refer to structured matrices like Fourier transform matrix, etc. for which fast algorithms exist to compute the forward transform efficiently.
%
To analyze the RIP properties of our proposed randomized Hadamard matrix construction, $\Sv = \Pv \Dv \Hv \in \Real^{n \times N}$, we recall the following theorem from \cite{haviv_reged}.

\begin{theorem}\textbf{\cite[Theorem 4.5]{haviv_reged}}
\label{thm:fast_matrices_RIP}
For sufficiently large $N$ and $k$, a unitary matrix $\Mv \in \Real^{N \times N}$ with entries of absolute value $O(1/\sqrt{N})$, and for sufficiently small $\epsilon > 0$, the following holds:
%
For some $n = O\left(\log^2(1/\epsilon)\epsilon^{-2}\cdot k \cdot \log^2\left(k / \epsilon\right)\cdot \log N\right)$, let $\Stv \in \Real^{n \times N}$ be a matrix whose $n$ rows are chosen uniformly at random and independently from the rows of $\Mv$, multiplied by $\sqrt{N/n}$.
%
Then, with probability $1 - 2^{\Omega\left(\log N \cdot \log\left(k/\epsilon\right)\right)}$, the matrix $\Stv$ satisfies the restricted isometry property of order $k$ with constant $\epsilon$.
\end{theorem}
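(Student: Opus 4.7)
The plan is to establish the RIP condition of order $k$ for the randomly subsampled matrix $\Stv$ via a support-by-support analysis combined with a matrix concentration inequality, followed by a union bound over sparse supports.

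First I would fix a support set $T \subset [N]$ with $|T| = k$ and reduce the RIP inequality to a concentration statement for the Gram matrix $\Stv_T^\top \Stv_T$ of the restricted matrix, where $\Stv_T \in \Real^{n \times k}$ denotes the submatrix of $\Stv$ formed by the columns indexed by $T$. The RIP condition with parameter $\epsilon$ on this support is exactly $\|\Stv_T^\top \Stv_T - \Iv_k\| \leq \epsilon$ in operator norm. Writing row $i$ of $\Stv$ as $\sqrt{N/n}\,\Mv_{j_i,:}$, where $j_i$ is drawn uniformly and independently from $[N]$, gives
\begin{equation}
\Stv_T^\top \Stv_T = \frac{N}{n}\sum_{i=1}^n \left(\Mv_{j_i,T}\right)^\top \Mv_{j_i,T},
\end{equation}
an average of $n$ i.i.d.\ rank-one random matrices in $\Real^{k \times k}$.

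Second, I would compute the expectation and invoke a matrix concentration inequality. Since $\Mv$ is unitary, the columns indexed by $T$ form an orthonormal set, so averaging over a uniformly random row index $j$ gives $\mathbb{E}[(\Mv_{j,T})^\top \Mv_{j,T}] = (1/N)\Iv_k$ and therefore $\mathbb{E}[\Stv_T^\top \Stv_T] = \Iv_k$. Each summand has operator norm at most $(N/n)\|\Mv_{j_i,T}\|_2^2 \leq (N/n)\cdot k \cdot O(1/N) = O(k/n)$ using the uniform entry bound $|\Mv_{ij}| = O(1/\sqrt{N})$. A matrix Bernstein inequality then yields, for each fixed $T$, a tail bound roughly of the form
\begin{equation}
\Pr\left[\|\Stv_T^\top \Stv_T - \Iv_k\| > \epsilon\right] \leq 2k\,\exp\!\left(-\frac{c\,\epsilon^2 n}{k}\right),
\end{equation}
up to logarithmic corrections in $1/\epsilon$. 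A union bound over all $\binom{N}{k} \leq (eN/k)^k$ supports of size $k$, and balancing the resulting failure probability against the desired $2^{-\Omega(\log N \cdot \log(k/\epsilon))}$, recovers the stated sample complexity $n = O\bigl(\epsilon^{-2}\log^2(1/\epsilon)\cdot k \log^2(k/\epsilon)\log N\bigr)$.

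The main obstacle will be extracting the correct polylogarithmic exponents, specifically the $\log^2(1/\epsilon)$ and $\log^2(k/\epsilon)$ factors. A naive $\epsilon$-net argument on the unit sphere of $k$-sparse vectors combined with matrix Bernstein gives the correct polynomial dependence on $k$ and $\epsilon^{-2}$ but loses sharpness in the log factors. Recovering the sharp bound requires the Dudley-type chaining approach of Rudelson and Vershynin, which controls the symmetrized supremum $\mathbb{E}\sup_{\|\xv\|_2 = 1,\ \|\xv\|_0 \leq k}\bigl|\sum_i \varepsilon_i\langle \Mv_{j_i,:}, \xv\rangle^2\bigr|$ via an entropy-integral estimate over the set of $k$-sparse unit vectors in the pseudo-metric induced by $\Mv$. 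Tracking how the entry bound $|\Mv_{ij}| = O(1/\sqrt{N})$ propagates into both the covering-number estimates and the tail behavior of the individual Rademacher quadratic forms is the delicate technical step; the expectation computation and union-bound skeleton are otherwise routine.
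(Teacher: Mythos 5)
This statement is not proved in the paper at all; it is quoted verbatim as Theorem 4.5 of \cite{haviv_reged}, so there is no in-paper argument to compare against. The appendix simply invokes it to bound the upper Kashin constant of subsampled unitary (e.g.\ Hadamard) frames. Any proof you give is therefore being measured against the Haviv--Regev result itself, not something this paper establishes.

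On the substance, your proposal has a genuine gap. The matrix-Bernstein-per-support followed by a union bound over all $\binom{N}{k}$ supports does not give the stated linear-in-$k$ sample complexity. With each summand bounded by $O(k/n)$, matrix Bernstein gives, per support, a failure probability of roughly $\exp(-c\epsilon^2 n/k)$; to survive a union over $\binom{N}{k} \approx (eN/k)^k$ supports you need $\epsilon^2 n/k \gtrsim k\log(N/k)$, i.e.\ $n \gtrsim \epsilon^{-2}k^2\log(N/k)$. This is quadratic in $k$. So your claim that ``a naive $\epsilon$-net argument combined with matrix Bernstein gives the correct polynomial dependence on $k$ and $\epsilon^{-2}$ but loses sharpness in the log factors'' is wrong: the naive route loses an entire factor of $k$, not just logarithms. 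Removing that extra $k$ is precisely the Rudelson--Vershynin contribution, and it cannot be recovered by union bounds; it requires controlling the supremum of the empirical process over all $k$-sparse unit vectors at once via Dudley/generic chaining with covering numbers in the $\|\cdot\|_{X}$ pseudo-metric induced by the rows of $\Mv$ (Maurey's empirical method for the covering estimates). You do name the chaining approach as what is ``really'' needed, but only as a refinement of the log exponents, which mischaracterizes where the hard work lies.

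There is a second, smaller gap: even granting RV-style chaining, you would obtain something like $n = O(\epsilon^{-2} k \log^2 k \log^2 N)$, not the $n = O(\epsilon^{-2}\log^2(1/\epsilon)\, k \log^2(k/\epsilon)\log N)$ quoted in the theorem, which has only a single factor of $\log N$. Haviv and Regev's improvement over Rudelson--Vershynin is obtained by a bootstrapping argument that iteratively upgrades a weak RIP guarantee on the subsampled matrix to a sharper one, trading repeated applications of a one-step concentration lemma for reduced $\log N$ dependence. Plain chaining, even done carefully, does not produce that structure. So to actually reach the cited bound you would need to set up the iterative refinement, not just invoke Rudelson--Vershynin.

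In short: treat this as a black-box citation (as the paper does), or, if you want to prove it, discard the support-wise union bound as the primary engine, build the argument around a chaining bound on $\sup_{\|\xv\|_0\le k,\ \|\xv\|_2=1}\bigl|\|\Stv\xv\|_2^2-1\bigr|$, and then layer the Haviv--Regev bootstrap on top to get the single $\log N$ factor.
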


The upper Kashin constant for such a matrix can be computed approximately as follows:
\begin{equation}
\label{eq:haviv_reged_RIP_1}
    n = O\left(\log^2\left(\frac{1}{\epsilon}\right)\epsilon^{-2} k \log^2\left(\frac{k}{\epsilon}\right)\log N \right) \implies k \log^2\left(\frac{k}{\epsilon}\right) = \Omega\left(\frac{n}{\log^2(1/\epsilon)\epsilon^{-2}\log N}\right).
\end{equation}

Furthermore, since the definition of UP requires $\norm{\Sv\xv}_2 \leq \eta\norm{\xv}_2$, it is apparent that if we scale the entries of the matrix $\Sv$ by a constant factor, the parameter $\eta$ of UP also scales by the same factor.
%
So, in Thm. \ref{thm:fast_matrices_RIP}, if we multiply the matrix $\Stv$ by $\sqrt{\frac{n}{N}}$ to negate the effect of multiplying by $\sqrt{\frac{N}{n}}$ as in the statement of the theorem in order to get $\Sv = \sqrt{\frac{n}{N}}\cdot \Stv$, then the corresponding UP parameters of $\Sv$ are:
\begin{equation}
    \eta = \sqrt{\frac{n}{N}\left(1  + \epsilon\right)} \hspace{2mm} \text{and}, \hspace{2mm} \delta = \frac{k}{N},
\end{equation}
where $k$ is dictated by \eqref{eq:haviv_reged_RIP_1}.
%
\begin{align}
    K_u &= \frac{\sqrt{\frac{n}{N}(1 + \epsilon)}}{1 - \sqrt{\frac{n}{N}(1 + \epsilon)}}\cdot \sqrt{\frac{N}{k}} \nonumber \\
    &= \frac{\sqrt{\frac{n}{N}(1 + \epsilon)}}{1 - \sqrt{\frac{n}{N}(1 + \epsilon)}}\cdot \frac{\sqrt{N}\log\left(k/\epsilon\right)}{\sqrt{k \log^2\left(k / \epsilon\right)}} \nonumber \\
    &= O\left(\frac{\sqrt{\frac{n}{N}(1 + \epsilon)}}{1 - \sqrt{\frac{n}{N}(1 + \epsilon)}} \cdot\log\left(\frac{k}{\epsilon}\right)\cdot \sqrt{\frac{N}{n}} \cdot \log\left(\frac{1}{\epsilon}\right)\cdot \frac{1}{\epsilon}\cdot \sqrt{\log N} \right) \nonumber \\
    &= O\left(\frac{\sqrt{1 + \epsilon}}{1 - \sqrt{\frac{n}{N}(1 + \epsilon)}}\cdot\log\left(\frac{k}{\epsilon}\right)\cdot \log\left(\frac{1}{\epsilon}\right)\cdot \sqrt{\log N}\right) = O\left(\sqrt{\log N}\right).
\end{align}

We see that our proposed randomized Hadamard construction $\Sv = \Pv \Dv \Hv$, which provides computational and memory savings, as seen in the main paper falls in this class of \textit{random matrices with fast transforms}.
%
However, for such matrices, since $K_u$ is not a constant and scales as $O\left(\sqrt{\log N}\right)$, when $n$ is large, computing democratic embedding does not yield much benefit as compared to the \textit{near-democratic embedding}.
%
From Lemma \textcolor{blue}{$3$} of the main paper, the $\ell_{\infty}$-norm of the \textit{near-democratic embedding} $\overline{\xv} \in \Real^N$ of $\yv \in \Real^n$ also scales as $\norm{\overline{\xv}}_{\infty} \leq \frac{O\left(\sqrt{\log N}\right)}{\sqrt{N}}\norm{\yv}_2$.

In other words, for random matrices with fast transforms, asympotically (i.e. for large $n$), near-democratic embeddings have $\ell_{\infty}$-norm as good as democratic embeddings.
%
However, unlike sub-Gaussian and random orthonormal frames, since the entries of $\Sv = \Pv \Dv \Hv $ are $\pm\frac{1}{\sqrt{N}}$, only the signs need to be stored (which amounts to $1$-bit per entry, hence saving memory requirements), and $\overline{\xv} = \Sv^{\top}\yv$ can effectively be obtained in $O(n \log n)$ additions, which is much less computationally demanding.

\section{Proof of Lemma \texorpdfstring{\textcolor{blue}{$\mathbf{4}$}}{four}: Covering efficiency: DSC \& NDSC}
\label{app:proof_covering_efficiency}

Suppose the input to the encoders satisfy $\norm{\yv}_2 \leq r$.
%
From Thm. \textcolor{blue}{$1$}, $\norm{\yv - \Qsf_d(\yv)}_2 \leq 2^{\left(1 - R/\lambda\right)}r$.
%
Since an $R$-bit quantizer has a range of finite cardinality $|\Rcal'| \leq 2^{nR}$, using eq. \textcolor{blue}{$(15)$}, we have $ \rho_d = \left(2^{nR}\right)^{\frac{1}{n}}\frac{2^{(1 - R/\lambda)}K_ur}{r} = 2^{1+R\left(1 - \frac{1}{\lambda}\right)}K_u$.
%
Similarly, the expression for $\rho_{nd}$ follows.

\section{Additional Discussions and Remarks}
\label{sec:additional_discussions_and_remarks}

\subsection{Discussions on Other Related Works}

A succinct comparison of \textbf{DSC} and \textbf{NDSC} with some of the existing quantization/sparsification schemes is provided in Table \textcolor{blue}{I}.
%
We characterize each compression scheme by its bit requirement and the normalized error.
%
For any input vector $\xv \in \Real^n$ with $\norm{\xv}_2 = 1$, let the output of a compression scheme be denoted as $\Ccal(\xv)$.
%
Then the normalized error is defined to be $\norm{\Ccal(\xv) - \xv}_2$.\\

\textbf{Sign Quantization \cite{bernstein_icml_2018, karimireddy_2019_error_feedback}}. Sign quantization strategies quantize each coordinate of the quantizer input $\xv$ to either $+1$ or $-1$ depending on its sign.
%
With proper magnitude scaling, they are effective in quantizing $\xv$ using only $1$-bit per dimension.
%
To get an expression for the normalized error, if we additionally assume that each coordinate of $\xv$ is bounded, i.e. $\norm{\xv}_{\infty} \leq B$, then the error incurred can be upper bounded as $(B-1)^2n \sim O(n)$, where $n$ is the dimension of $\xv$.
%
A similar situation arises with \textbf{QSGD} when more than $1$-bit are used to quantize each coordinate of $\xv$.
%
The situation is similar for \textbf{TernGrad} \cite{terngrad_neurips_2017}, that uses three numerical levels $\{-1, 0, +1\}$ to quantize the gradients for distributed training of neural networks.\\

\textbf{QSGD \cite{alistarh_NeurIPS_2017_qsgd}}. QSGD uses $s$ quantization levels for each coordinate.
%
If we were to use $R$-bits per dimension, then we have $s = 2^R$.
%
\cite[Thm. 3.2]{alistarh_NeurIPS_2017_qsgd} specifies the bit requirement and normalized error.
%
This result provides an upper bound on the \textbf{expected} bit requirement, since it is a \textbf{Variable-length code}.
%
Our work considers \textbf{fixed-length quantizers} wherein the bit-budget per dimension ($R$) is pre-specified as a constraint.
%
Moreover, we study optimality of fixed-length quantization schemes for all values of $R \in (0, \infty)$.
%
In contrast to this, QSGD only considers $R \approx 2.8$.
%
Furthermore, for worst-case inputs $\xv$, the performance of QSGD is far from optimal, whereas our quantizers are designed to guarantee minimax optimal performance.\\

\textbf{Vector Quantized SGD \cite{gandikota2020vqsgd}}. \textit{vqSGD} constructs a convex hull using a finite number of points and quantize their vector to one of the points randomly.
%
They also consider both communication and privacy parameters of their coding scheme simultaneously.
%
From \cite[Table 2]{gandikota2020vqsgd}, the error of \textit{vqSGD} scales as $O(n)$ with the dimension $n$, as compared to $O(2^{-2R/\lambda})$ and $O(2^{-2R/\lambda}\log n)$ for \textbf{DSC}/\textbf{NDSC} respectively.
%
Here, $\lambda$ is a constant.\\

\textbf{Top-$k$ sparsification} \cite{stich_2018_sparsifiedSGD} is a common gradient sparsification technique.
%
%
Here, $k$ denotes the number of retained coordinates.
%
The bit-requirement arises from the need to communicate the values of the $k$ chosen coordinates, and the $\log_2\binom{n}{k}$ comes from the number of the ways $k$ out of $n$ indices can be chosen, which also needs to be conveyed.
%
This source coding scheme requires a computation of $O(k + (n-k)\log_2 k)$ due to the fact that the coordinates must be sorted in order to be able to choose the top $k$ out of them.
%
\textbf{Random sparsification} \cite{wangni_2018} is another simple strategy which randomly chooses the $k$ out of $n$ indices.
%
Gradient sparsification and quantization are complementary to each other in reducing the communication requirements.
%
We show \textbf{DE}/\textbf{NDE} can be used in conjunction with existing gradient sparsification and quantization techniques to further improve the performance of the latter.\\

\textbf{SimQ+ \cite{mayekar_SimQ+}}. SimQ+ addresses the problem of designing quantizers for optimization over general $\ell_p$-spaces.
%
SimQ+ relies on a more fundamental quantizer, SimQ that exploits the fact that any point inside the unit $\ell_1$ ball of $\Real^n$ can be represented as a convex combination of at most $2n$ points.
%
SimQ then samples one of these points.
%
In this sense, it is similar to vqSGD.
%
For Euclidean spaces, i.e. $p = 2$ (considered in our work), \cite[Thm. 3.1]{mayekar_SimQ+} shows that SimQ+ uses $\sim 3n$ bits to achieve the optimal suboptimality gap scaling of $O\Paren{\frac{1}{\sqrt{T}}}$ for general convex and non-smooth functions.
%
The computational complexity of SimQ+ is $O(n^2)$.
%
This is because SimQ+ requires $k$ repetitions of SimQ.
%
SimQ involves sampling from an $(n+1)$-dimensional probability distribution \cite[Alg. 2]{mayekar_SimQ+}.
%
Efficient methods for sampling from high-dimensional distributions (like Walker Alias method) entails $O(n)$ complexity.
%
Moreover for $\ell_2$-spaces, SimQ+ requires $k = n^{2/p} = n$ repetitions of SimQ.
%
This results in a total complexity of $O(n^2)$.
%
Consequently, for $R \approx 3$, the performance of our proposed quantizers is the same as SimQ+.
%
However, for any other value of $R$, SimQ+ is no longer optimal as noted in \cite[\S 7]{mayekar_SimQ+}.
%
In order to make SimQ+ operate under \textit{any arbitrary precision constraint} $R \neq 3$, requires us to modify it by incorporating heuristics like uniform sampling without replacement.
%
But doing so causes the SimQ+ quantizer design to lose optimality with undesirable logarithmic factors appearing between upper and lower bounds.
%
Although the gap is practically insignificant, our quantizer design using democratic embeddings resolves this open question for $\ell_2$ spaces.
%
This is especially true in the sublinear bit-budget regime, when $R < 1$.\\

\textbf{RATQ \cite{mayekar_2020}}. The \textbf{Rotated Adaptive Tetra-iterated Quantizer} (RATQ) involves Hadamard transforms and appropriately chosen quantizer dynamic ranges to achieve a performance which is $O(\log\log(\cdot))$ factor close to optimal for stochastic optimization, with a computational complexity of $O(n\log n)$.
%
They consider the optimization of general convex and non-smooth objective functions with access with a stochastic subgradient oracle, which corresponds to Thm. \textcolor{blue}{$3$} of our work.
%
Compared to this, our proposed scheme \textbf{DSC} (with random orthonormal frame) achieves a performance to within a constant factor of optimality with a complexity of $O(n^2)$, and \textbf{NDSC} achieves a performance $O(\log (\cdot))$ factor away from optimality while entailing a $O(n\log n)$ complexity.
%
Hence from a high-level perspective, RATQ has a performance somewhere in between \textbf{DSC} and \textbf{NDSC} for stochastic optimization.
%
Note that since RATQ makes use of standard dithering \cite[Alg. 5]{mayekar_2020}, it is an inherently stochastic quantizer.
%
Consequently, it is not straightforward to see how RATQ could be applied to settings which require deterministic quantizers such as the optimization of smooth, strongly-convex objective functions in our work.
%
For analyzing deterministic settings, we require a uniform upper bound on $\norm{\Qsf(\xv) - \xv}_2$ (i.e. without any expectation) for any quantizer $\Qsf(\cdot)$ and input $\xv$.
%
\textbf{DSC} and \textbf{NDSC} can be designed for deterministic quantizers as well (cf. Eq. \textcolor{blue}{$(11)$}), which does nearest-neighbor quantization.

\section{A note on randomized frame constructions}
\label{note_randomized_frame_constructions}

The definition of a frame as given in Def. \textcolor{blue}{$1$} of the main paper is quite general \cite{frames_book}.
%
A frame $\Sv \in \Real^{n \times N}$ is the same as a wide matrix with full row rank, i.e. $n$.
%
More specifically, a full rank matrix $\Sv$ with singular values $0 < \sigma_{min} \triangleq \sigma_0, \sigma_1, \ldots, \sigma_n \triangleq \sigma_{max} < \infty$ satisfies
%
\begin{equation*}
    \sigma_{min}^2\lVert \yv \rVert_2^2 \leq \lVert \Sv^{\top}\yv \rVert_2^2 \leq \sigma_{max}^2\lVert \yv\rVert_2^2,
\end{equation*}
%
for any $\yv \in \Real^n$.
%
That is, $A = \sigma_{min}^2$ and $B = \sigma_{max}^2$ are the upper and lower frame bounds respectively.
%
The concept of frames is a very widely used concept in linear algebra and not specific to our work.
%
For an inner-product space, a frame is a set of (possibly) linearly-dependent vectors that act as a generalization of a basis.
%
Because it consists of linearly dependent vectors, the representation of any vector in terms of a frame is \textit{not unique}.
%
Consequently, frames provide a robust way of representing a vector due to this redundancy.

Although $n \times N$ frames are abundant, what is more interesting to our work are frames that satisfy the uncertainty principle as in Def. \textcolor{blue}{$2$} in the main paper.
%
Our randomized frame constructions satisfy the uncertainty principle for \textit{appropriate values of parameters} $(\eta, \delta)$ such that the upper Kashin constant $K_u = \frac{\eta}{(A - \eta\sqrt{B})\sqrt{\delta}}$ (as defined in Lemma \textcolor{blue}{$1$}) is a small constant.

In what follows, we show a full rank matrix that satisfies all the properties of a frame, but does not satisfy uncertainty principle with desirable values of the parameters $(\eta, \delta)$.
%
As a consequence, the upper Kashin constant $K_u$ is not small and the inequality $(6)$ is Lemma $1$ is vacuous.
%
Consider the matrix:
%
\begin{equation*}
    \Sv = 
    \begin{bmatrix}
    1 & 0 & 0 & \ldots & 0 & 0\\
    0 & 1 & 0 & \ldots & 0 & 0\\
    \vdots & \vdots & \vdots & \vdots & \vdots & \vdots\\
    0 & 0 & \ldots & 1 & \ldots & 0
    \end{bmatrix},
\end{equation*} 
i.e. the first $n$ rows of the identity matrix $\Iv_N$.
%
The definition of uncertainty principle (Def. $2$) requires $\delta \in (0,1)$.
%
For any $\delta \in \Paren{\frac{n}{N}, 1}$, consider $\xv = [\hspace{-2mm}\underbrace{1 \ldots 1}_{\delta N \text{ indices}} 0 \ldots  0]^{\top}$.
%
Clearly, $\lVert \xv \rVert_2 = \sqrt{\delta N}$, and $\Sv\xv = [1 \hspace{2mm} 1 \hspace{2mm} \ldots \hspace{2mm} 1]^{\top} \in \Real^{n \times 1}$, so that $\lVert \Sv \xv \rVert_2 = \sqrt{n}$.
%
Since we require the parameter $\eta$ to be such that $\lVert \Sv \xv \rVert_2 \leq \eta \lVert \xv \rVert_2$, or $\sqrt{n} \leq \eta \sqrt{\delta N}$, this implies $\eta \geq \sqrt{\frac{n}{N}}$.
%
Also, $\eta \leq 1$ holds trivially from the fact that $\Sv$ is Parseval, i.e. all the singular values of $\Sv$ are $1$, meaning $\eta \in \left[\sqrt{\frac{n}{N}}, 1\right]$.
%
Furthermore, $\Sv$ is Parseval also implies, $\sigma_{min} = \sigma_{max} = 1$, i.e. we have $A = B = 1$.
%
When $N = n$, it follows that,
%
\begin{equation}
    A - \eta\sqrt{B} = 0 \implies K_u = \infty,
\end{equation}
%
which undesirably gives a vacuous upper bound in eq. $(6)$ of Lemma $1$ since $(A - \eta\sqrt{B})$ appears in the denominator of the expression for $K_u$.
%
When $N > n$, we have $N \geq n + 1 \implies \eta \in \left[\sqrt{\frac{n}{n+1}}, 1\right]$.
%
For very large $n$, $\eta$ gets squeezed to $1$, implying that as $n \to \infty$, $\eta \to 1 \implies K_u \to \infty$, which again yields a non-informative upper bound for high-dimensional problems.
%
Similarly, for any $\delta \in \Paren{0, \frac{n}{N}}$, if we let $\xv = [\hspace{-2mm}\underbrace{1 \ldots 1}_{\delta N \text{ indices}} 0 \ldots  0]^{\top}$, we have $\Sv\xv = [\hspace{-1mm}\underbrace{1 \ldots 1}_{\delta N \text{ indices}} 0 \ldots 0]^{\top} \implies \lVert \Sv\xv \rVert_2 = \sqrt{\delta N}$.
%
Since the definition of uncertainty principle requires $\lVert \Sv\xv \rVert_2 \leq \eta \lVert \xv \rVert_2 \implies \sqrt{\delta N} \leq \eta \sqrt{\delta N} \implies \eta \geq 1$.
%
Following the same arguments as before, $\eta = 1$ and $K_u = \infty$ giving us a vacuous upper bound for any value of $\delta \in (0,1)$.\\

Constructions for $\Sv \in \Real^{n \times N}$ like the ones described above are full rank (and also Parseval i.e. satisfy $\Sv^{\top}\Sv = \Iv_n$) and are valid frames according to Def. 1.
%
However, they do not serve any useful purpose for our proposed source coding schemes since they do not satisfy the uncertainty principle (Def. 2) for appropriate values of $(\eta, \delta)$ so as to yield a small value of the upper Kashin constant $K_u$.

\section{A note on why we set \texorpdfstring{$\lambda \triangleq \frac{N}{n}$}{lambdaexpression} as close to \texorpdfstring{$1$}{one} as possible}
\label{sec:choice_of_lambda}

From a high-level perspective, there is a tradeoff associated with the choice of $N$.
%
As a consequence of Lemmas \textcolor{blue}{$1$}, \textcolor{blue}{$2$} \& \textcolor{blue}{$3$}, a larger value of $N$ implies a smaller $\ell_{\infty}$-norm of the vector input to the quantizer.
%
However, since we are now quantizing a vector in $\Real^N$ instead of a vector in $\Real^n$ and $N \geq n$, the effective number of bits per dimension reduces from $R$ to $\frac{nR}{N}$.
%
The quantization error per dimension is proportional to $\frac{\lVert \cdot\rVert_{\infty}}{2^{nR/N}}$, where the numerator, $\lVert \cdot \rVert_{\infty}$ is the $\ell_{\infty}$-norm of the quantizer input which decreases with increasing $N$, and the denominator also decreases with an increasing $N$, implying an optimal choice of $N$ that minimizes this ratio.
%
We do a numerical study of this tradeoff.

The main idea of (near) democratic source coding is to compute a (near) democratic embedding and subsequently quantizing the embedding so as to reduce the $\ell_{\infty}$-norm of the input to a uniform scalar quantizer.
%
For instance, for \textit{near-democratic embeddings}, Lemmas \textcolor{blue}{$2$} \& \textcolor{blue}{$3$} tell us that larger the value of $N$, the smaller the $\ell_{\infty}$-norm should be, since $\sqrt{\frac{\log(2N)}{N}}$ is a decreasing function of $N$ and $\lVert \xv_{nd}\rVert_{\infty} \leq 2\sqrt{\frac{\log(2N)}{N}}\lVert \yv \rVert_2$ holds true with a probability exceeding $1 - \frac{1}{2N}$, which is an increasing function of $N$ -- both of which are desirable consequences of increasing $N$.
%
We verify this via numerical simulations in Figs. \ref{fig:linf_norm_gaussian3} and \ref{fig:linf_norm_student-t}.
%
For these plots, the vector $\yv \in \Real^n$ is drawn from $\text{Gaussian}^3$ and $\text{Student-t}$ distributions respectively, and the plots are averaged over $50$ realizations.
%
These heavy-tailed distributions ensure that the different coordinates of $\yv$ have widely varying magnitudes.
%
The embedding matrix is $\Sv = \Pv \Dv \Hv \in \Real^{n \times N}$ and we plot the $\ell_{\infty}$-norm of the near-democratic embedding $\xv_{nd}$ vs. the embedding dimension $N$.
%
The original dimension $n = 30$ is kept fixed and we choose $N$ to be powers of $2$, i.e. $N = 2^5, 2^6, \ldots, 2^{15}$ for which Hadamard matrix $\Hv \in \Real^{N \times N}$ can be constructed.
%
As we discussed already, $\lVert \xv_{nd} \rVert_{\infty}$ decreases with increasing $N$.
%
The original $\ell_{\infty}$-norm (averaged over $50$ realizations) is $\lVert  \yv \rVert_{\infty} = 13.57$ for Fig. \ref{fig:linf_norm_gaussian3} and $\lVert \yv \rVert_{\infty} = 52.96$ for Fig. \ref{fig:linf_norm_student-t}.\\

\begin{figure}[h!]
    \begin{subfigure}[h!]{.5\textwidth}
    \centering
    \includegraphics[width=\linewidth]{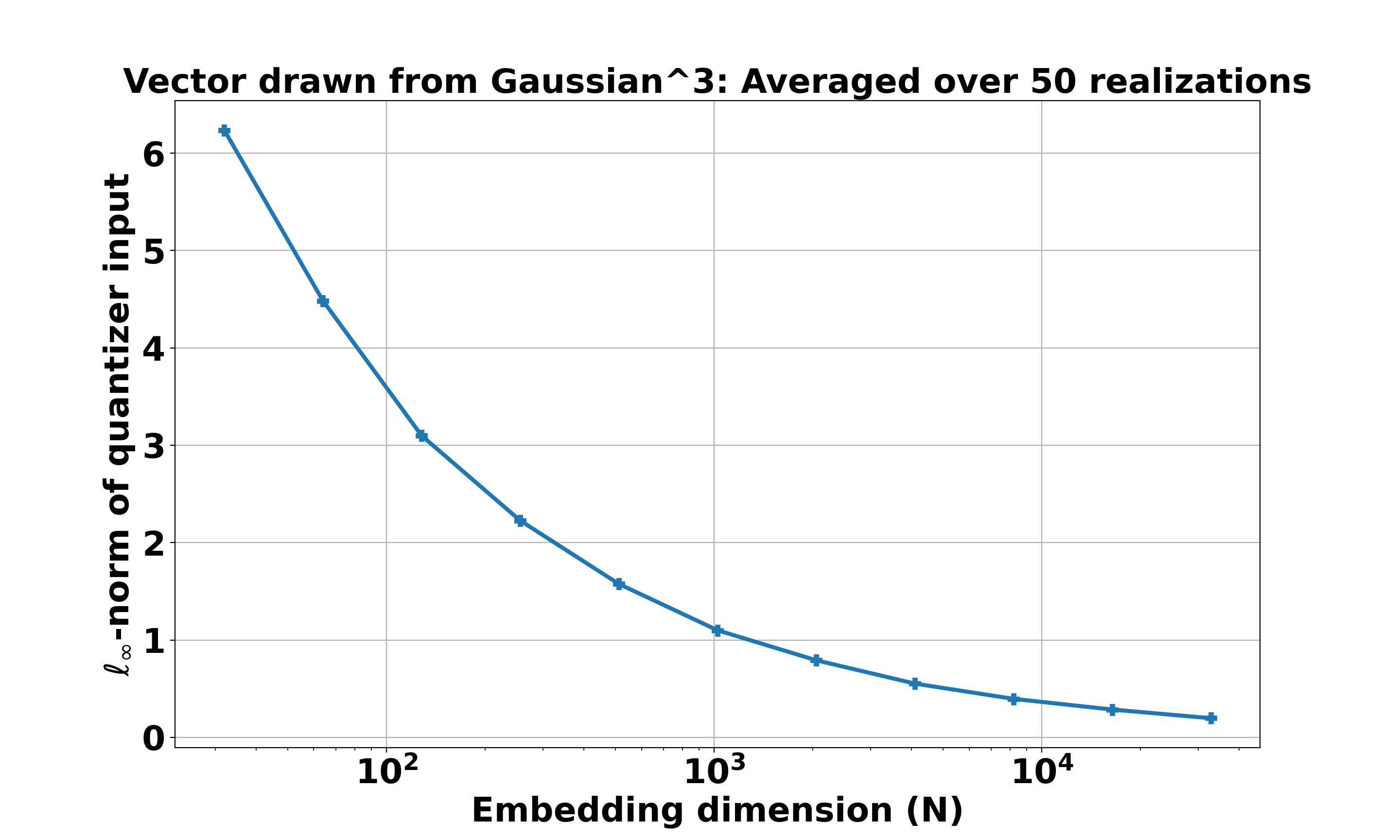}
    \caption{$\lVert \xv_{nd} \rVert_{\infty}$ for $\yv \sim \text{Gaussian}^3$}
    \label{fig:linf_norm_gaussian3}
  \end{subfigure}
  \hfill
  \begin{subfigure}[h!]{.5\textwidth}
    \centering
    \includegraphics[width=\linewidth]{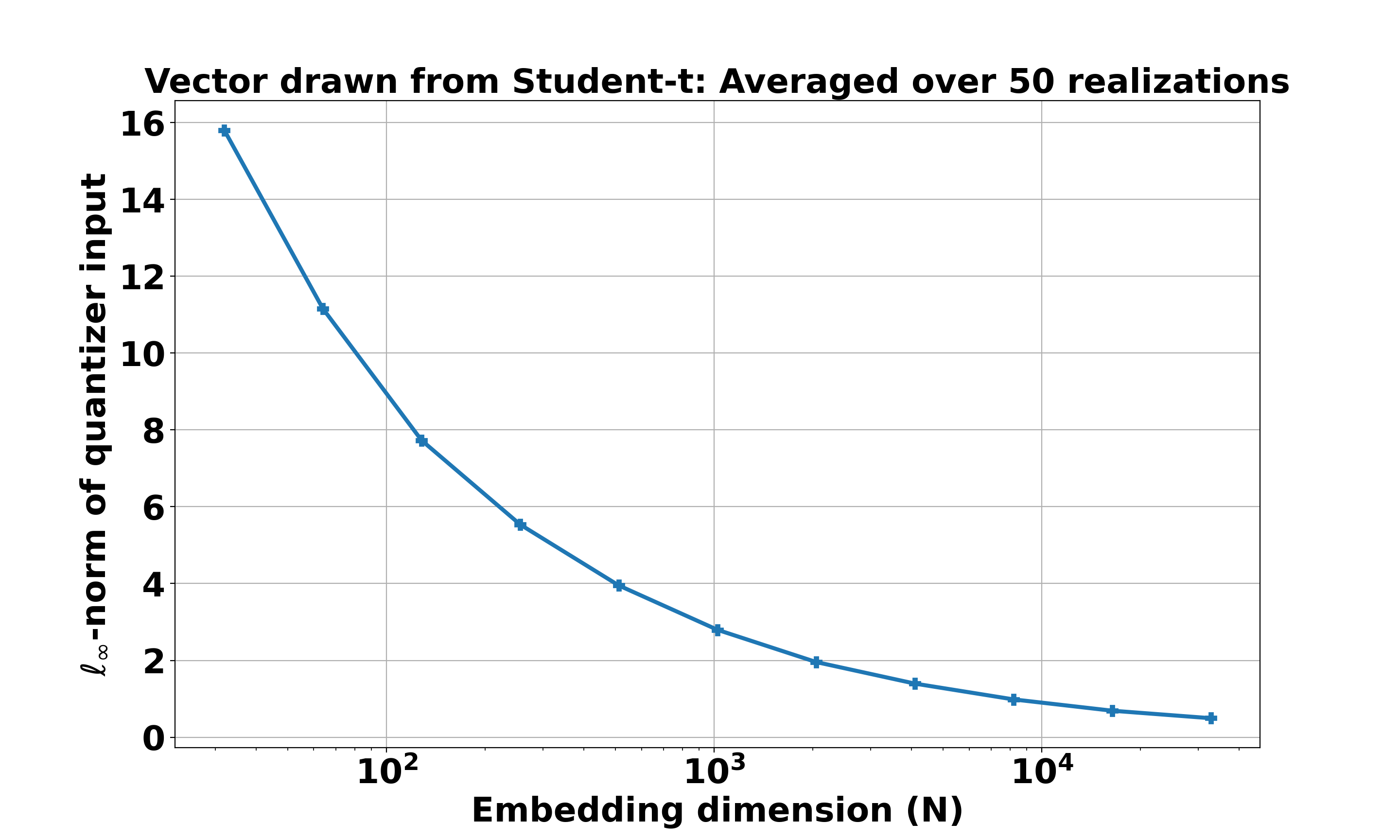}
    \caption{$\lVert \xv_{nd} \rVert_{\infty}$ for $\yv \sim \text{Student-t}$}
    \label{fig:linf_norm_student-t}
  \end{subfigure}
\end{figure}

There is no tradeoff so far.
%
However, when we take into account the pre-specified quantization budget of $R$-bits per dimension, we have a total budget of $nR$ bits to quantize a vector in $\Real^n$.
%
When we choose to quantize the embedding in $\Real^N$ instead of the original vector using a uniform scalar quantizer, we effectively have $\frac{nR}{N}$-bits per dimension.
%
For a fixed dynamic range of a scalar quantizer, a lesser number of bits per dimension implies a poorer resolution while quantizing each scalar coordinate.
%
Moreover, now we are quantizing $N$ coordinates instead of $n$ coordinates, and the error in each of these $N$ coordinates adds up to contribute to the total $\ell_2$-quantization error.
%
This is an undesirable consequence of increasing $N$.
%
So much so, this undesirable consequence counteracts the desirable effect of decreasing $\ell_{\infty}$-norm due to increasing $N$.
%
Since the quantization error per dimension is proportional to $\lVert \text{Quantizer input}\rVert_{\infty}/2^{nR/N}$, we also plot $\lVert \xv_{nd} \rVert_{\infty}\sqrt{N}$ in Figs. \ref{fig:linf_norm_sqrt_N_gaussian3} and \ref{fig:linf_norm_sqrt_N_student-t} which shows that it is more or less constant (with increasing $N$).
%
That is, the desirable and undesirable effects of increasing $N$ counteract each other.

\begin{figure}[h!]
    \begin{subfigure}[h!]{.5\textwidth}
    \centering
    \includegraphics[width=\linewidth]{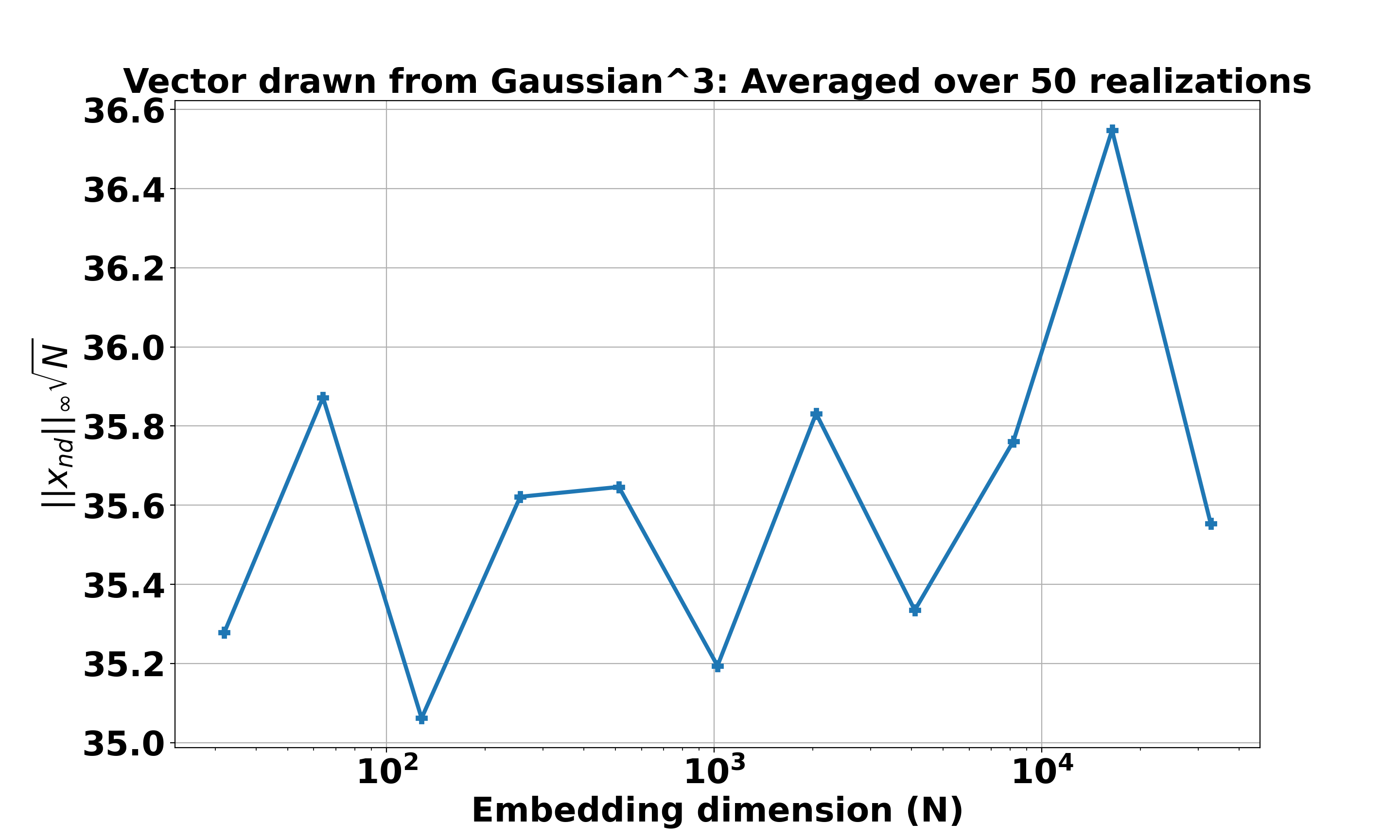}
    \caption{$\lVert \xv_{nd} \rVert_{\infty}\sqrt{N}$ for $\yv \sim \text{Gaussian}^3$}
    \label{fig:linf_norm_sqrt_N_gaussian3}
  \end{subfigure}
  \hfill
  \begin{subfigure}[h!]{.5\textwidth}
    \centering
    \includegraphics[width=\linewidth]{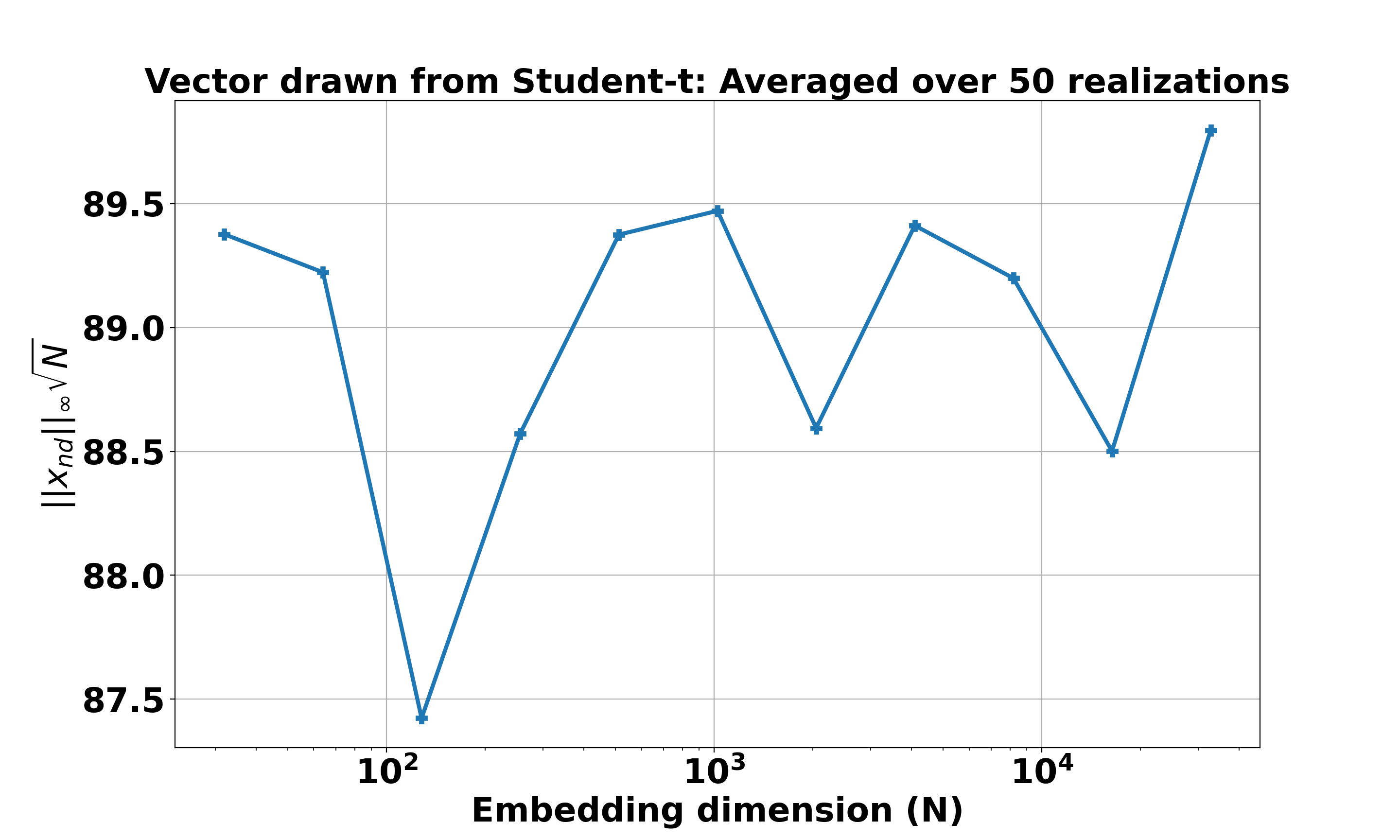}
    \caption{$\lVert \xv_{nd} \rVert_{\infty}\sqrt{N}$ for $\yv \sim \text{Student-t}$}
    \label{fig:linf_norm_sqrt_N_student-t}
  \end{subfigure}
\end{figure}

Theoretically, this is made precise in Thm. $1$, i.e.,
\begin{equation}
\label{eq:quantization_error_upper_bound_NDSC}
    \lVert \yv - \mathsf{Q}_{nd}(\yv) \rVert_2 \leq 2^{\Paren{2 - \frac{nR}{N}}}\sqrt{\log(2N)}\lVert \yv \rVert_2,
\end{equation}
which says that we should expect a mild logarithmic increase.
%
Since the upper bound in \eqref{eq:quantization_error_upper_bound_NDSC} is an increasing function of $N$, this implies we want $N$ to be as small as possible while ensuring $N \geq n$.
%
So we set $N$ to be equal to the nearest power of 2 greater than $n$, so that $\Sv =\Pv \Dv \Hv$ can be constructed.
%
For Haar random orthonormal frames, we can choose $N = n$.

For the case of \textit{democratic embeddings}, the situation is slightly different.
%
When we use random orthonormal frames, from Thm. \textcolor{blue}{$2$} and eq. \eqref{eq:Ku_random_orthonormal} in Sec. \ref{subsec:random_orthonormal_matrices} of the Supplementary material, we can see that the upper Kashin constant $K_u$ satisfies,
%
\begin{equation}
\label{eq:Ku_random_orthonormal_matrix}
    K_u = \frac{4}{(\lambda - 1)^2\sqrt{c}}\Paren{\frac{5 - \lambda}{4}}\sqrt{\log\Paren{\frac{1}{\lambda - 1}}},
\end{equation}
%
for some constant $c$, with probability exceeding $1 - 2\exp(-c(\lambda - 1)^2n)$.
%
Here, $\lambda \triangleq N/n$ and for a fixed value of $n$, an optimal choice of $N$ is equivalent to an optimal choice of $\lambda$.
%
Note that \eqref{eq:Ku_random_orthonormal_matrix} requires $1 < \lambda < 5$ to retain the theoretical guarantees.
%
However, we can numerically compute the democratic embedding corresponding to any value of $N$ by solving the $\ell_{\infty}$-norm minimization problem, i.e. \textcolor{blue}{$(5)$} in the main paper.
%
We do this and plot $\lVert \xv_d \rVert_{\infty}$ of the democratic embedding vs. the embedding dimension $N$.
%
Figs. \ref{fig:linf_norm_sqrt_N_gaussian3_DSC} and \ref{fig:linf_norm_sqrt_N_student-t_DSC} correspond to $\text{Gaussian}^3$ and $\text{Student-t}$ distributions respectively.
%
Since (given a fixed $n$) random orthonormal frames $n \times N$ can be constructed for any value of $N$, we let $\lambda = [1.0, 1.1, 1.2, \ldots, 2.0, 2.5, 3.0, 4.0, 5.0, 10, 20, 50]$ and let $N = \lceil \lambda n \rceil$.
%
\begin{figure}[h!]
    \begin{subfigure}[h!]{.5\textwidth}
    \centering
    \includegraphics[width=\linewidth]{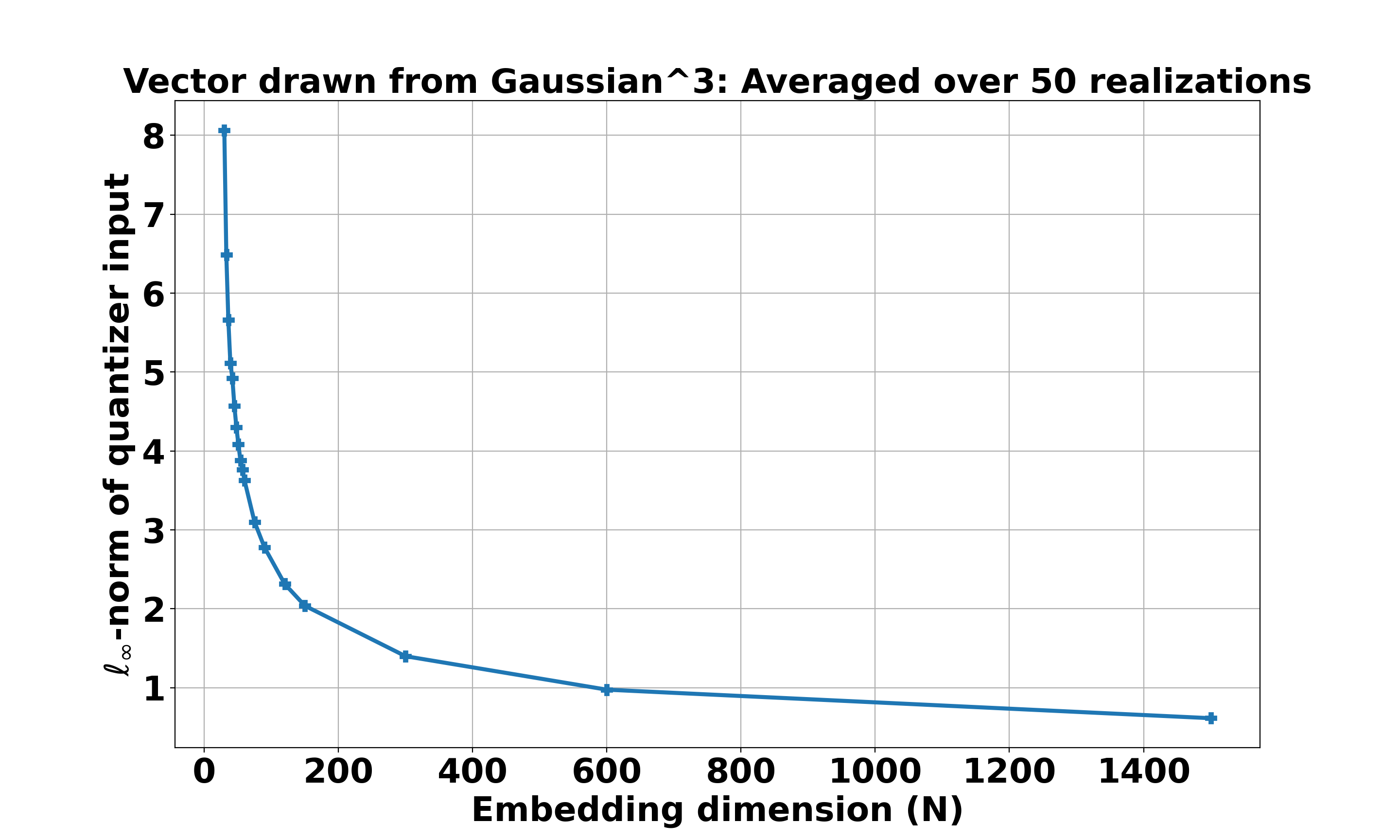}
    \caption{$\lVert \xv_{d} \rVert_{\infty}$ for $\yv \sim \text{Gaussian}^3$}
    \label{fig:linf_norm_gaussian3_DSC}
  \end{subfigure}
  \hfill
  \begin{subfigure}[h!]{.5\textwidth}
    \centering
    \includegraphics[width=\linewidth]{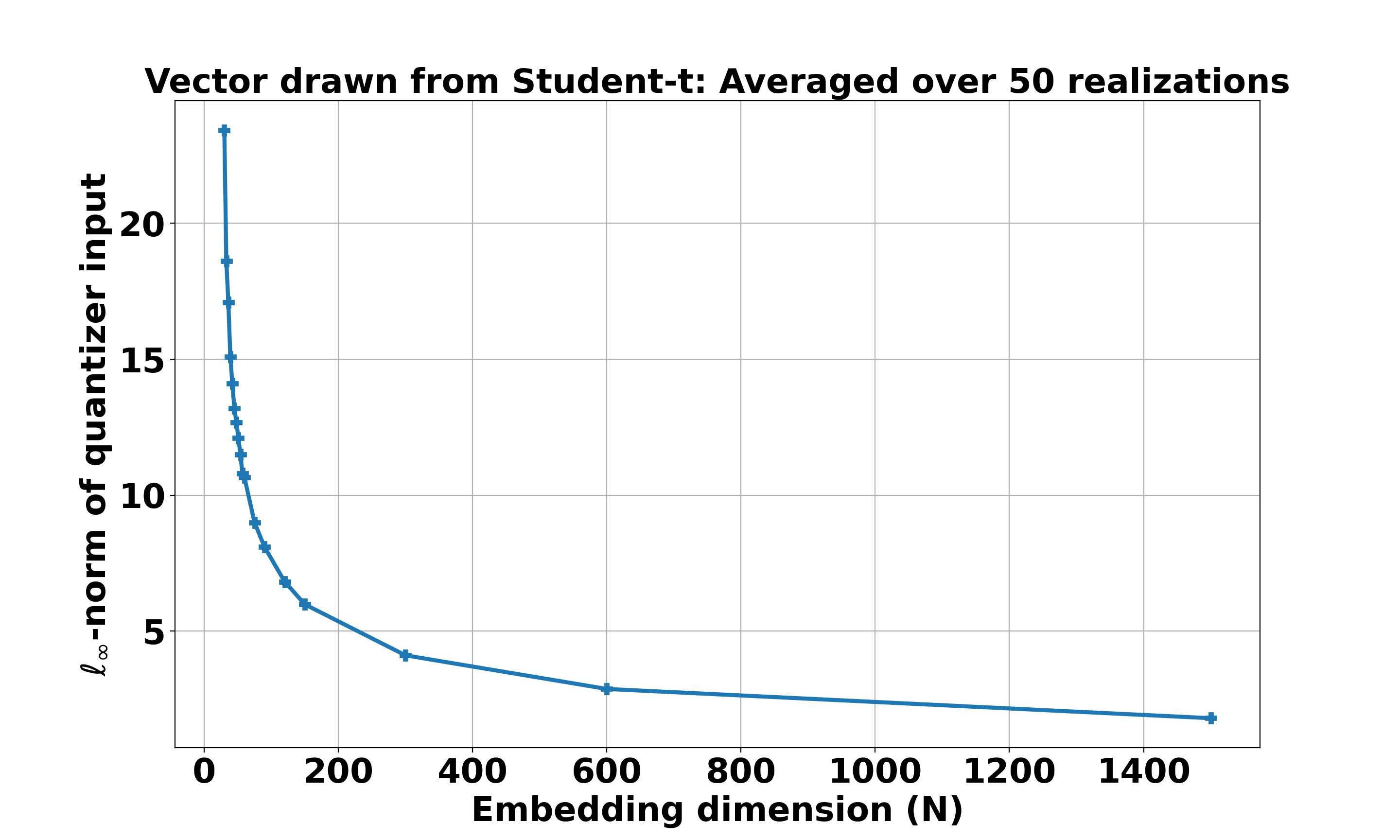}
    \caption{$\lVert \xv_{d} \rVert_{\infty}$ for $\yv \sim \text{Student-t}$}
    \label{fig:linf_norm_student-t_DSC}
  \end{subfigure}
\end{figure}
%
Similar as before, we also plot $\lVert \xv_d \rVert_{\infty}\sqrt{N}$ vs. $N$ in Figs. \ref{fig:linf_norm_sqrt_N_gaussian3_DSC} and \ref{fig:linf_norm_sqrt_N_student-t_DSC}.
%
Surprisingly, for the case of democratic embeddings, these plots still decrease as $N$ increases.
%
\begin{figure}[h!]
    \begin{subfigure}[h!]{.5\textwidth}
    \centering
    \includegraphics[width=\linewidth]{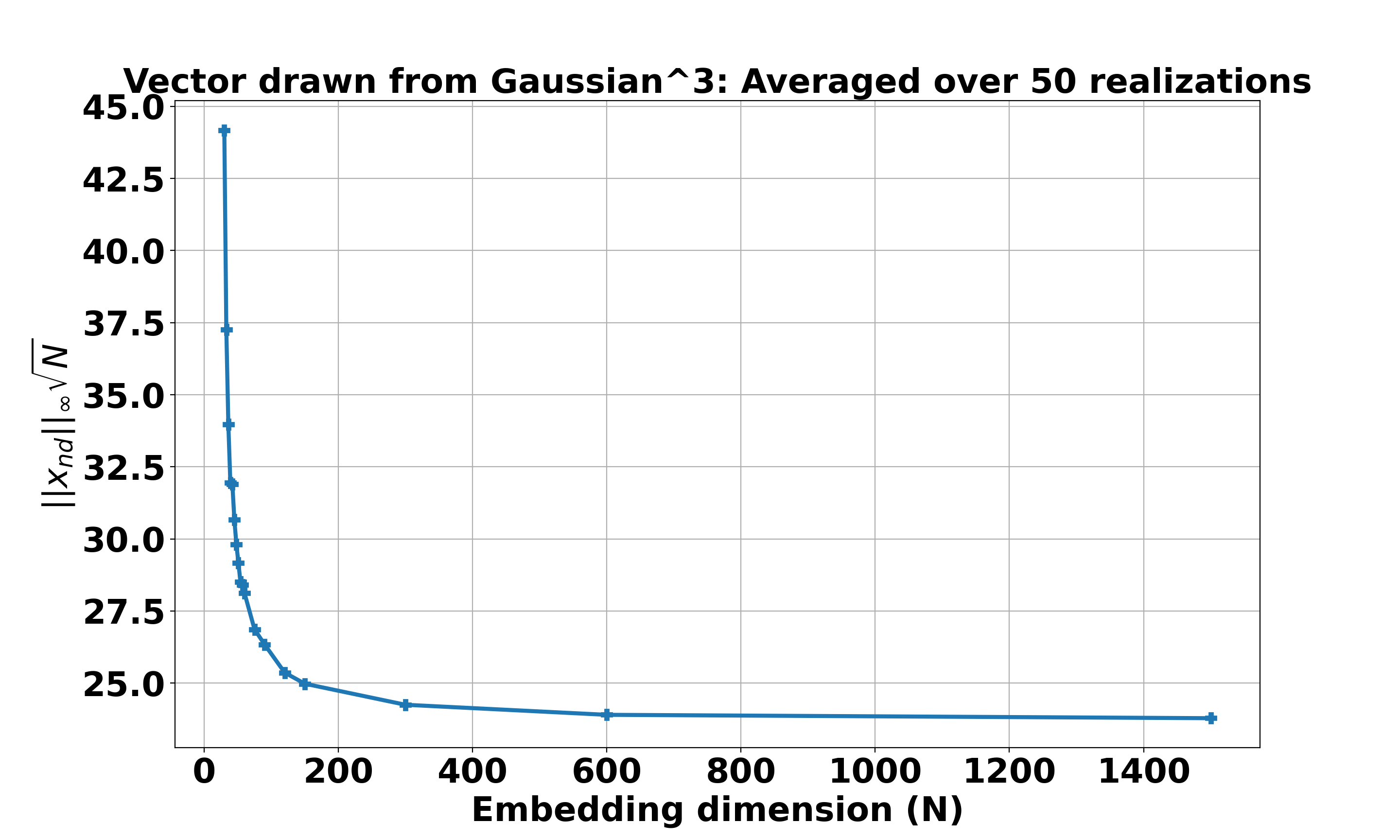}
    \caption{$\lVert \xv_{nd} \rVert_{\infty}\sqrt{N}$ for $\yv \sim \text{Gaussian}^3$}
    \label{fig:linf_norm_sqrt_N_gaussian3_DSC}
  \end{subfigure}
  \hfill
  \begin{subfigure}[h!]{.5\textwidth}
    \centering
    \includegraphics[width=\linewidth]{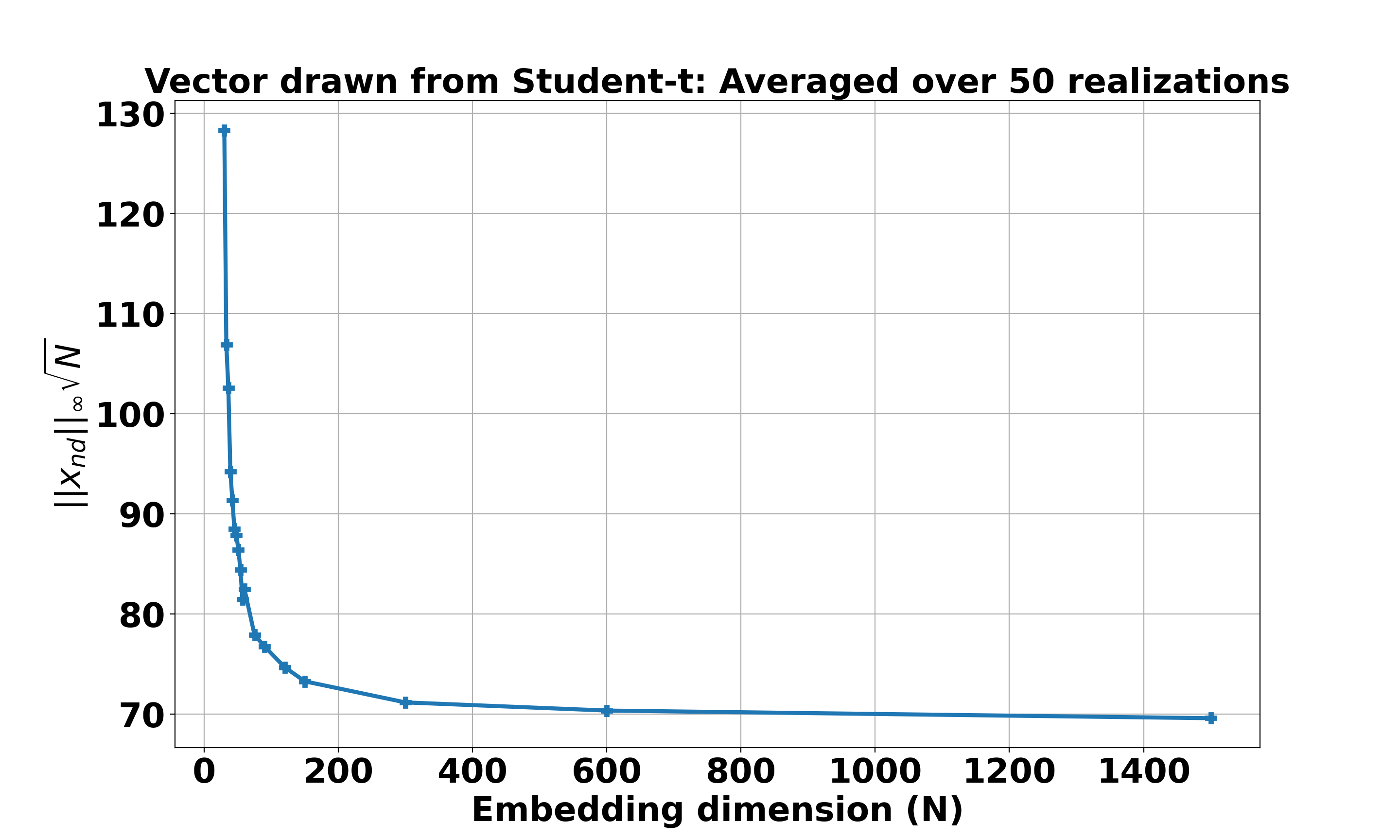}
    \caption{$\lVert \xv_{nd} \rVert_{\infty}\sqrt{N}$ for $\yv \sim \text{Student-t}$}
    \label{fig:linf_norm_sqrt_N_student-t_DSC}
  \end{subfigure}
\end{figure}
%
However, this still does not definitively establish whether increasing $N$ is beneficial for decreasing the quantization error (unlike near-democratic embeddings).
%
In Figs. \ref{fig:quant_error_DSC_gaussian3} and \ref{fig:quant_error_DSC_student-t}, we plot the $\ell_2$-quantization error vs. the embedding dimension $N$.
%
We note that the quantization error increases with $N$, implying that the undesirable effect fewer bits per dimension available to quantize a vector in $\Real^N$) overwhelms the reduction in the dynamic range due to decrease in $\ell_{\infty}$-norm of the embedding.
%
More precisely, the DSC quantization error guarantee in Thm. \textcolor{blue}{$1$} tells us,
%
\[
\lVert \yv - \mathsf{Q}_{d}(\yv) \rVert_2 \leq 2^{\Paren{1 - \frac{nR}{N}}}K_u\lVert \yv \rVert_2
\]
%
A loose argument comparing upper bounds tells us that $2^{\Paren{1 - \frac{nR}{N}}}$ increases as $N$ increases, whereas $K_u$ decreases when you increase $N$, i.e. increase $\lambda$, but the undesirable effect of the former dominates.
%
\begin{figure}[h!]
    \begin{subfigure}[h!]{.5\textwidth}
    \centering
    \includegraphics[width=\linewidth]{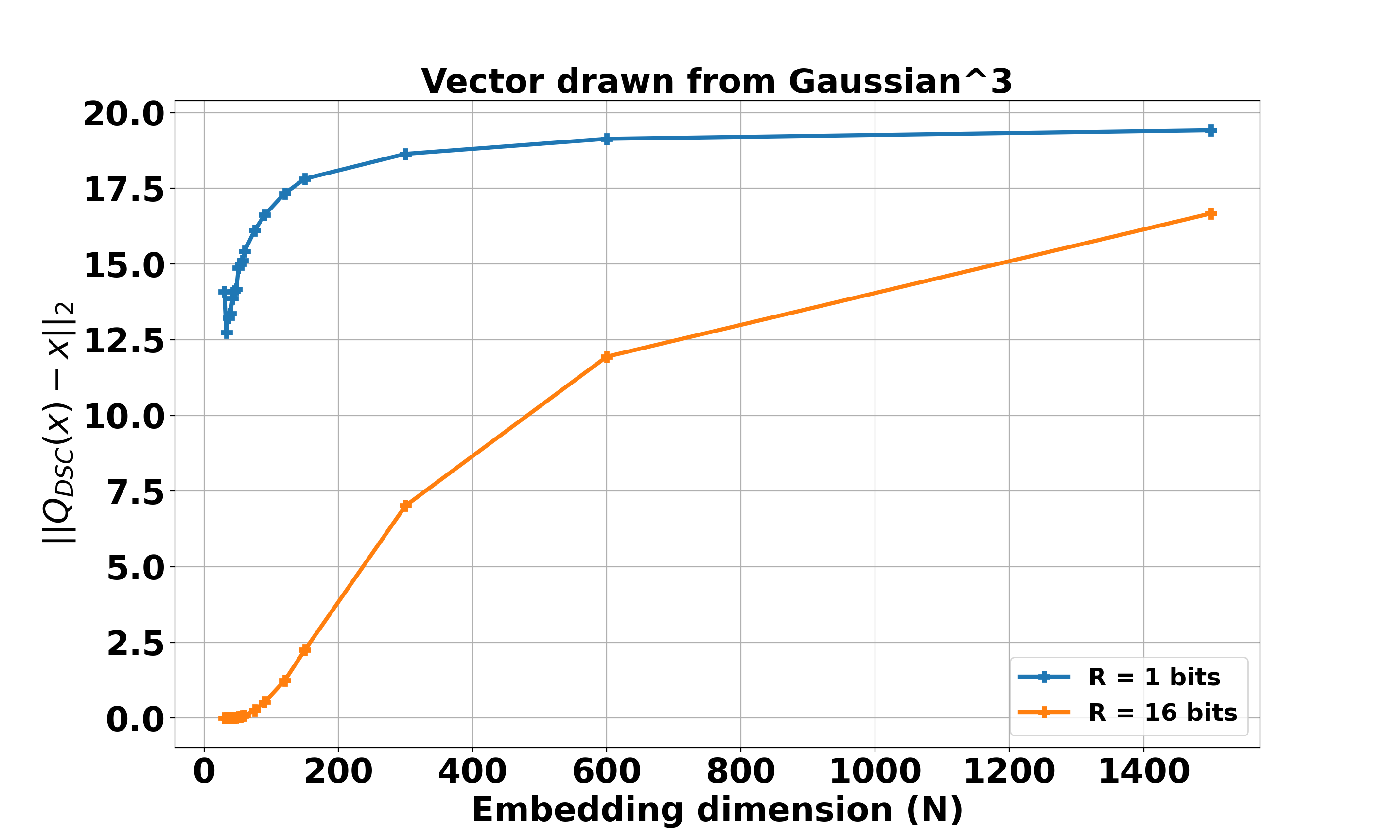}
    \caption{$\lVert \yv - \mathsf{Q}_d(\yv) \rVert_2$ for $\yv \sim \text{Gaussian}^3$}
    \label{fig:quant_error_DSC_gaussian3}
  \end{subfigure}
  \hfill
  \begin{subfigure}[h!]{.5\textwidth}
    \centering
    \includegraphics[width=\linewidth]{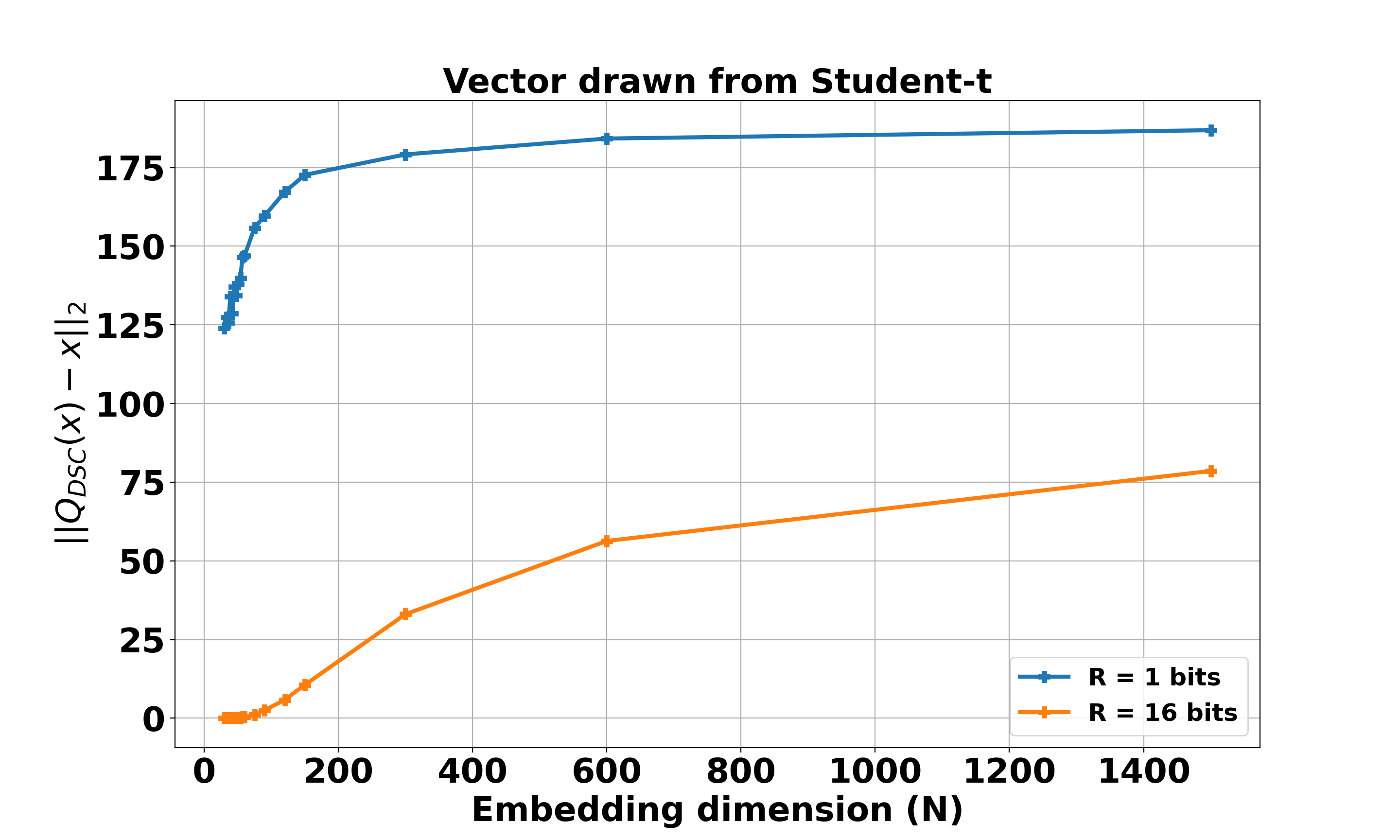}
    \caption{$\lVert \yv - \mathsf{Q}_d(\yv) \rVert_2$ for $\yv \sim \text{Student-t}$}
    \label{fig:quant_error_DSC_student-t}
  \end{subfigure}
\end{figure}
%
Figs. \ref{fig:quant_error_DSC_gaussian3} and \ref{fig:quant_error_DSC_student-t} indicate that for random orthonormal frames,  it is beneficial to choose $N$ as close to $n$ as possible, i.e. $\lambda = \frac{N}{n}$ should be close to $1$.
%
The situation might be different for different classes of frames.
%
In any case, $\lambda$ can be considered to be a constant.
%
$N$ is NOT specified by the optimization problem.
%
In other words, $K_u = K_u(\lambda)$ can be treated as a constant that depends on the aspect ratio $\lambda = \frac{N}{n}$ (our choice) and is independent of the problem dimension $n$.

\bibliographystyle{IEEEtran}
\bibliography{refs}